\patchcmd{\algorithmic}{\addtolength{\ALC@tlm}{\leftmargin} }{\addtolength{\ALC@tlm}{\leftmargin}}{}{}
\newtheorem{definition}{Definition}
\newtheorem{lemma}{Lemma}
\newtheorem{theorem}{Theorem}
\newtheorem{assumption}{Assumption}
\title{FINE Samples for Learning with Noisy Labels}
\author{%
  Taehyeon Kim \thanks{Equal contribution} \\
  KAIST AI\\
  KAIST\\
  Daejeon, South Korea \\
  \texttt{potter32@kaist.ac.kr} \\
  % examples of more authors
   \And
  Jongwoo Ko $^*$ \\
  KAIST AI\\
  KAIST\\
  Daejeon, South Korea \\
  \texttt{jongwoo.ko@kaist.ac.kr} \\
   \AND
  Sangwook Cho \\
  KAIST AI\\
  KAIST\\
  Daejeon, South Korea \\
  \texttt{sangwookcho@kaist.ac.kr} \\
   \And
  Jinhwan Choi \\
  KAIST AI\\
  KAIST\\
  Daejeon, South Korea \\
  \texttt{jinhwanchoi@kaist.ac.kr} \\
   \And
  Se-Young Yun\\
  KAIST AI\\
  KAIST\\
  Daejeon, South Korea \\
  \texttt{yunseyoung@kaist.ac.kr} \\
}
\def\eqref#1{equation~\ref{#1}}
\def\1{\bm{1}}
\def\va{{\bm{a}}}
\def\vu{{\bm{u}}}
\def\vv{{\bm{v}}}
\def\vw{{\bm{w}}}
\def\vx{{\bm{x}}}
\def\vy{{\bm{y}}}
\def\vz{{\bm{z}}}
\DeclareMathAlphabet{\mathsfit}{\encodingdefault}{\sfdefault}{m}{sl}
\SetMathAlphabet{\mathsfit}{bold}{\encodingdefault}{\sfdefault}{bx}{n}
\begin{document}
\maketitle
\begin{abstract}
\vspace{-10pt}
Modern deep neural networks (DNNs) become weak when the datasets contain noisy (incorrect) class labels. Robust techniques in the presence of noisy labels can be categorized into two types: developing \textit{noise-robust} functions or using \textit{noise-cleansing} methods by detecting the noisy data. Recently, \textit{noise-cleansing} methods have been considered as the most competitive noisy-label learning algorithms. Despite their success, their noisy label detectors are often based on heuristics more than a theory, requiring a robust classifier to predict the noisy data with loss values. In this paper, we propose a novel detector for filtering label noise. Unlike most existing methods, we focus on each data point's latent representation dynamics and measure the alignment between the latent distribution and each representation using the eigen decomposition of the data gram matrix. Our framework, coined as \textit{filtering noisy instances via their eigenvectors}\,(FINE), provides a robust detector using derivative-free simple methods with theoretical guarantees. Under our framework, we propose three applications of the FINE: sample-selection approach, semi-supervised learning (SSL) approach, and collaboration with \textit{noise-robust} loss functions. Experimental results show that the proposed methods consistently outperform corresponding baselines for all three applications on various benchmark datasets\,\footnote[1]{Code available at \url{https://github.com/Kthyeon/FINE\_official}}.
%Modern deep neural networks (DNNs) become frail when the datasets contain noisy (incorrect) class labels. Robust techniques in the presence of noisy labels can be broken into two folds: developing \textit{noise-robust} functions or using \textit{noise-cleansing} methods by detecting the noisy data. Recently, such \textit{noise-cleansing} methods become the most competitive noisy-label learning algorithms: (1) selecting a subset of clean data and training a neural network with them and (2) using the semi-supervised learning approaches with a labeled set formed by clean data and an unlabelled set with noisy data. Despite their robustness, still their noisy data detector is based on the heuristics that lack theoretical evidences. In this paper, we propose a novel detector for filtering label noise. Unlike most existing methods, we focus on the dynamics of each data's latent representation and measure the alignment between the latent distribution and each representation by using the eigenvalue decomposition of data covariance matrix. Our framework, coined the \textit{filtering noisy instances via their eigenvectors}\,(FINE), provides a high quality detector with derivative-free simple methods having theoretical guarantees. Under our framework, we propose three applications of the FINE; sample-selection, semi-supervised approach, and training-filtering-training framework for robust-loss functions, respectively. Our experimental results show that the proposed methods consistently outperform corresponding baselines for all three applications on various benchmark datasets.
\end{abstract}

\vspace{-5pt}
\vspace{-15pt}
\section{Introduction}\label{sec1}
\vspace{-5pt}
% noisy sample이 문제이다. 
Deep neural networks (DNNs) have achieved remarkable success in numerous tasks as the amount of accessible data has dramatically increased\,\cite{krizhevsky2012imagenet, he2016deep}. On the other hand, accumulated datasets are typically labeled by a human, a labor-intensive job or through web crawling\,\cite{yu2018learning} so that they may be easily corrupted\,(\textit{label noise}) in real-world situations. Recent studies have shown that deep neural networks have the capacity to memorize essentially any labeling of the data\,\cite{zhang2016understanding}. Even a small amount of such noisy data can hinder the generalization of DNNs owing to their strong memorization of noisy labels\, \cite{zhang2016understanding, liu2020early}. Hence, it becomes crucial to train DNNs that are robust to corrupted labels. As label noise problems may appear anywhere, such robustness increases reliability in many applications such as the e-commerce market\,\cite{corbiere2017leveraging}, medical fields\,\cite{xu2019l_dmi}, on-device AI\,\cite{yang2020robust}, and autonomous driving systems\,\cite{Feng_2021}.

% Despite its superiority, deep neural networks facilitate the generalization only if provided with sufficient amount of data and clean (correct) label, and recent studies show that the noisy (corrupted) labels can severely degrade the performance of deep neural network with strong memorization ability\,\cite{zhang2016understanding,liu2020early}. However, noisy labels commonly appear in real-world data since labeling large-scale datasets is costly and error-prone process\,\cite{yu2018learning}. Hence, it is important to improve the robustness of training procedure of deep neural networks to corrupted label in many applications. 

% Deep neural networks have achieved remarkable success in numerous tasks as the number of accessible data dramatically increases \cite{krizhevsky2012imagenet, he2016deep}. Despite of remarkable successes, requiring large-scale dataset with clean label annotations for proper training. However, these datasets have been labeled through a labor-intensive job with human or web crawling \cite{yu2018learning} so that they may be easily corrupted (\textit{label noise}). It is well-known that even a small number of noisy samples can severely cause performance degradation issues \cite{zhang2016understanding}.

% noisy sample을 해결하는 방법 중에 하나는 separate the clean label to noisy ones이다.

To improve the robustness against noisy data, the methods for learning with noisy labels (LNL) have been evolving in two main directions\,\cite{huang2019o2u}: (1) designing noise-robust objective functions or regularizations and (2) detecting and cleansing the noisy data. In general, the former \textit{noise-robust} direction uses explicit regularization techniques\,\cite{cao2020heteroskedastic, zhang2020distilling, zhang2017mixup} or robust loss functions\,\cite{van2015learning, ghosh2017robust, wang2019symmetric, zhang2018generalized}, but their performance is far from state-of-the-art\,\cite{zhang2016understanding, li2020gradient} on datasets with severe noise rates. Recently, researchers have designed \textit{noise-cleansing} algorithms focused on segregating the clean data\,(i.e., samples with uncorrupted labels) from the corrupted data\,\cite{jiang2018mentornet, han2018coteaching, yu2019does, huang2019o2u, mirzasoleiman2020coresets, wu2020topological}. One of the popular criteria for the segregation process is the loss value between the prediction of the noisy classifier and its noisy label, where it is generally assumed that the noisy data have a large loss\,\cite{jiang2018mentornet, han2018coteaching, yu2019does, huang2019o2u} or the magnitude of the gradient during training~\cite{zhang2018generalized, wang2019symmetric}. 
% However, loss based methods pose several critical problems. First, Co-teaching family \cite{han2018coteaching}, \cite{yu2019does} assumes that the noise rate is known. This is a strong assumption when dealing with noisy label problems since, we need to compare every data sample and its corresponding label for its true value. Second, \cite{huang2019o2u} always treats the same portion from original data as predicted clean sample regardless of noise rate. This approach significantly lowers the recall value of selected subset. Lastly, loss based methods are heuristic approach while lacking theoretical supports for any noise patterns. Despite its high success, these heuristics lack theoretical supports for any noise patterns. 
% Some works attempt to provide the theoretical evidence for such \textit{noise-cleansing} methods. Wu et al.\,\cite{wu2020topological} consider the topological properties of the data in latent space during training and show the optimal bound for selecting noisy samples. Mirzasoleiman et al.\,\cite{mirzasoleiman2020coresets} select subsets of clean data that allow the neural network to effectively learn from the training data by using its weight's gradient direction. 
However, these methods may still be biased by the corrupted linear classifier towards label noise because their criterion\,(e.g., loss values or weight gradient) uses the posterior information of such a linear classifier\,\cite{lee2019robust}. Maennel et al.\,\cite{maennel2020neural} analytically showed that the principal components of the weights of a neural network align with the randomly labeled data; this phenomenon can yield more negative effects on the classifier as the number of randomly labeled classes increases. Recently, Wu et al.\,\cite{wu2020topological} used an inherent geometric structure induced by nearest neighbors\,(NN) in latent space and filtered out isolated data in such topology, and its quality was sensitive to its hyperparameters regarding NN clustering in the presence of severe noise rates.

\begin{figure}[t]
    \begin{subfigure}[b]{0.54\textwidth}
    \centering
    \includegraphics[width=\linewidth]{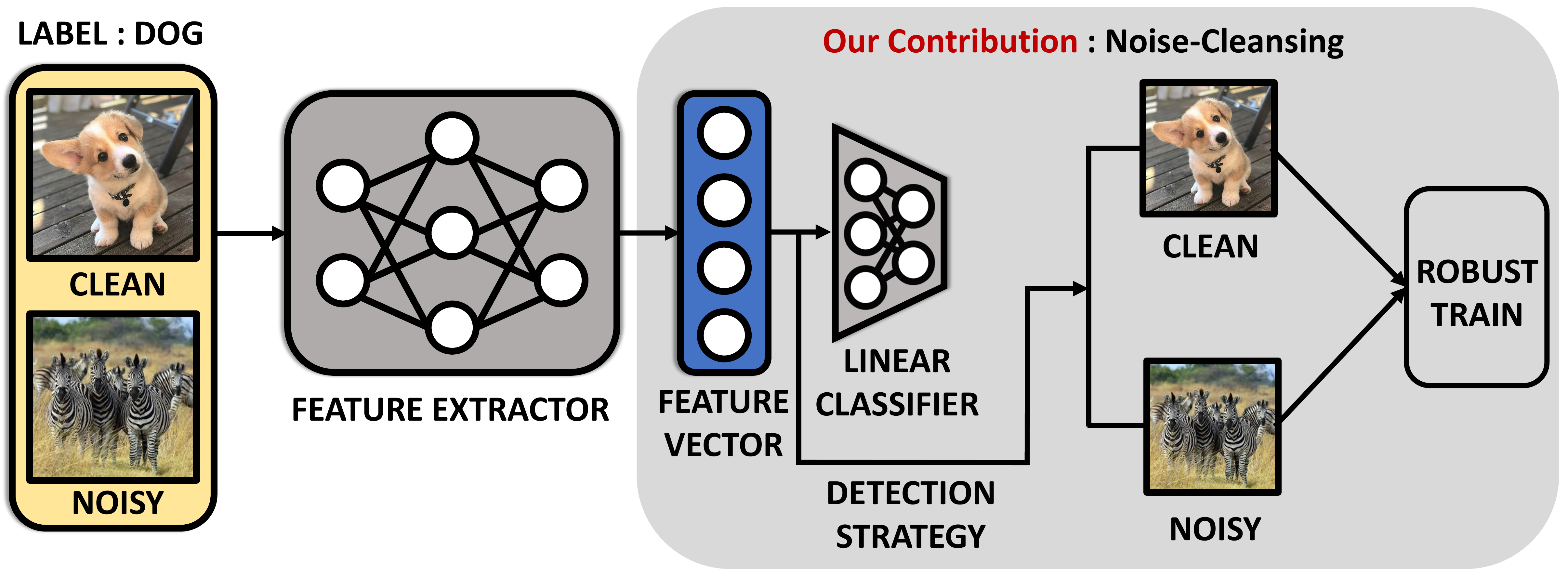}
    \caption{Noise-Cleansing-based Approach}
    \label{fig:noise_cleansing}
    \end{subfigure}%
    \hfill
    \begin{subfigure}[b]{0.45\textwidth}
    \centering
    \includegraphics[width=\linewidth]{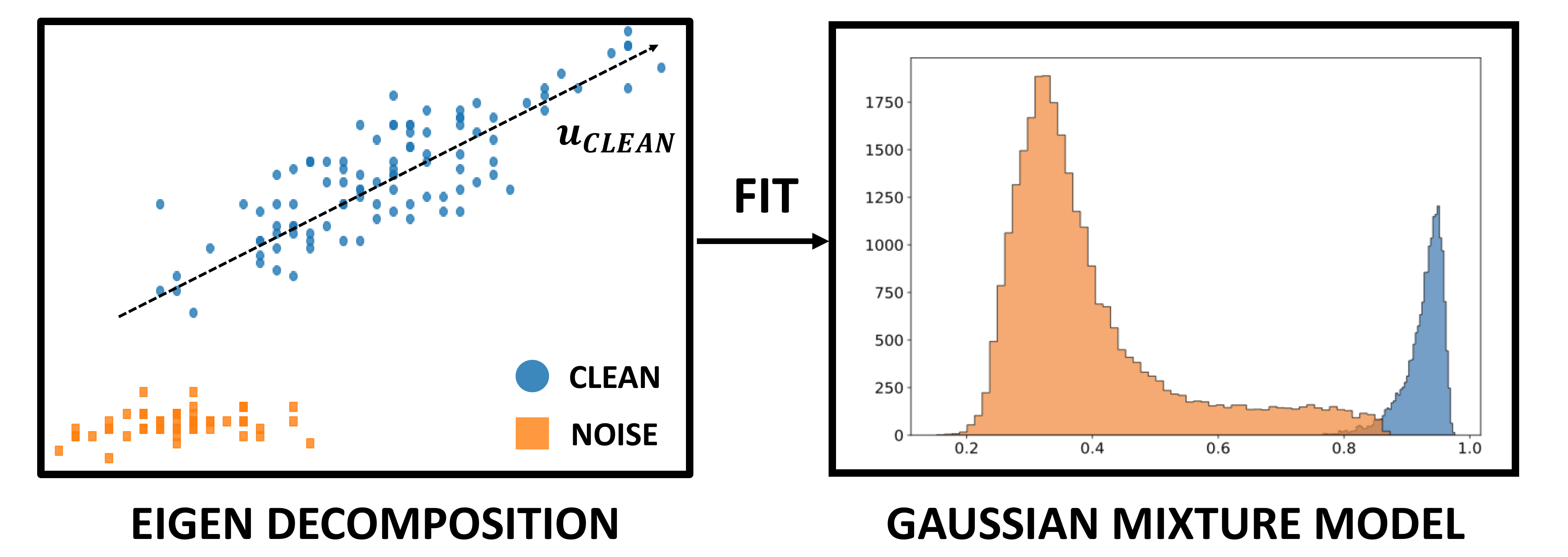}
    \caption{FINE}
    \label{fig:fine}
    \end{subfigure}
    \caption{Illustration of (a) basic concept of this work and (b) proposed detection framework, FINE. Noise-cleansing learning generally separates clean data from the original dataset by using prediction outputs. We propose a novel derivative-free detector based on an unsupervised clustering algorithm on the high-order topological space. FINE measures the alignment of pre-logits\,(i.e., penultimate layer representation vectors) toward the class-representative vector that is extracted through the eigen decomposition of the gram matrix of data representations. \vspace{-15pt} }
    \label{fig:overview}
\end{figure}

% \begin{figure}[h]
% \vspace{-5pt}
% \floatbox[{\capbeside\thisfloatsetup{capbesideposition={right,top},capbesidewidth=5cm}}]{figure}[\FBwidth]
% {\caption{An overview of our detection framework, FINE; it measures the misalignment of pre-logits\,(i.e., penultimate layer representation vectors) towards the class-representative vector which is extracted through the eigenvalue decomposition of the gram matrix of data representations.}\label{fig:fine}}
% {\includegraphics[width=8.5cm]{figure/section1/FINE_METHOD.pdf}}
% \vspace{-10pt}
% \end{figure}

% \begin{wrapfigure}[14]{r}{0.5 \linewidth}% \addtocounter{figure}{-1}
%     \vspace{-15pt}
%     \includegraphics[width=\linewidth]{figure/section1/FINE_METHOD.pdf}
%     \begin{center}
%     \vspace{-20pt}
%     \caption{An overview of our detection framework, FINE; it measures the misalignment of pre-logits\,(i.e., penultimate layer representation vectors) towards the class-representative vector which is extracted through the eigenvalue decomposition of the gram matrix of data representations.\label{fig:fine} }
%     \end{center}
% \end{wrapfigure}
% 여기서부터 이제 motivation을 작성하는 구간.
% noisy dataset에서 detecting subset of clean instances 한다면 우리는 다양하게 사용할 수 있다: Training with clean instances, Label correction, Semi-supervised
% 이 페이퍼에서 우리는 noisy sample을 다루기 위해서 해당 방법을 제안한다.

To mitigate such issues for label noise detectors, we provide a novel yet simple detector framework, \textit{\underline{fi}ltering \underline{n}oisy labels via their \underline{e}igenvectors}\,(\textsc{FINE}) with theoretical guarantees to provide a high-quality splitting of clean and corrupted examples (without the need to estimate noise rates). Instead of using the neural network's linear classifier, FINE utilizes the principal components of latent representations made by eigen decomposition which is one of the most widely used unsupervised learning algorithms and separates clean data and noisy data by these components\,(\autoref{fig:noise_cleansing}). To motivate our approach, as \autoref{fig:fine} shows, we find that the clean data\,(blue points) are mainly aligned on the principal component\,(black dotted line), whereas the noisy data\,(orange points) are not; thus, the dataset is well clustered with the alignment of representations toward the principal component by fitting them into Gaussian mixture models\,(GMM). We apply our framework to various \textit{LNL} methods: the sample selection approach, a semi-supervised learning\,(SSL) approach, and collaboration with noise-robust loss functions. The key contributions of this work are summarized as follows:

\begin{itemize}
    \item We propose a novel framework, termed \textsc{FINE}\,(\textit{\underline{fi}ltering \underline{n}oisy labels via their \underline{e}igenvectors}), for detecting clean instances from noisy datasets. \textsc{FINE} makes robust decision boundary for the high-order topological information of data in latent space by using eigen decomposition of their gram matrix.
    
    % \item \textcolor{blue}{We theoretically demonstrate that using eigenvectors of feature vectors for noise sample selection can guarantee a lower bound of purity and recall. We further explain why and how to use eigenvectors during sample selection theoretically (conceptually?).}
    
    % \item We demonstrate that FINE framework allows the trade-off between the amount of noisy data and meaningful decision boundary made by eigenvectors in latent space and give the probabilistic lower bound for the purity and abundancy of our framework\,(\autoref{fig:fine}). 
    % Eigenspace가 중요한 역할을 한다.
    % Eigenspace 

    \item We provide provable evidence that FINE allows a meaningful decision boundary made by eigenvectors in latent space. We support our theoretical analysis with various experimental results regarding the characteristics of the principal components extracted by our FINE detector.
    
    % We also compute the probabilistic lower bounds for rate of purity\,(i.e., the collected data have a high chance to be uncorrupted)\,\cite{wu2020topological} and abundancy\,(i.e., the algorithm can collect a majority of the clean data)\,\cite{wu2020topological} of our framework\,(\autoref{fig:fine}).
    
    \item We develop a simple sample-selection method by replacing the existing detector method with FINE. We empirically validate that a sample-selection learning with FINE provides consistently superior detection quality and higher test accuracy than other existing alternative methods such as the Co-teaching family\,\cite{han2018coteaching, yu2019does}, TopoFilter\,\cite{wu2020topological}, and CRUST\,\cite{mirzasoleiman2020coresets}.
    
    \item We experimentally show that our detection framework can be applied in various ways to existing \textit{LNL} methods and validate that ours consistently improves the generalization in the presence of noisy data: sample-selection approach \cite{han2018coteaching, yu2019does}, SSL approach \cite{Li2020DivideMix}, and collaboration with noise-robust loss functions \cite{zhang2018generalized, wang2019symmetric, liu2020early}.
\end{itemize}

\paragraph{Organization.} The remainder of this paper is organized as follows. In Section \ref{sec2}, we discuss the recent literature on LNL solutions and meaningful detectors. In Section \ref{sec3}, we address our motivation for creating a noisy label detector with theoretical insights and provide our main method, filtering the noisy labels via their eigenvectors\,(FINE). In Section \ref{sec4}, we present the experimental results. Finally, Section \ref{sec5} concludes the paper.

% 우리는 clean data와 noisy data를 잘 구분해내는 분류기 framework를 제안한다. 이 framework는 clean과 noisy data의 hidden vector의 cohesion을 기반으로 eigenspace 위에서 decision boundary로 자른다. 우리는 우리의 분류기의 error rate (purity)에 대한 upper bound (lower bound)를 theoretically guarantee하였다.
% 우리는 우리의 분류기(FINE)와 기존의 works [CRUST (Jacobian Matrix), Topo-filter (KNN), DivideMix (Loss magnitude)] 를 실험적으로 다양한 statistics에서 비교했다. XXX measurement에서 consistently 우월했다.
% 우리는 우리의 분류기 framework는 기존의 robust-loss method들에 다양하게 적용될 수 있는 것을 실험적으로 확인했고, 일관적으로 성능 향상을 가져오는 것을 확인했다. : Sample Selection Approach, Semi-supervised Approach, Conjunction with Robust Loss.
\vspace{-10pt}
\section{Related Works}\label{sec2}
\vspace{-10pt}
Zhang et al.\,\cite{zhang2016understanding} empirically showed that any convolutional neural networks trained using stochastic gradient methods easily fit a random labeling of the training data. To tackle this issue, numerous works have examined the classification task with noisy labels. We do not consider the works that assumed the availability of small subsets of training data with clean labels\,\cite{hendrycks2018using, ren2018learning, veit2017learning, Zhang_2020_CVPR, bahri2020deep}.

\vspace{-10pt}
\paragraph{Noise-Cleansing-based Approaches.}
Noise-cleansing methods have evolved following the improvement of noisy detectors. Han et al.\,\cite{han2018coteaching} suggested a noisy detection approach, named co-teaching, that utilizes two networks, extracts subsets of instances with small losses from each network, and trains each network with subsets of instances filtered by another network. Yu et al.\,\cite{yu2019does} combined a disagreement training procedure with co-teaching, which only selects instances predicted differently by two networks. Huang et al.\,\cite{huang2019o2u} provided a simple noise-cleansing framework, training-filtering-training; the empirical efficacy was improved by first finding label errors, then training the model only on data predicted as clean. Recently, new noisy detectors with theoretical support have been developed. Wu et al.\,\cite{wu2020topological} proposed a method called TopoFilter that filters noisy data by utilizing the k-nearest neighborhood algorithm and Euclidean distance between pre-logits. Mirzasoleiman et al.\,\cite{mirzasoleiman2020coresets} introduced an algorithm that selects subsets of clean instances that provide an approximately low-rank Jacobian matrix and proved that gradient descent applied to the subsets prevents overfitting to noisy labels. Pleiss et al.\,\cite{pleiss2020identifying} proposed an area under margin (AUM) statistic that measures the average difference between the logit values of the assigned class and its highest non-assigned class to divide clean and noisy samples. Cheng et. al\,\cite{cheng2020learning} progressively filtered out corrupted instances using a novel confidence regularization term. The noise-cleansing method was also developed in a semi-supervised learning\,(SSL) manner. Li et al.\,\cite{Li2020DivideMix} modeled the per-sample loss distribution and divide it into a labeled set with clean samples and an unlabeled set with noisy samples, and they leverage the noisy samples through the well-known SSL technique MixMatch\,\cite{berthelot2019mixmatch}.

% Liu et al.\,\cite{liu2020early} found that the network learns to predict the clean instances in the early learning phase correctly.

% Xia et al.\,\cite{xia2021robust} presented a framework that divides the model parameters into two part: which parameters tend to fit data with clean labels and noisy labels. 

\vspace{-5pt}

\paragraph{Noise-Robust Models.}
% early learning regularization, robust-loss function, robust architecture, 
Noise-robust models have been studied in the following directions: robust-loss functions, regularizations, and strategies. First, for robust-loss functions, Ghosh et al.\,\cite{ghosh2017robust} showed that the mean absolute error (MAE) might be robust against noisy labels. Zhang \& Sabuncu et al.\,\cite{zhang2018generalized} argued that MAE performed poorly with DNNs and proposed a GCE loss function, which can be seen as a generalization of MAE and cross-entropy (CE). Wang et al.\,\cite{wang2019symmetric} introduced the reverse version of the cross-entropy term\,(RCE) and suggested that the SCE loss function is a weighted sum of the CE and RCE. Some studies have stated that the early-stopped model can prevent the memorization phenomenon for noisy labels\,\cite{arpit2017closer, zhang2016understanding} and theoretically analyzed it\,\cite{li2020gradient}. Based on this hypothesis, Liu et al.\,\cite{liu2020early} proposed an early-learning regularization (ELR) loss function to prohibit memorizing noisy data by leveraging the semi-supervised learning techniques. Xia et al.\,\cite{xia2021robust} clarified which neural network parameters cause memorization and proposed a robust training strategy for these parameters. Efforts have been made to develop regularizations on the prediction level by smoothing the one-hot vector\,\cite{lukasik2020does}, using linear interpolation between data instances\,\cite{zhang2017mixup}, and distilling the rescaled prediction of other models\,\cite{kim2021comparing}. However, these works have limitations in terms of performance as the noise rate of the dataset increases.
\vspace{-5pt}

\paragraph{Dataset Resampling.} Label-noise detection may be a category of data resampling which is a common technique in the machine learning community that extracts a ``\textit{helpful}'' dataset from the distribution of the original dataset to remove the dataset bias. In class-imbalance tasks, numerous studies have conducted over-sampling of minority classes \cite{chawla2002smote, ando2017deep} or undersampling the majority classes \cite{buda2018systematic} to balance the amount of data per class. Li \& Vasconcelos et al.\,\cite{li2019repair} proposed a resampling procedure to reduce the representation bias of the data by learning a weight distribution that favors difficult instances for a given feature representation. Le Bras et al.\,\cite{le2020adversarial} suggested an adversarial filtering-based approach to remove spurious artifacts in a dataset. Analogously, in anomaly detection and out-of-distribution detection problems\,\cite{hendrycks2016baseline, liang2017enhancing, lee2018simple}, the malicious data are usually detected by examining the loss value or negative behavior in the feature representation space. While our research is motivated by such previous works, this paper focuses on the noisy image classification task.

\section{Method}\label{sec3}

In this section, we present our detector framework and the theoretical motivation behind using the detector in high-dimensional classification. To segregate the clean data, we utilize the degree of alignment between the representations and the eigenvector of the representations' gram matrices for all classes, called \textbf{FINE} (\textit{\underline{FI}ltering \underline{N}oisy instnaces via their \underline{E}igenvectors}). Our algorithm is as follows\,(Algorithm \ref{al:FINE}). FINE first creates a gram matrix of the representation in the noisy training dataset for each class and conducts the eigen decomposition for those gram matrices. Then, FINE finds clean and noisy instances using the square of inner product values between the representations and the first eigenvector having the largest eigenvalue. In this manner, we treat the data as clean if aligned onto the first eigenvector, while most of the noisy instances are not. Here, we formally define `\textit{alignment}' and `\textit{alignment clusterability}' in Definition\,\ref{method:def1} and Definition\,\ref{method:def2}, respectively.

\begin{definition} \label{method:def1} (Alignment)
    Let $\vx$ be the data labeled as class $k$ and $\vz$ is the output of feature extractor with input $\vx$, and the gram matrix $\mathbf{\Sigma}_{k}$ of all features labeled as class $k$ in dataset $\mathcal{D}$ (i.e., $\mathbf{\Sigma}_{k} = \sum_{\vz \in \left\{ class=k\right\}} \vz \vz^{\top}$). Then, we estimate the alignment of the data $\vx$ via $\left< \vu_1, \vz\right>^2$ where $\mathbf{u}_1$ is the first column of $\mathbf{U_k}$ from the eigen decomposition of $\mathbf{\Sigma_k}$, i.e., $\mathbf{\Sigma_k}=\mathbf{U_k}\mathbf{\Lambda_k}\mathbf{U_k}^\top$ when the eigenvalues are in descending order.
\end{definition}
% $\frac{|\mathbf{u}_1^{\top} \cdot \vz|}{||\mathbf{u}_1^||\cdot||\vz||}$

\begin{definition} \label{method:def2} (Alignment Clusterability) For all features labeled as class k in dataset $\mathcal{D}$, let fit a Gaussian Mixture Model (GMM) on their alignment\,(Definition \ref{method:def1}) distribution to divide current samples into a clean set and a noisy set; the set having larger mean value is treated as a clean set, and another one is a noisy set. Then, we say a dataset $\mathcal{D}$ satisfies alignment clusterability if the representation $\vz$ labeled as the same true class belongs to the clean set.
\end{definition}

As an empirical evaluation, the quality of our detector for noisy data is measured with the \textit{\textbf{F-score}}, a widely used criterion in noisy label detection, anomaly detection and out-of-distribution detection \cite{cheng2020learning,hendrycks2016baseline,liang2017enhancing,lee2018simple}. We treat the selected clean samples as the positive class and the noisy samples as negative class.
% The statistical values are functions of the selected clean samples (positive class) and noisy samples (negative class). 
The \textit{\textbf{F-score}} is the harmonic mean of the precision and the recall; the \textit{precision} indicates the fraction of clean samples among all samples that are predicted as clean, and the \textit{recall} indicates the portion of clean samples that are identified correctly.

\begin{algorithm}[t]
    \caption{FINE Algorithm for Sample Selection} 
    \label{al:FINE}
    \begin{algorithmic}[1]
    
    \SetKwInOut{Input}{Input}
    \SetKwInOut{Output}{Output}
    \INPUT: Noisy training data $\mathcal{D}$, feature extractor $g$, number of classes $K$, clean probability threshold $\zeta$, set of FINE scores for class $k$ $\mathcal{F}_{k}$ \\
    \OUTPUT: Collected clean data $\mathcal{C}$
    
    \STATE Initialize $\mathcal{C} \leftarrow \emptyset$, $\hat{\mathcal{D}} \leftarrow \mathcal{D}$, $\boldsymbol{\Sigma}_{k} \leftarrow \mathbf{0}$ for all $k = 1, \ldots, K$ \\
    \textcolor{darkgray}{/* Update the convariance matrices for all classes */}\\
    \FOR{($\vx_{i}, y_{i}$) $\in \mathcal{D}$}
        \STATE $\vz_{i} \leftarrow g(\vx_{i})$ \\
        \STATE Update the gram matrix $\boldsymbol{\Sigma}_{y_{i}} \leftarrow \boldsymbol{\Sigma}_{y_{i}} + \vz_{i} \vz_{i}^{\top}$
    \ENDFOR \\
    \textcolor{darkgray}{/* Generate the principal component with eigen decomposition */}\\
    \FOR{$k = 1, \ldots, K$}
        \STATE $\mathbf{U}_{k}, \boldsymbol{\Lambda}_{k} \leftarrow$ \textsc{eigen Decomposition of} $\boldsymbol{\Sigma}_{k}$ \\
        \STATE $\mathbf{u}_{k} \leftarrow $ \textsc{The First Column of} $\mathbf{U}_{k}$ \\
        % \STATE Compute the FINE scores $\left< \mathbf{u}_{k}, \vz_{i} \right>^{2}$ for corresponding instances $\vx_{i}$ whose observed label $y_{i}$ is $k$ \\
    \ENDFOR \\
    \textcolor{darkgray}{/* Compute the alignment score and get clean subset $\mathcal{C}$ */}\\
    \FOR{($\vx_{i}, y_{i}$) $\in \mathcal{D}$}
            % \IF{$y_{i}$ = k}
            \STATE Compute the FINE score $f_{i} = \left< \mathbf{u}_{y_{i}}, \vz_{i} \right>^{2}$ and $\mathcal{F}_{y_{i}} \leftarrow \mathcal{F}_{y_{i}} \cup \left\{ f_{i} \right\}$
            % \ENDIF
        \ENDFOR \\
        \textcolor{darkgray}{/* Finding the samples whose clean probability is larger than $\zeta$ */} \\
        \STATE $\mathcal{C} \leftarrow \mathcal{C} \cup$ \textsc{GMM} $(\mathcal{F}_{k}, \zeta)$ for all $k = 1, \ldots, K$ \\
    \end{algorithmic}
\end{algorithm}

\begin{wrapfigure}[24]{R}{0.40\linewidth}% \addtocounter{figure}{-1}
    \vspace{-20pt}
    \includegraphics[width=\linewidth]{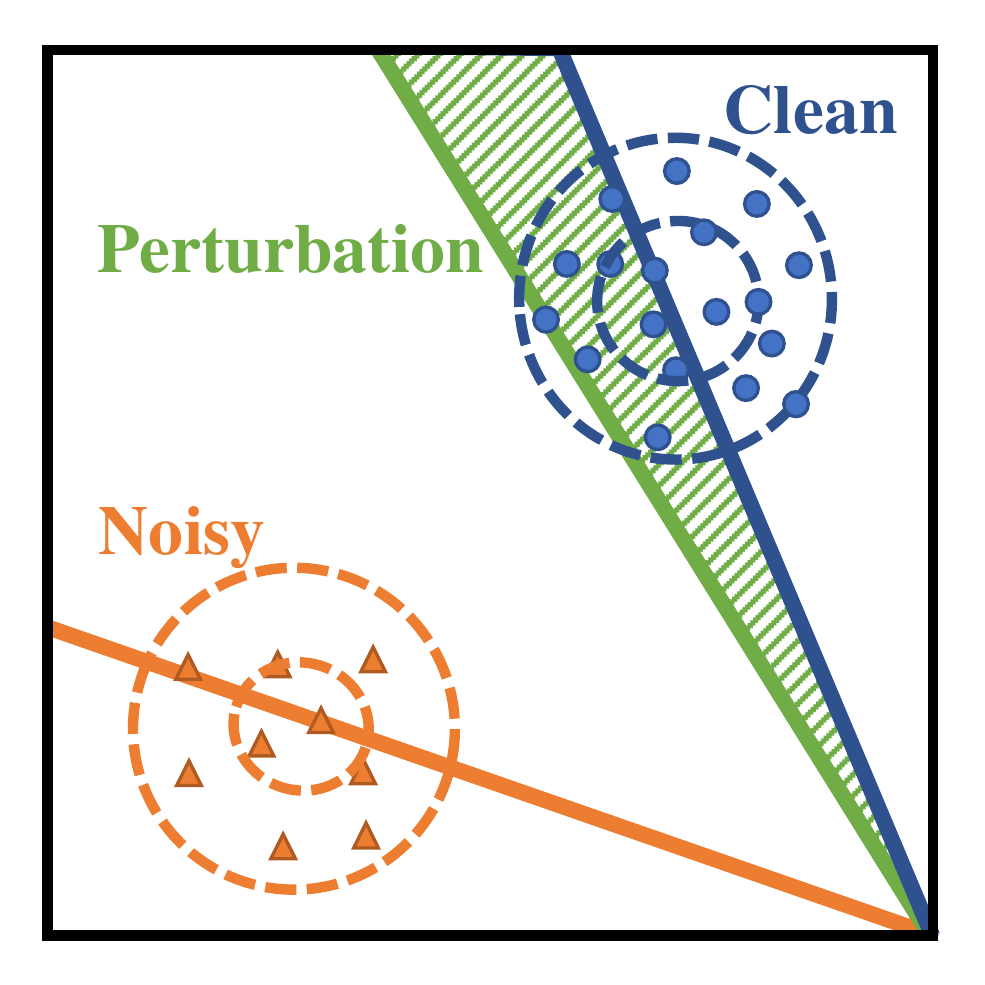}
    \begin{center}
    \vspace{-10pt}
    \caption{Illustration for the problem settings and \autoref{method:thm3}. The perturbation (green shade) is the angle between the first eigenvector of clean instances\,(blue line) and the estimated first eigenvector\,(green line) which is perturbed by that of noisy instances\,(orange line). Note that blue and orange points are clean instances and noisy instances, respectively. \vspace{-10pt}  \label{fig:thm3}}
    \end{center}
\end{wrapfigure}
\vspace{-5pt}

\subsection{Alignment Analysis for Noisy Label Detector}\label{sec3.1}
\vspace{-5pt}
To design a robust label noise filtering framework, we explore the linear nature of the topological space of feature vectors for data resampling techniques and deal with the classifier contamination due to random labels. Recent studies on the distribution of latent representations in DNNs provide insight regarding how correctly the outlier samples can be filtered with the hidden space's geometrical information. For instance, in \cite{lee2018simple, lee2019robust}, the authors proposed frameworks for novelty detection using the topological information of pre-logit based on the Mahalanobis distance, and, in \cite{wu2020topological}, the authors filtered the noisy data based on the Euclidean distance between pre-logits. Maennel et al.\,\cite{maennel2020neural} analytically showed that an alignment between the principal components of network parameters and those of data takes place when training with random labels. This finding points out that random labels can corrupt a classifier, and thus building a robust classifier is required.
% instead of using neural network's classifier. 

Motivated by these works, we aim to design a novel detector using the principal components of latent features to satisfy Definition \ref{method:def2}. However, it is intractable to find the optimal classifier to maximize the separation of \textit{alignment clusterability} because clean data distribution and noisy data distribution are inaccessible. To handle this issue, we attempt to approximate the clean eigenvector to maximize the alignment values of clean data rather than to maximize the separation; the algorithm utilizes the eigenvector of the data for each class\,(\autoref{fig:thm3}). Below, we provide the upper bound for the perturbation toward the clean data's eigenvector under simple problem settings with noisy labels referred to in other studies\,\cite{liu2020early, wu2020topological}.
% We provide the lower bounds for the expected rate of purity and abundancy; how the extracted data are \textit{pure} and how \textit{many} clean data are in selected dataset\,(Theorem \ref{method:thm1}). In Theorem \ref{method:thm2}, we show that the principal component\,(i.e., first eigenvector) is the optimal solution for capturing \textit{alignment clusterability}. Both theorems are significant in ensuring a high-quality clusterability in the presence of label noise. Finally, Theorem \ref{method:thm3} provides the accuracy regarding how the FINE approximates the correct anchor direction well. We leave the convergence results as future work.
% As motivated, we explore the effect of eigenvectors for the topological space of feature vectors. 
% Through the theoretical analysis, we enhance our intuitions on the effectiveness of our sample selection method with eigenvectors. 
% With a clear motivation, we design robust sample selection framework and conduct comprehensive experiments on various noise-label settings. 
% We first introduce notations. 
% 
We first introduce notations. Next, we establish the theoretical evidence that our FINE algorithm approximates the clean data's eigenvectors under some assumptions for its analytic tractability\,(Theorem \ref{method:thm3}). We mainly present the theorem and its interpretation; details of the proofs can be found in the Appendix.

% \vspace{-5pt}
% \begin{wrapfigure}[20]{r}{0.35 \linewidth}% \addtocounter{figure}{-1}
%     \vspace{-15pt}
%     \includegraphics[width=\linewidth]{figure/section3/Concept_Perturbation.pdf}
%     \begin{center}
%     \vspace{-10pt}
%     \caption{Illustration for \autoref{method:thm3} which define the perturbation between estimated first eigenvector and first eigenvector of clean instances. Note that blue points are feature }
%     \end{center}
% \end{wrapfigure}

% \begin{figure}[t]
%     \centering
%     \includegraphics[width=0.3\linewidth]{figure/section3/Concept_Perturbation.pdf}
%     \caption{}
% \end{figure}

\paragraph{Notations.} Consider a binary classification task. Assume that the data points and labels lie in $\mathcal{X} \times \mathcal{Y}$, where the feature space $\mathcal{X} \subset \mathbb{R}^{d}$ and label space $\mathcal{Y} = \left\{ -1, +1 \right\}$. A single data point $\vx$ and its true label $y$ follow a distribution $(\vx, y) \sim P_\mathcal{X \times Y}$. Denote by $\tilde{y}$ the observed label\,(potentially corrupted). Without loss of generality, we focus on the set of data points whose observed label is $\tilde{y} = +1$.

Let $\mathbf{X} \subset \mathcal{X}$ be the finite set of features with clean instances whose true label is $y = +1$. Similarly, let $\tilde{\mathbf{X}} \subset \mathcal{X}$ be the set of noisy instances whose true label is $y = -1$. 
% We define the probability of clean instances and noisy instances as $p_{+} = \mathbb{P}(\tilde{y}=y)$, and $p_{-} = \mathbb{P}(\tilde{y} \neq y)$, respectively. 
To establish our theorem, we assume the following reasonable conditions referred to other works using linear discriminant analysis (LDA) assumptions \cite{lee2019robust, fisher1936use}:

\begin{assumption}\label{assume:gaussian}
The feature distribution is comprised of two Gaussians, each identified as a clean cluster and a noisy cluster. 
\end{assumption}

\begin{assumption}\label{assume:linearity}
The features of all instances with $y=+1$ are aligned on the unit vector $\mathbf{v}$ with the white noise, i.e., $\mathbb{E}_{\vx \in \mathbf{X}} \left[ \vx \right] = \vv$. Similarly, features of all instances with $y=-1$ are aligned on the unit vector $\vw$, i.e., $\mathbb{E}_{\vx \in \tilde{\mathbf{X}}} \left[ \vx \right] = \vw$.
\end{assumption}

\begin{theorem}\label{method:thm3}
    \textbf{(Upper bound for the perturbation towards the clean data's eigenvector $\vv$)}
    Let $N_{+}$ and $N_{-}$ be the number of clean instances and noisy instances, respectively, and $\vu$ be the FINE's eigenvector which is the first column of $\mathbf{U}$ from the eigen decomposition of the whole data's  matrix $\boldsymbol{\Sigma}$. For any $\delta \in (0, 1)$, its perturbation towards the $\vv$ in assumption \ref{assume:linearity}\,(i.e., 2-norm for difference of projection matrices; left hand side of Eq. (\ref{eq:perturb})) holds the following with probability $1 - \delta$:
    \begin{equation}\label{eq:perturb}
        \left\Vert \vu \vu^{\top} - \vv \vv^{\top} \right\Vert_{2} \leq \frac{3 \tau \cos \theta + \mathcal{O}(\sigma^{2} \sqrt{\frac{d + \log(4 / \delta)}{N_{+}}})} {1 - \tau (\sin \theta + 3 \cos \theta) - \mathcal{O}(\sigma^{2} \sqrt{\frac{d + \log(4 / \delta)}{N_{+}}})}
        % \frac{\tau \sin \angle (\vw, \vv) + C}{1 - \tau \sin \angle (\vw, \vv) + C} %\frac{N_{-}}{N_{+}} % \frac{\frac{N_{-}}{N_{+} + N_{-}}}{2 - \frac{N_{-}}{N_{+} + N_{-}}},
    \end{equation}
    where $\vw$ is the first eigenvector of noisy instances, $\tau$ is the fraction between noisy and clean instances \,($\frac{N_{-}}{N_{+}}$), $\theta$ is $\angle (\vw, \vv)$, and $\sigma^2$ is a variance of white noise.
    % which means that the upper bound for perturbation becomes small as either the amount of noisy instances decreases or the angle\,$\angle (\mathbf{w}, \mathbf{v})$ between $\vw$ and $\vv$ approaches 0.
\end{theorem}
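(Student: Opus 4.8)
The plan is to recast the claim as a top-eigenvector perturbation bound for a random second-moment matrix, splitting the perturbation into a deterministic rank-one piece coming from the noisy cluster and a small random piece coming from finite-sample noise. Using Assumptions~\ref{assume:gaussian}--\ref{assume:linearity}, a clean feature is $\vx = \vv + \boldsymbol{\xi}$ and a noisy feature is $\vx = \vw + \boldsymbol{\xi}$ with $\boldsymbol{\xi}\sim\mathcal{N}(\vzero,\sigma^{2}\mathbf{I})$. Expanding the outer products in $\boldsymbol{\Sigma}=\sum_{\tilde y=+1}\vx\vx^{\top}$ and dividing by $N_{+}$,
\[
\tfrac{1}{N_{+}}\boldsymbol{\Sigma} \;=\; \vv\vv^{\top} + \tau\,\vw\vw^{\top} + (1+\tau)\sigma^{2}\mathbf{I} + E ,
\]
where $E$ gathers the sample fluctuations of the cross terms $\vv\bar{\boldsymbol{\xi}}^{\top},\bar{\boldsymbol{\xi}}\vv^{\top}$ (and their $\vw$-analogues) and of the empirical noise covariances $\tfrac{1}{N_{+}}\sum\boldsymbol{\xi}_i\boldsymbol{\xi}_i^{\top}-\sigma^{2}\mathbf{I}$. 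The isotropic term $(1+\tau)\sigma^{2}\mathbf{I}$ shifts all eigenvalues equally and leaves every eigenvector fixed, so it is enough to analyze $\mathbf{M}:=\vv\vv^{\top}+\tau\,\vw\vw^{\top}+E$, whose leading eigenvector is $\vu$.

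For the random part, I would invoke a standard concentration bound for sums of i.i.d.\ Gaussian outer products (a matrix Bernstein / sub-exponential estimate for $\|\tfrac{1}{N_{+}}\sum\boldsymbol{\xi}_i\boldsymbol{\xi}_i^{\top}-\sigma^{2}\mathbf{I}\|_{2}$, plus a Gaussian tail bound on $\|\bar{\boldsymbol{\xi}}\|$). In the regime $N_{+}\gtrsim d$ this gives, with probability at least $1-\delta$, $\|E\|_{2}=\mathcal{O}\!\big(\sigma^{2}\sqrt{(d+\log(4/\delta))/N_{+}}\big)=:\varepsilon$, which is the error term in \eqref{eq:perturb}.

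The deterministic core is a Davis--Kahan ($\sin\Theta$) analysis of $\mathbf{M}$ around $\vv\vv^{\top}$ that is sensitive to the angle $\theta=\angle(\vw,\vv)$. Writing $\vw=\cos\theta\,\vv+\sin\theta\,\vv^{\perp}$, decompose $\tau\vw\vw^{\top}$ into: the $\vv$-aligned piece $\tau\cos^{2}\theta\,\vv\vv^{\top}$ (which only increases the top eigenvalue), the coupling piece of operator norm $\tau\cos\theta\sin\theta$, and the transverse piece $\tau\sin^{2}\theta\,\vv^{\perp}(\vv^{\perp})^{\top}$, which is what lifts the second eigenvalue. Weyl's inequality then gives an eigengap lower bound $\lambda_{1}(\mathbf{M})-\lambda_{2}(\mathbf{M})\ge 1-\tau(\sin\theta+3\cos\theta)-\mathcal{O}(\varepsilon)$ — the $\sin\theta$ coming from the transverse piece, the $\cos\theta$ terms from the worst-case contribution of the coupling and aligned pieces to the gap (using $\cos^{2}\theta,\cos\theta\sin\theta\le\cos\theta$ and $\sin^{2}\theta\le\sin\theta$). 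Bounding the off-diagonal (tilt-inducing) block of the perturbation by $3\tau\cos\theta+\mathcal{O}(\varepsilon)$ and applying the $\sin\Theta$ theorem, together with the identity $\|\vu\vu^{\top}-\vv\vv^{\top}\|_{2}=\sin\angle(\vu,\vv)$ for unit vectors, yields exactly the right-hand side of \eqref{eq:perturb}. Note that when $\theta\to\pi/2$ (noisy mean nearly orthogonal to $\vv$) the numerator collapses to $\mathcal{O}(\varepsilon)$, matching the intuition that an orthogonal noisy cluster cannot tilt the clean eigenvector.

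The step I expect to be the real obstacle is the constant-level bookkeeping in this last paragraph: the rank-one term $\tau\vw\vw^{\top}$ feeds simultaneously into the numerator (tilting $\vu$ off $\vv$) and into the denominator (eroding the eigengap), so the $\sin\Theta$ estimate must be organized to produce the stated coefficients rather than a lossy $2\tau$-type bound, while also checking that the $\mathcal{O}(\varepsilon)$ genuinely absorbs the $(1+\tau)$ prefactors and the lower-order $d/N_{+}$ term from the covariance concentration. The reduction, the removal of the isotropic term, and the Weyl estimate are otherwise routine.
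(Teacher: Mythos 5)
Your plan is essentially the paper's own proof: the same decomposition $\vw=\cos\theta\,\vv+\sin\theta\,\vv_{\perp}$, the same Davis--Kahan $\sin\Theta$ argument with Weyl's inequality controlling the eigengap, the same covariance-concentration bound for the random fluctuation, and the same bookkeeping that yields $3\tau\cos\theta$ in the numerator (from the three cross-terms of $\vw\vw^{\top}$, each of norm at most $\cos\theta$) and $\tau(\sin\theta+3\cos\theta)$ in the denominator. The step you flag as the obstacle is handled in the paper exactly as you anticipate, so no change of route is needed.
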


% By \autoref{method:thm3}, we can efficiently get approximated vector of the first eigenvector which have only small perturbation to the first eigenvector of clean datasets. 
\autoref{method:thm3} states that the upper bound for the perturbation toward $\vv$ are dependent on both the ratio $\tau$ and the angle\,$\theta$ between $\vw$ and $\vv$; small upper bound can be guaranteed as the number of clean data increases, $\tau$ decreases, and $\theta$ approaches $\frac{\pi}{2}$.
% Despite the maximization of clean data's alignment values, those of noisy instances also are able to increase as the angle\,$\theta$ approaches 0 when considering the worst case.
We also derive the lower bound for the precision and the recall when using the eigenvector $\vu$ in Appendix. In this theoretical analysis, we can ensure that such lower bound values become larger as $\vv$ and $\vw$ become orthogonal to each other. To verify these assumptions, we provide various experimental results for the separation of \textit{alignment clusterability}, the perturbation values, the scalability to the number of samples, the quality of our detector in the application of sample selection approach, and the comparison with an alternative clustering-based estimator\,\cite{lee2019robust}.

\paragraph{Validation for our Estimated Eigenvector.}
% \begin{wrapfigure}[16]{r}{0.5 \linewidth}% \addtocounter{figure}{-1}
%     \vspace{-30pt}
%     \includegraphics[width=\linewidth]{figure/section3/class1_thm2.pdf}
%     \begin{center}
%     \vspace{-20pt}
%     \caption{Heatmap of expectation values on random hyperplane. We evaluated this visualization on the ResNet34 model trained on CIFAR-10 with symmetric noise 60\% \label{fig:visualziation_thm2}.}
%     \end{center}
% \end{wrapfigure}
To validate our FINE's principal components, we first propose a simple visualization scheme based on the following steps: (1) Pick the first eigenvector\,($\vu$) extracted by FINE algorithm, (2) Generate a random hyperplane spanned by such eigenvector\,($\vu$) and a random vector, (3) Calculate the value of the following Eq.~(\ref{method:eq_thm2}) on any unit vectors\,($\va$) in such hyperplane and plot a heatmap with them:
% \vspace{-8pt}
\begin{equation}\label{method:eq_thm2}
 \frac{1}{\left| \mathbf{X} \right|} \sum_{\vx_{i} \in \mathbf{X}} \left< \va, \vx_{i} \right>^{2} - \frac{1}{| \tilde{\mathbf{X}} |} \sum_{\vx_{j} \in \tilde{\mathbf{X}}} \left< \va, \vx_{j} \right>^{2}
\end{equation}

Eq.~(\ref{method:eq_thm2}) is maximized when the unit vector $\va$ not only maximizes the FINE scores of clean data for the first term in Eq.~(\ref{method:eq_thm2}), but also minimizes those of noisy data for the second term in Eq.~(\ref{method:eq_thm2}). This visualization shows in 2-D how FINE's first eigenvector\,($\vu$) optimizes such values in the presence of noisy instances (\autoref{fig:visualziation_thm23}a and \ref{fig:visualziation_thm23}b). As the figures show, the FINE's eigenvector $\vu$ (red dotted line) has almost maximum value of Eq.~(\ref{method:eq_thm2}). Furthermore, we empirically evaluate the perturbation values in Theorem\,\ref{method:thm3} as the noise rate changes\,(\autoref{fig:visualziation_thm23}c); FINE has small perturbation values even in a severe noise rate.
\vspace{-5pt}
\paragraph{Scalability to Number of Samples.} Despite FINE's superiority, it may require high computational costs if the whole dataset is used for eigen decomposition. To address this issue, we approximate the eigenvector with a small portion of the dataset and measure the cosine similarity values between the approximated term and the original one\,($\vu$)\,(\autoref{fig:visualziation_thm23}d). Interestingly, we verify that far accurate eigenvector is computed even using 1\% data\,(i.e., a cosine similarity value is 0.99), and thus the eigenvector can be accurately estimated with little computation time.

% \begin{figure}[t]
%     \begin{subfigure}[b]{0.25\textwidth}
%     \centering
%     \includegraphics[width=\linewidth]{figure/section3/class0_thm2.pdf}
%     \caption{CIFAR-10}
%     \end{subfigure}%
%     \hfill
%     \begin{subfigure}[b]{0.24\textwidth}
%     \centering
%     \includegraphics[width=\linewidth]{figure/section3/class1_thm2.pdf}
%     \caption{CIFAR-100}
%     \end{subfigure}
%     \hfill
%     \begin{subfigure}[b]{0.24\textwidth}
%     \centering
%     \includegraphics[width=\linewidth]{figure/section3/c10_20_noise_sym.pdf}
%     \caption{}
%     \end{subfigure}
%     \hfill
%     \begin{subfigure}[b]{0.24\textwidth}
%     \centering
%     \includegraphics[width=\linewidth]{figure/section3/c10_40_noise_sym.pdf}
%     \caption{}
%     \end{subfigure}
%     \caption{(a), (b): Heatmaps of expectation values on random hyperplane. We evaluated this visualization on the ResNet34 model trained on CIFAR-10 with symmetric noise 40\% and CIFAR100 with symmetric noise 80\%, respectively; (c), (d): comparison of the perturbations of Eq.~(\ref{eq:perturb}) on CIFAR10 with symmetric noise 20\% and symmetric noise 40\%, respectively.  \label{fig:visualziation_thm23}}
% \end{figure}
\begin{figure}[t]
    \centering
    \includegraphics[width=1.0\linewidth]{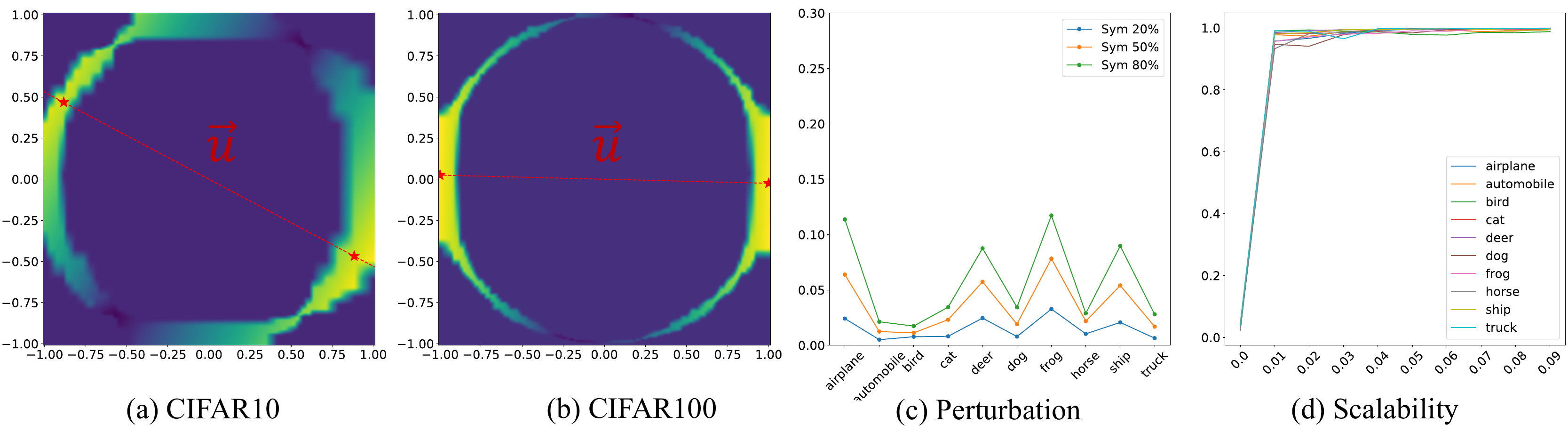}
    \caption{(a), (b): Heatmaps of Eq.~(\ref{method:eq_thm2}) values on unit circle in random hyperplane. We evaluate this visualization on the ResNet34 model trained with common cross-entropy loss on CIFAR-10 with asymmetric noise 40\% and CIFAR-100 with symmetric noise 80\%, respectively. Colors closer to yellow indicate larger the values; (c): comparison of perturbations of Eq.~(\ref{eq:perturb}) on CIFAR-10 with symmetric noise 20\%; (d): comparison of cosine similarity values between FINE's principal components and approximated principal components using fraction of data on CIFAR-10 with symmetric noise 80\%. \label{fig:visualziation_thm23} \vspace{-8pt}}
\end{figure}

\begin{figure}[t]
    \begin{subfigure}[b]{0.32\textwidth}
    \centering
    \includegraphics[width=\linewidth]{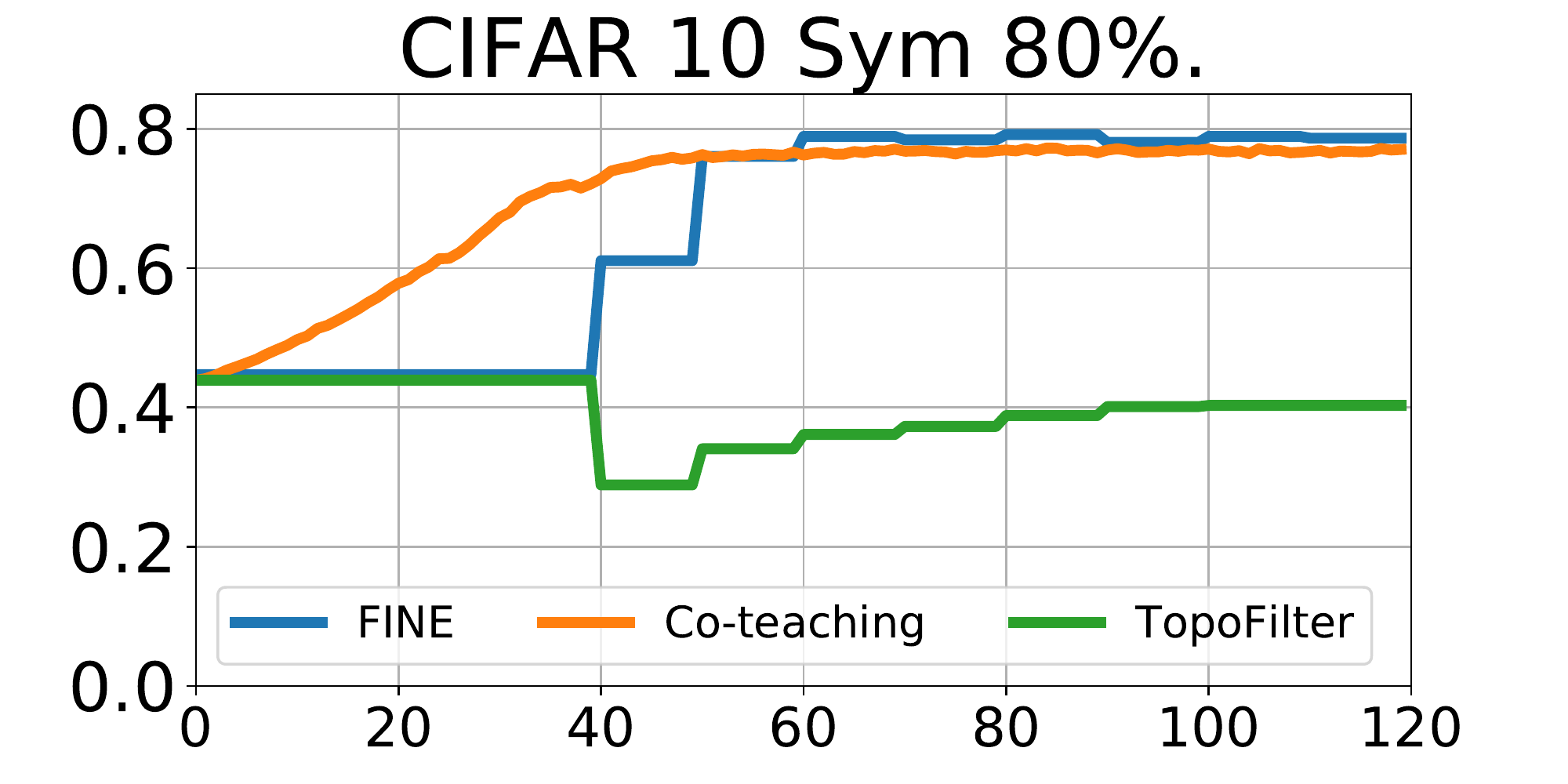}
    \caption{Sym 80\% (C10)\label{thm2_a}}
    \end{subfigure}
    \hfill
    \begin{subfigure}[b]{0.32\textwidth}
    \centering
    \includegraphics[width=\linewidth]{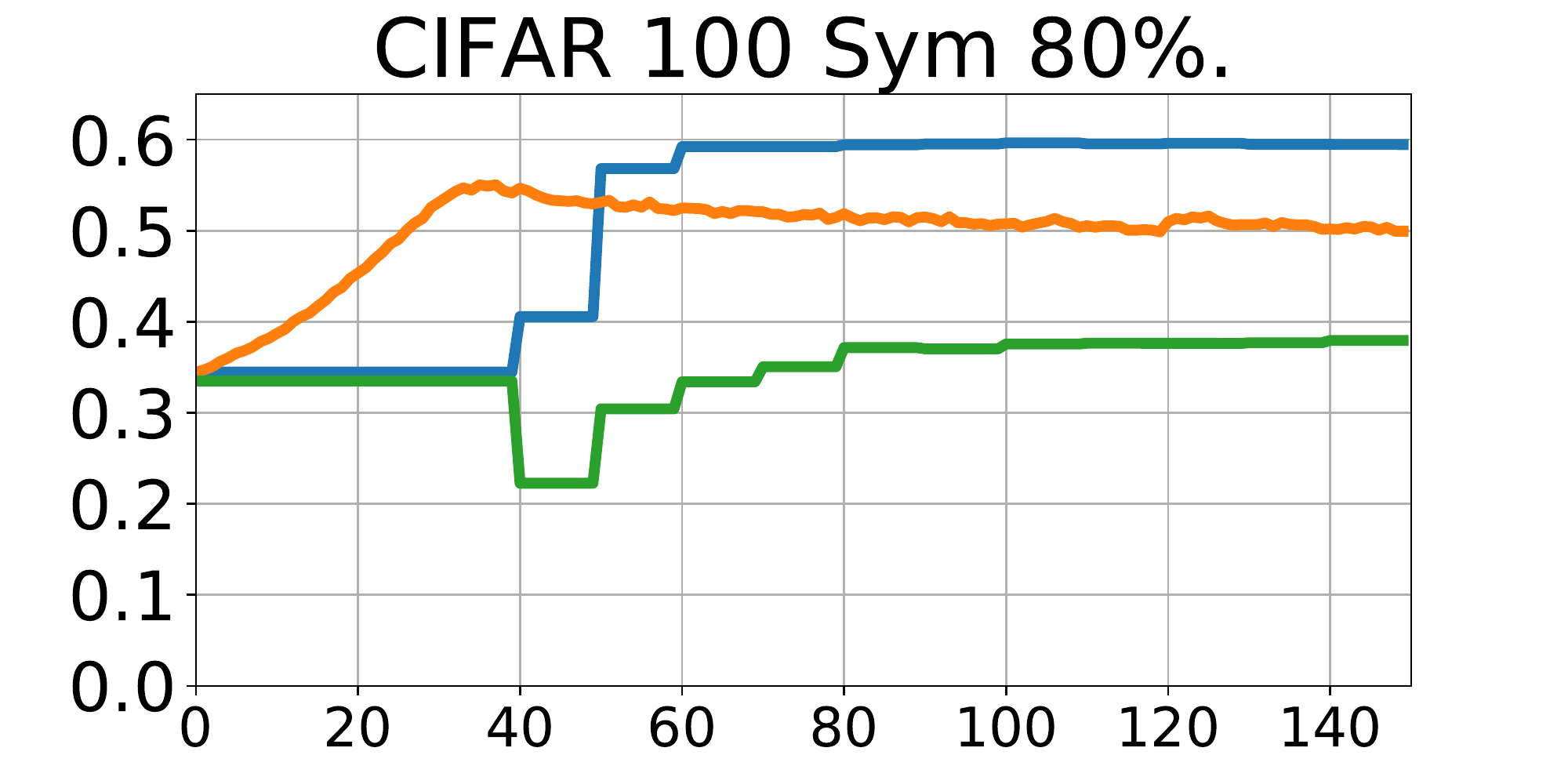}
    \caption{Sym 80\% (C100)}
    \end{subfigure}
    \hfill
    \begin{subfigure}[b]{0.32\textwidth}
    \centering
    \includegraphics[width=\linewidth]{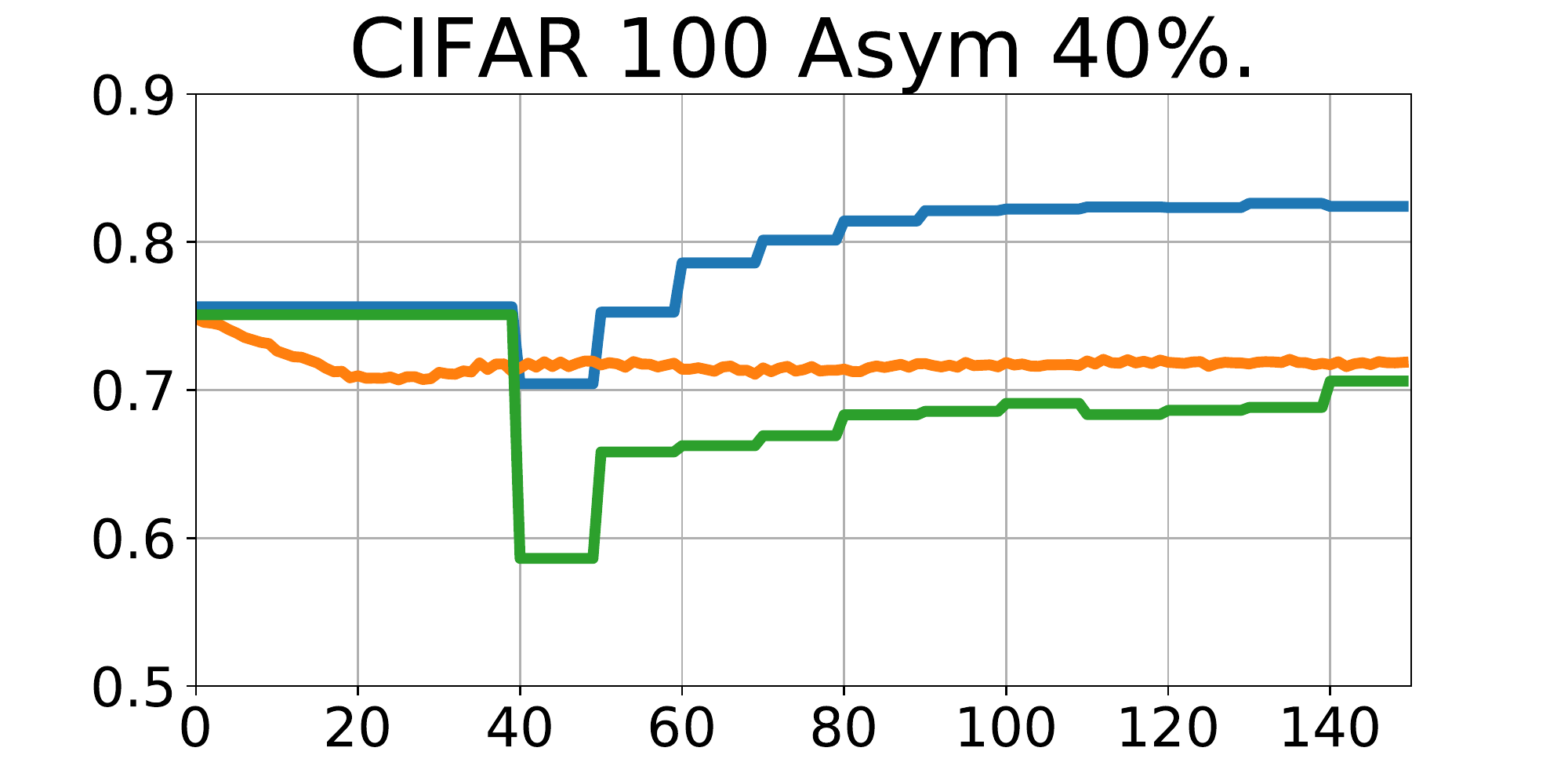}
    \caption{Asym 40\% (C100)}
    \end{subfigure}
    \caption{Comparisons of F-scores on CIFAR10 and CIFAR100 under symmetric and asymmetric label noise. C10 and C100 denote CIFAR-10 and CIFAR-100, respectively.\vspace{-15pt}}
    % \vspace{-10pt}
    \label{fig:selection_quality}
\end{figure}
% \begin{figure}[t]
%     \begin{subfigure}[b]{0.32\textwidth}
%     \centering
%     \includegraphics[width=\linewidth]{figure/section3/cifar10_f1score.pdf}
%     \caption{Retrain}
%     \end{subfigure}%
%     \hfill
%     \begin{subfigure}[b]{0.32\textwidth}
%     \centering
%     \includegraphics[width=\linewidth]{figure/section3/cifar100_f1score.pdf}
%     \caption{Retrain}
%     \end{subfigure}%
%     \caption{Comparisons of F-scores on CIFAR10 and CIFAR100 under symmetric and asymmetric label noise. The first column are comparisons of F-scores among the retrain approaches and second and third columns are comparisons of F-scores among the dynamic sample selection approaches. F-score is harmonic mean of precision (fraction of clean instances in the selected instances) and recall (fraction of clean instances that are successfully selected).}
% \end{figure}

\paragraph{Validation for Dynamics of Sample-selection Approach.}
We evaluate the F-score dynamics of every training epoch on the symmetric and the asymmetric label noise in \autoref{fig:selection_quality}. We compare FINE with the following sample-selection approaches: Co-teaching \cite{han2018coteaching} and TopoFilter \cite{wu2020topological}. In \autoref{fig:selection_quality}, during the training process, F-scores of FINE becomes consistently higher on both symmetric noise and asymmetric noise settings, while Co-teaching and TopoFilter achieve lower quality. Unlike TopoFilter and FINE, Co-teaching even performs the sample-selection with the access of noise rate. This evidence show that FINE is also applicable to the naive sample-selection approaches.

% \textcolor{red}{(1) 모든 ratio에 대해서 그림 뽑아야 함. (2) Topofilter 같은 방법들이 recall이 작은 것에 대해서 언급을 할 것인지.}

\begin{wrapfigure}[11]{R}{0.5 \linewidth}% \addtocounter{figure}{-1}
    \vspace{-10pt}
    \includegraphics[width=\linewidth]{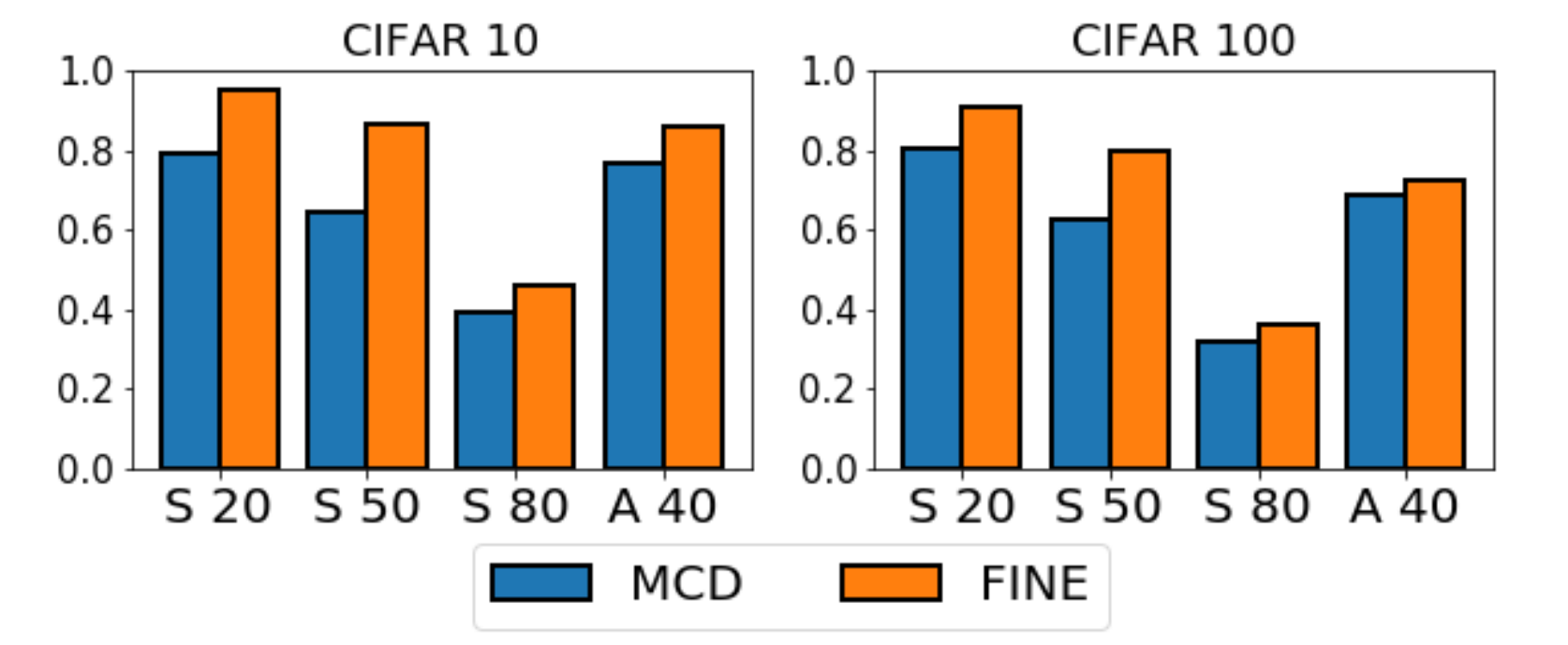}
    \begin{center}
    \vspace{-10pt}
    \caption{Comparisons of F-scores on CIFAR-10 and CIFAR-100 under symmetric (S) and asymmetric noise (A) settings.  \label{fig:MCD}}
    \end{center}
\end{wrapfigure}
\vspace{-5pt}

\paragraph{Comparison for Mahalanobis Distance Estimator}
% FINE detector filters noisy instances based on the features from penultimate layer. 
Under similar conditions, Lee et al.\,\cite{lee2019robust} measured the Mahalanobis distance of pre-logits using the minimum covariance determinant\,(MCD) estimator and selected clean samples based on this distance. While they also utilized the LDA assumptions on pre-logits, FINE consistently outperforms MCD in both precision and recall, thus yielding better F-score\,(\autoref{fig:MCD}). The experimental results justify our proposed detector, in comparison with a similar alternative.
\vspace{-10pt}
\section{Experiment}\label{sec4}
\vspace{-5pt}
In this section, we demonstrate the effectiveness of our FINE detector for three applications: sample selection approach, SSL, and collaboration with noise-robust loss functions.

\vspace{-5pt}
\subsection{Experimental Settings}\label{sec4.1}
\vspace{-5pt}
\paragraph{Noisy Benchmark Dataset.} Following the previous setups \cite{Li2020DivideMix, liu2020early}, we artificially generate two types of random noisy labels: injecting uniform randomness into a fraction of labels (symmetric) and corrupting a label only to a specific class (asymmetric). For example, we generate noise by mapping TRUCK $\rightarrow$ AUTOMOBILE, BIRD $\rightarrow$ AIRPLANE, DEER $\rightarrow$ HORSE, CAT $\leftrightarrow$ DOG to make asymmetric noise for CIFAR-10. For CIFAR-100, we create 20 five-size super-classes and generate asymmetric noise by changing each class to the next class within super-classes circularly. For a real-world dataset, Clothing1M\,\cite{xiao2015learning} containing inherent noisy labels is used. This dataset contains 1 million clothing images obtained from online shopping websites with 14 classes\footnote{T-shirt, Shirt, Knitwear, Chiffon, Sweater, Hoodie, Windbreaker, Jacket, Down Coat, Suit, Shawl, Dress, Vest, and Underwear. The labels in this dataset are extremely noisy\,(estimated noisy level is 38.5\%)\,\cite{song2020prestopping}.}. The dataset provides $50k, 14k$, and $10k$ verified as clean data for training, validation, and testing. Instead of using the $50k$ clean training data, we use a randomly sampled pseudo-balanced subset as a training set with $120k$ images. For evaluation, we compute the classification accuracy on the $10k$ clean dataset.
\vspace{-10pt}
\paragraph{Networks and Hyperparameter Settings.} We use the architectures and hyperparameter settings for all baseline experiments following the setup of Liu et al.\,\cite{liu2020early} except with SSL approaches. For SSL approaches, we follow the setup of Li et al.\,\cite{Li2020DivideMix}. We set the threshold $\zeta$ as $0.5$.

% \subsection{FINE as robust approach}

\subsection{Application of FINE}
\subsubsection{Sample Selection-Based Approaches}\label{sec4.2.1}
% Coteaching과 비교. Coteaching을 사용할 때 FINE을 사용하면 hyperparameter 이슈도 없고 더 잘 되더라.

% \paragraph{Baselines.} 
We apply our FINE detector for various sample selection algorithms. In detail, after warmup training, at every epoch, FINE selects the clean data with the eigenvectors generated from the gram matrices of data predicted to be clean in the previous round, and then the neural networks are trained with them. We compare our proposed method with the following sample selection approaches: (1) \textit{Bootstrap}\,\cite{reed2014training}, (2) \textit{Forward}\,\cite{patrini2017making}, (3) \textit{Co-teaching}\,\cite{han2018coteaching}; (4) \textit{Co-teaching+}\,\cite{yu2019does}; (5) \textit{TopoFilter}\,\cite{wu2020topological}; (6) \textit{CRUST}\,\cite{mirzasoleiman2020coresets}. We evaluate these algorithms three times and report error bars.
% We experiment with both original hyperparameter settings and our hyperparameter settings and report the best performance for comparison methods. 

\autoref{tab:sample_selection} summarizes the performances of different sample selection approaches on various noise distribution and datasets. We observe that our FINE method consistently outperforms the competitive methods over the various noise rates. Our FINE methods can filter the clean instances without losing essential information, leading to training the robust network.

% The first two comparison methods are sample selection with small loss instances, and the last two comparison methods are sample selection based on feature spaces.

% Co-teaching\,\cite{han2018coteaching} and Co-teaching+\,\cite{yu2019does} are the sample selection approaches that utilize the two networks such that one of them extracts a set of instances with small losses, and the other one is trained with the extracted set. 
To go further, we improve the performance of Co-teaching\,\cite{han2018coteaching} by substituting its sample selection state with our FINE algorithm. To combine FINE and the Co-teaching family, unlike the original methods that utilize the small loss instances to train with clean labels, we train one model with extracted samples by conducting FINE on another model. The results of the experiments are shown in the eighth and ninth rows of \autoref{tab:sample_selection}.

\begin{table}[t]
\caption{Test accuracies (\%) on CIFAR-10 and CIFAR-100 under different noisy types and fractions. All comparison methods are reproduced with publicly available code, while the results for Bootstrap\,\cite{reed2014training} and Forward\,\cite{patrini2017making} are taken from \cite{liu2020early}. For CRUST\,\cite{mirzasoleiman2020coresets}, we experiment without mix-up to compare the intrinsic sample selection effect of each method. The average accuracies and standard deviations over three trials are reported. Here, we substitute the sample selection method of Co-teaching\,\cite{han2018coteaching, yu2019does} with FINE\,(i.e., F-Co-teaching). The best results sharing the noisy fraction and method are highlighted in bold.}\vspace{5pt}
\centering
{\scriptsize %\tiny scriptsize
\begin{tabular}{c|cccc|cccc} \toprule
Dataset     & \multicolumn{4}{c}{CIFAR-10}     & \multicolumn{4}{|c}{CIFAR-100}       \\ \midrule
Noisy Type  & \multicolumn{3}{c}{Sym} & Asym   & \multicolumn{3}{c}{Sym} & Asym     \\ \midrule
Noise Ratio & 20      & 50 & 80 & 40     & 20  & 50 & 80 & 40       \\ \midrule \midrule
Standard & 87.0 $\pm$ 0.1 & 78.2 $\pm$ 0.8 & 53.8 $\pm$ 1.0 & 85.0 $\pm$ 0.0 & 58.7 $\pm$ 0.3 & 42.5 $\pm$ 0.3 & 18.1 $\pm$ 0.8 & 42.7 $\pm$ 0.6 \\
Bootstrap\,\cite{reed2014training} & 86.2 $\pm$ 0.2 & - & 54.1 $\pm$ 1.3 & 81.2 $\pm$ 1.5 & 58.3 $\pm$ 0.2 & - & 21.6 $\pm$ 1.0 & 45.1 $\pm$ 0.6 \\
Forward\,\cite{patrini2017making}  & 88.0 $\pm$ 0.4 & - & 54.6 $\pm$ 0.4 & 83.6 $\pm$ 0.6 & 39.2 $\pm$ 2.6 & - & 9.0 $\pm$ 0.6 & 34.4 $\pm$ 1.9 \\
Co-teaching\,\cite{han2018coteaching}  & 89.3 $\pm$ 0.3 & 83.3 $\pm$ 0.6 & 66.3 $\pm$ 1.5 & 88.4 $\pm$ 2.8 & 63.4 $\pm$ 0.0 & 49.1 $\pm$ 0.4 & 20.5 $\pm$ 1.3 & 47.7 $\pm$ 1.2 \\
Co-teaching+\,\cite{yu2019does}  & 89.1 $\pm$ 0.5 & 84.9 $\pm$ 0.4 & 63.8 $\pm$ 2.3 & 86.5 $\pm$ 1.2 & 59.2 $\pm$ 0.4 & 47.1 $\pm$ 0.3 & 20.2 $\pm$ 0.9 & 44.7 $\pm$ 0.6 \\
TopoFilter\,\cite{wu2020topological}  & 90.4 $\pm$ 0.2 & 86.8 $\pm$ 0.3 & 46.8 $\pm$ 1.0 & 87.5 $\pm$ 0.4 & 66.9 $\pm$ 0.4 & 53.4 $\pm$ 1.8 & 18.3 $\pm$ 1.7 & 56.6 $\pm$ 0.5 \\
CRUST\,\cite{mirzasoleiman2020coresets}       & 89.4 $\pm$ 0.2 & 87.0 $\pm$ 0.1 & 64.8 $\pm$ 1.5 & 82.4 $\pm$ 0.0 & 69.3 $\pm$ 0.2 & 62.3 $\pm$ 0.2 & 21.7 $\pm$ 0.7 & 56.1 $\pm$ 0.5 \\\midrule
FINE  & \textbf{91.0 $\pm$ 0.1} & \textbf{87.3 $\pm$ 0.2} & \textbf{69.4 $\pm$ 1.1} & \textbf{89.5 $\pm$ 0.1} & \textbf{70.3 $\pm$ 0.2} & \textbf{64.2 $\pm$ 0.5} & \textbf{25.6 $\pm$ 1.2} & \textbf{61.7 $\pm$ 1.0} \\
F-Coteaching  & \textbf{92.0 $\pm$ 0.1} & \textbf{87.5 $\pm$ 0.1} & \textbf{74.2 $\pm$ 0.8} & \textbf{90.5 $\pm$ 0.2} & \textbf{71.1 $\pm$ 0.2} & \textbf{64.7 $\pm$ 0.3} & \textbf{31.6 $\pm$ 1.0} & \textbf{64.8 $\pm$ 0.7} \\ \bottomrule
\end{tabular}\vspace{-10pt}}
\label{tab:sample_selection}
\end{table}

% This subsection evaluate our FINE algorithm collaborated with other noise robust algorithm, and states that FINE can improve existing various noise robust algorithm.

% They solve learning bias in training by applying those two networks that generate different learning decision boundaries and have different learning abilities.  \textcolor{red}{우리 방법을 사용했더니 성능이 더 좋아지더라 라는 내용.}
\subsubsection{SSL-Based Approaches}\label{sec4.2.2}
\begin{wrapfigure}[16]{r}{0.35 \linewidth}% \addtocounter{figure}{-1}
    \vspace{-20pt}
    \includegraphics[width=\linewidth]{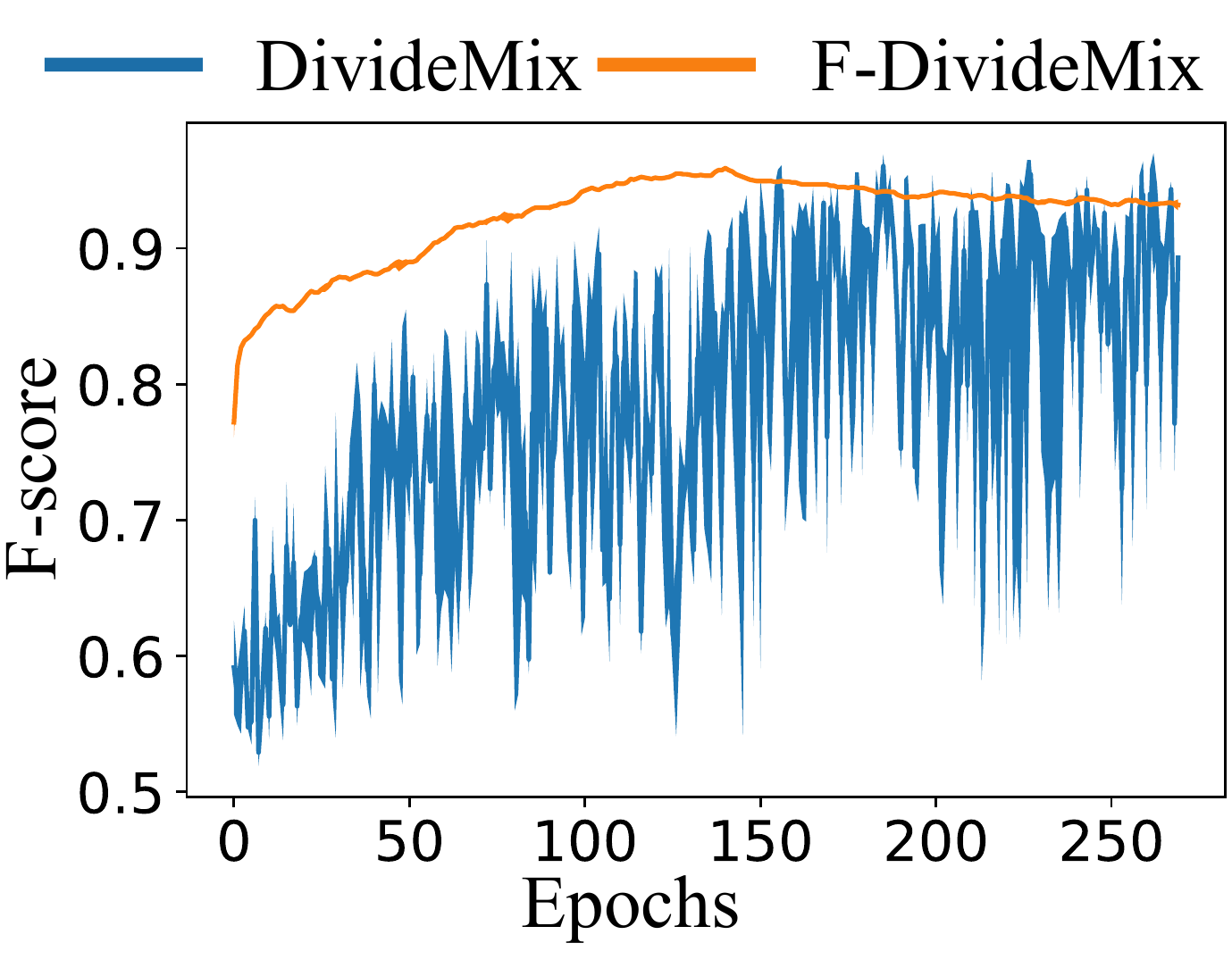}
    \begin{center}
    \vspace{-10pt}
    \caption{Comparisons of F-scores on CIFAR-10 under symmetric 90\% noise. Blue line indicates the error bar of two networks' F-score used in Dividemix\,\cite{Li2020DivideMix}, and Orange line indicates those replaced by our FINE detector.\label{ssl:fine90}}
    \end{center}
\end{wrapfigure}

SSL approaches\,\cite{Li2020DivideMix, cordeiro2021longremix, wei2021dst} divide the training data into clean instances as labeled instances and noisy instances as unlabeled instances and use both the labeled and unlabeled samples to train the networks in SSL. Recently, methods belonging to this category have shown the best performance among the various \textit{LNL} methods, and these methods can train robust networks for even extremely high noise rates. We compare the performances of the existing semi-supervised approaches and that in which the sample selection state of DivideMix\,\cite{Li2020DivideMix} is substituted with our FINE algorithm\,(i.e., F-DivideMix). The results of the experiments are shown in \autoref{tab:semi_supervised}. We achieve consistently higher performance than DivideMix by utilizing FINE instead of its loss-based filtering method and show comparable performance to the state-of-the-art SSL methods such as DST\,\cite{wei2021dst} and LongReMix \cite{cordeiro2021longremix}. Interestingly, as \autoref{ssl:fine90} shows, clean and noisy data are well classified in F-DivideMix under extreme noise cases.

\begin{table}[t]
\centering
\caption{Comparison of test accuracies (\%) for FINE collaborating with DivideMix and existing semi-supervised approaches on CIFAR-10 and CIFAR-100 under different noisy types and fractions. The results for all comparison methods are taken from their original works.}\vspace{5pt}
{\scriptsize %\tiny scriptsize small
\begin{tabular}{cc|ccccc|cccc} \toprule
Dataset      & & \multicolumn{5}{c}{CIFAR-10}     & \multicolumn{4}{|c}{CIFAR-100}       \\ \midrule
Noisy Type   & & \multicolumn{4}{c}{Sym} & Asym   & \multicolumn{4}{c}{Sym}      \\ \midrule
Noise Ratio  & & 20      & 50 & 80 &  90 & 40     & 20  & 50 & 80 & 90       \\ \midrule \midrule
\multirow{2}{*}{DivideMix\,\cite{Li2020DivideMix}} & Best & 96.1 & 94.6 & 93.2 & 76.0 & 93.4 & 77.3 & 74.6 & 60.2 & 31.5 \\
& Last & 95.7 & 94.4 & 92.9 & 75.4 & 92.1 & 76.9 & 74.2 & 59.6 & 31.0 \\
\multirow{2}{*}{DST\,\cite{wei2021dst}} & Best & 96.1 & 95.2 & 92.9 & - & 94.3 & 78.0 & 74.3 & 57.8 & - \\
& Last & 95.9 & 94.7 & 92.6 & - & 92.3 & 77.4 & 73.6 & 55.3 & - \\
\multirow{2}{*}{LongReMix\,\cite{cordeiro2021longremix}} & Best & 96.2 & 95.0 & 93.9 & 82.0 & 94.7 & 77.8 & 75.6 & 62.9 & 33.8 \\
& Last & 96.0 & 94.7 & 93.4 & 81.3 & 94.3 & 77.5 & 75.1 & 62.3 & 33.2 \\
\multirow{2}{*}{F-DivideMix} & Best & 96.1 & 94.9 & 93.5 & \textbf{90.5} & 93.8 & \textbf{79.1} & 74.6 & 61.0 & \textbf{34.3}\\
 & Last & 96.0 & 94.5 & 93.2 & \textbf{89.6} & 92.8 & \textbf{78.8} & 74.3 & 60.1 & \textbf{31.2} \\ \bottomrule
\end{tabular}\vspace{-10pt}}
\label{tab:semi_supervised}
\end{table}
\vspace{-5pt}
\subsubsection{Collaboration with Noise-Robust Loss Functions}\label{sec4.2.3}
\vspace{-5pt}
The goal of the noise-robust loss function is to achieve a small risk for unseen clean data even when noisy labels exist in the training data. There have been few collaboration studies of the noise-robust loss function methodology and dynamic sample selection. Most studies have selected clean and noisy data based on cross-entropy loss.
% However, they perform well only in simple cases, when learning is easy or the number of classes is small.\cite{song2021learning}

Here, we state the collaboration effects of FINE with various noise-robust loss functions: generalized cross entropy\,(GCE)\,\cite{zhang2018generalized}, symmetric cross entropy\,(SCE)\,\cite{wang2019symmetric}, and early-learning regularization\,(ELR)\,\cite{liu2020early}. \autoref{fig:robust_loss} shows that FINE facilitates generalization in the application of noise-robust loss functions on severe noise rate settings. The detailed results are reported in the Appendix. Unlike other methods, it is still theoretically supported because FINE extracts clean data with a robust classifier using representation.

% Even though robust loss functions are focused on the robust training in noisy conditions, training networks by robust loss functions with FINE improve the performances.

\begin{figure}[t]
    \begin{subfigure}[b]{0.24\textwidth}
    \centering
    \includegraphics[width=\linewidth]{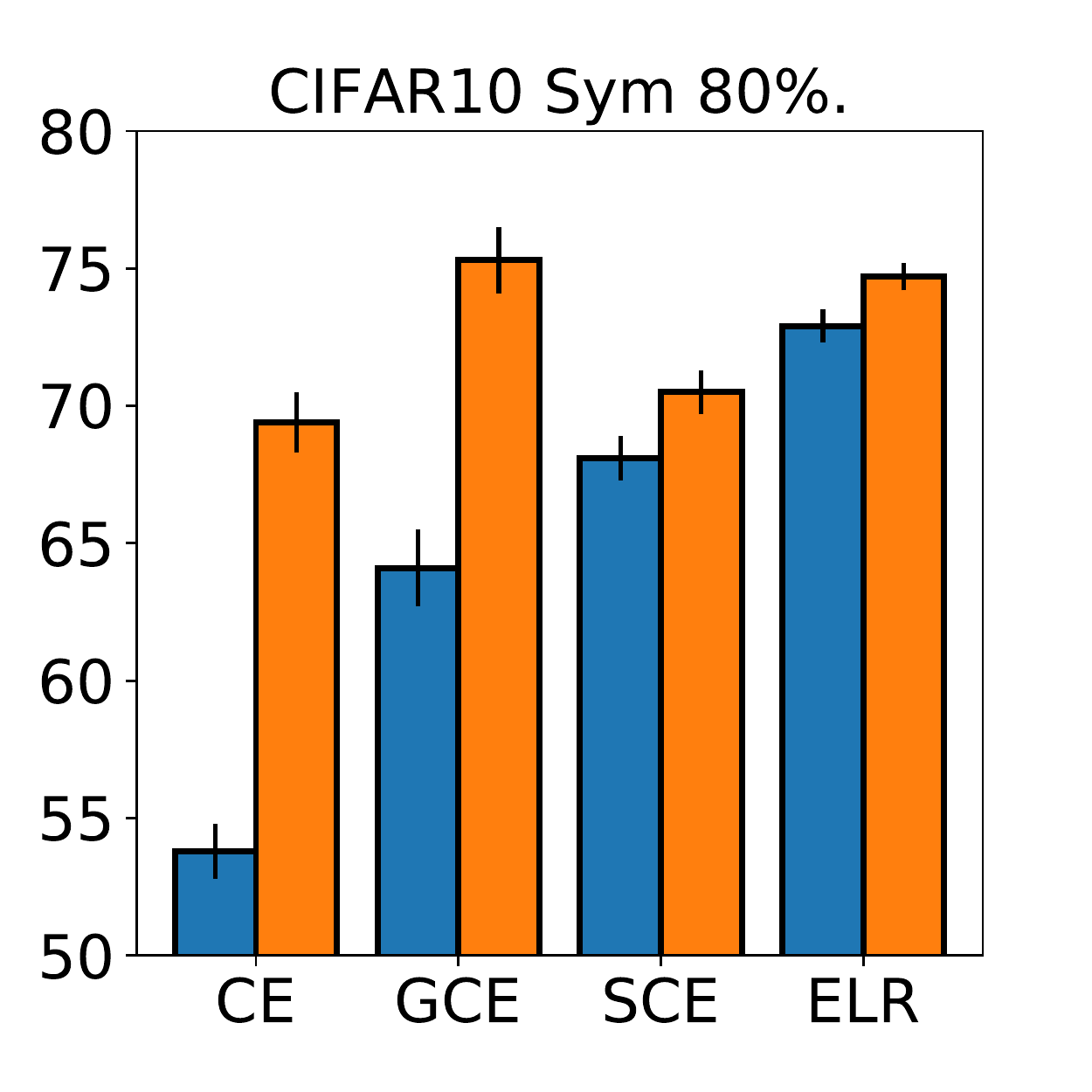}
    \caption{Sym 80\% (C10)}
    \end{subfigure}
    \hfill
    \begin{subfigure}[b]{0.24\textwidth}
    \centering
    \includegraphics[width=\linewidth]{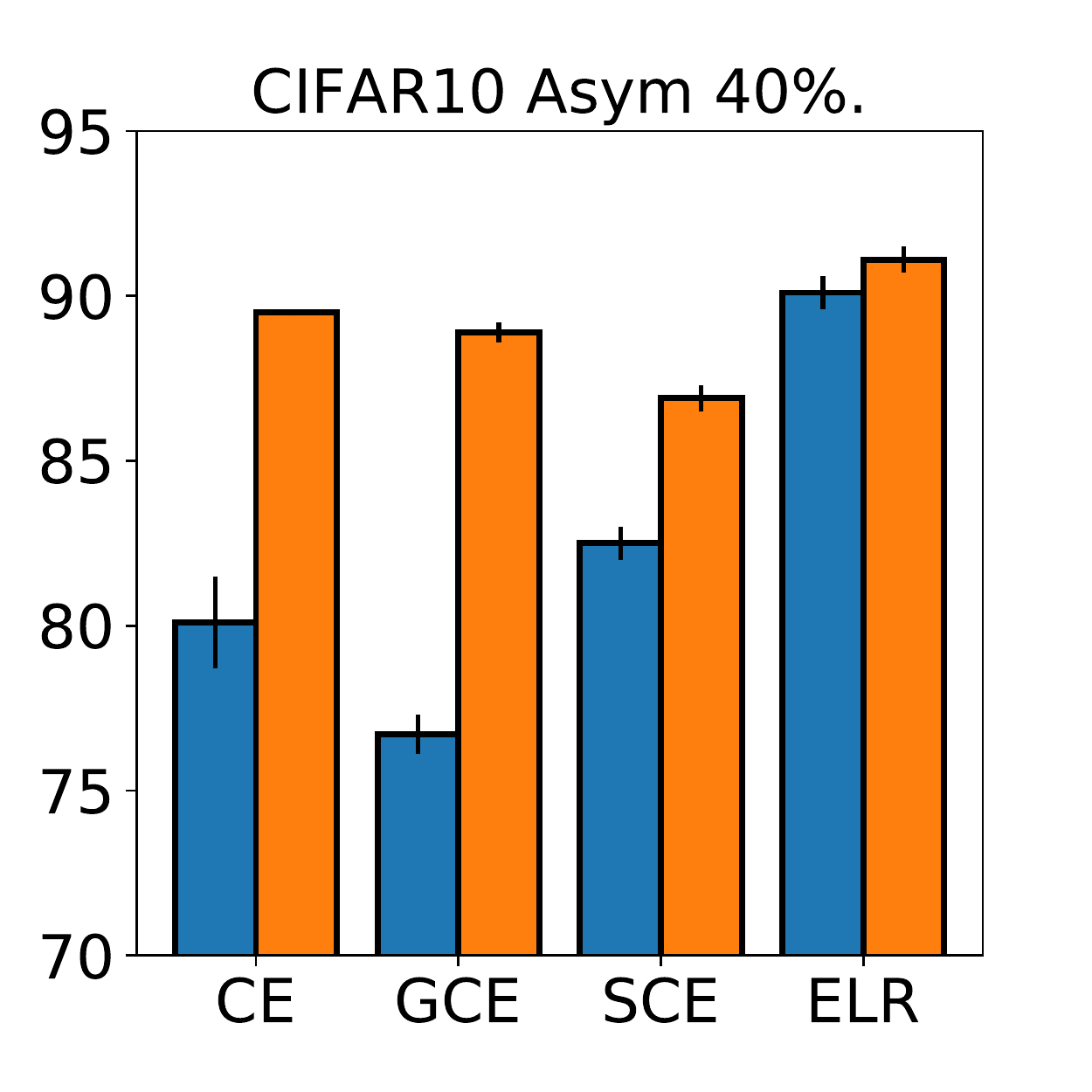}
    \caption{Asym 40\% (C10)}
    \end{subfigure}
    \begin{subfigure}[b]{0.24\textwidth}
    \centering
    \includegraphics[width=\linewidth]{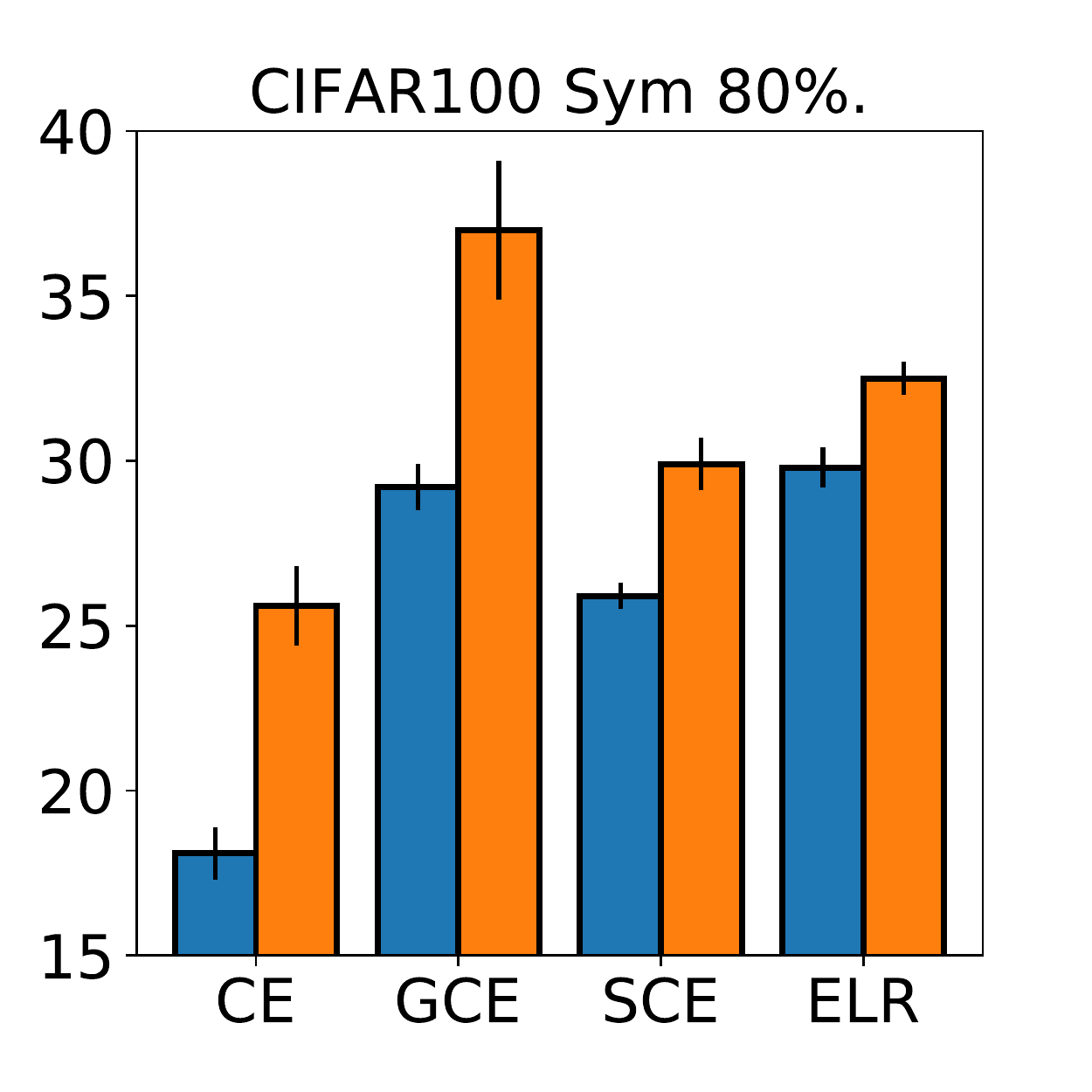}
    \caption{Sym 80\% (C100)}
    \end{subfigure}
    \hfill
    \begin{subfigure}[b]{0.24\textwidth}
    \centering
    \includegraphics[width=\linewidth]{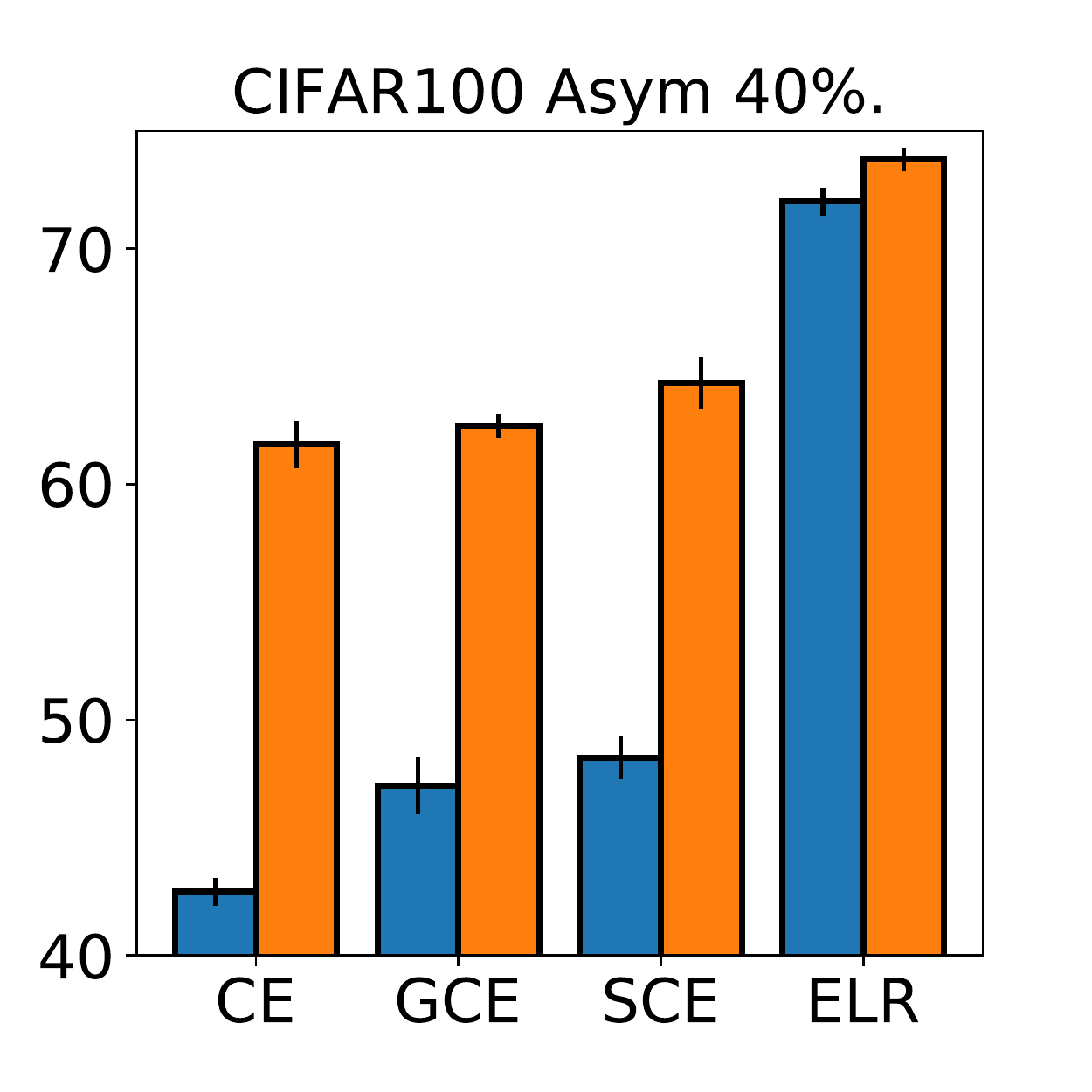}
    \caption{Asym 40\% (C100)}
    \end{subfigure}
    \caption{Test accuracies (\%) on CIFAR-10 and CIFAR-100 under different noisy types and fractions for noise-robust loss approaches. Note that the blue and orange bars are results for without and with FINE, respectively. The average accuracies and standard deviations over three trials are reported. \vspace{-10pt}}
    \label{fig:robust_loss}
\end{figure}

% Forward : 68.94
% D2L : 69.47
% PENCIL : 73.49
% MLNT : 73.47
\subsection{Experiments on Real-World Dataset}

As \autoref{tab:clothing} shows, FINE and F-DivideMix work fairly well on the Clothing 1M dataset compared to other approaches when we reproduce the experimental results under the same settings.
\vspace{-5pt}
\begin{table}[h]
{\vspace{-10pt}\caption{Test accuracy on Clothing1M dataset}\label{tab:clothing}
}\vspace{5pt}
\centering
{\scriptsize %small %footnotesize
\begin{tabular}{c|cccccccccc} \toprule
Method & Standard  & GCE\,\cite{zhang2018generalized} & SCE\,\cite{wang2019symmetric} & ELR\,\cite{liu2020early} & DivideMix\,\cite{Li2020DivideMix} & CORES$^2$\,\cite{cheng2020learning} & FINE & F-DivideMix\\ \midrule
Accuracy & 68.94  & 69.75 & 71.02 & 72.87 & 74.30 & 73.24 & 72.91 & \textbf{74.37}\\  
% \multirow{2}{*}{Method} & \multirow{2}{*}{SCE} & \multirow{2}{*}{ELR} & \multirow{2}{*}{DivideMix} & \multirow{2}{*}{TopoFilter} & \multirow{2}{*}{FINE} & FINE + & FINE +  \\ 
% & & & & & & ELR & DivideMix \\ \midrule
% Accuracy & 71.02 & 72.87 & 74.76 & 74.10 & 72.87 & 00.00 & \textbf{00.00} \\ 
\bottomrule
\end{tabular}\vspace{-10pt}}
\end{table}
% Clothing1M \cite{xiao2015learning} is real-world dataset with inherent noisy labels. This dataset contains 1 million clothing images obtained from online shopping websites with 14 classes: T-shirt, Shirt, Knitwear, Chiffon, Sweater, Hoodie, Windbreaker, Jacket, Down Coat, Suit, Shawl, Dress, Vest, and Underwear. The labels in this dataset are extremely noisy (with an estimate accuracy of 61.54\%). The dataset provides $50k, 14k, 10k$ verified as clean data for training, validation and testing respectively. Instead of using the $50k$ clean training data, we use a randomly sampled pseudo-balanced subset as training set with $260k$ images. For evaluation, we compute the classification accuracy on the $10k$ clean dataset.

% 실험세팅
% 1. Clothing1M: balanced subset of training 260K images (noise rate: 61.54\%), evaluation on 10k clean images, ResNet-50 (pretrained on ImageNet Dataset), resize to 256*256 and cropped 224*224

% 2. WebVision: 

\section{Conclusion}\label{sec5}

This paper introduces FINE for detecting label noise by designing a robust noise detector. Our main idea is utilizing the principal components of latent representations made by eigen decomposition. Most existing detection methods are dependent on the loss values, while such losses may be biased by corrupted classifier\,\cite{lee2019robust, maennel2020neural}. Our methodology alleviates this issue by extracting key information from representations without using explicit knowledge of the noise rates. We show that the FINE detector has an excellent ability to detect noisy labels in theoretical and experimental results. We propose three applications of the FINE detector: sample-selection approach, SSL approach, and collaboration with noise-robust loss functions. FINE yields strong results on standard benchmarks and a real-world dataset for various \textit{LNL} approaches. 

We believe that our work opens the door to detecting samples having noisy labels with explainable results. It is a non-trivial task and of social significance, and thus, our work will have a substantial social impact on DL practitioners because it avoids the a labor-intensive job of checking data label quality. As future work, we hope that our work will trigger interest in the design of new label-noise detectors and bring a fresh perspective for other data-resampling approaches\,(e.g., anomaly detection and novelty detection). The development of robustness against label noise even leads to an improvement in the performance of network trained with data collected through web crawling. We believe that our contribution will lower the barriers to entry for developing robust models for DL practitioners and greatly impact the internet industry. On the other hand, we are concerned that it can be exploited to train robust models using data collected illegally and indiscriminately on the dark web\,(e.g., web crawling), and thus it may raise privacy concerns\,(e.g., copyright). 
\begin{ack}
This work was supported by Institute of Information \& communications Technology Planning \& Evaluation (IITP) grant funded by the Korea government (MSIT) [No.2019-0-00075, Artificial Intelligence Graduate School Program (KAIST)] and [No. 2021-0-00907, Development of Adaptive and Lightweight Edge-Collaborative Analysis Technology for Enabling Proactively Immediate Response and Rapid Learning]. We thank Seongyoon Kim for discussing about the concept of perturbation.
\end{ack}
% \section*{Broader Impact}

% Authors are required to include a statement of the broader impact of their work, including its ethical aspects and future societal consequences. 
% Authors should discuss both positive and negative outcomes, if any. For instance, authors should discuss a) 
% who may benefit from this research, b) who may be put at disadvantage from this research, c) what are the consequences of failure of the system, and d) whether the task/method leverages
% biases in the data. If authors believe this is not applicable to them, authors can simply state this.

% Use unnumbered first level headings for this section, which should go at the end of the paper. {\bf Note that this section does not count towards the eight pages of content that are allowed.}

\bibliography{ref}
\bibliographystyle{plain}

\clearpage

\appendix
\section{Theoretical Guarantees for FINE Algorithm}
This section provides the detailed proof for Theorem \ref{method:thm3} and the lower bounds of the precision and recall. We derive such theorems with the concentration inequalities in probabilistic theory. 

\subsection{Preliminaries}
\paragraph{Spectral Norm.} In this section, we frequently use the spectral norm. For any matrix $\mathbf{A} \in \mathbb{R}^{m\times n}$, the spectral norm are defined as follows: 
$$ \|\mathbf{A} \|_2 = \sup_{x\in \mathbb{R}^n:\|x\|=1} \|\mathbf{A} x \|,$$
where $a_{ij}$ is the $(i,j)$ element of $\mathbf{A}$.

\paragraph{Singular Value Decomposition\,(SVD).} Let $\mathbf{A} \in \mathbb{R}^{m \times n}$. There exist orthogonal matrices that satisfy the following:

\begin{equation}
\begin{gathered}
    U = [\vu_1, \vu_2, \cdots, \vu_m] \in \mathbb{R}^{m \times m} \quad\text{and}\quad V = [\vv_1, \vv_2, \cdots, \vv_n] \in \mathbb{R}^{n \times n} \\
    \text{such that} \quad \mathbf{U}^\top \mathbf{A} \mathbf{V} = diag[\sigma_1, \sigma_2, \cdots, \sigma_{min\{m, n\}}]
\end{gathered}
\end{equation}

where $\sigma_1 \geq \sigma_2 \geq \cdots \geq \sigma_{min\{m, n\}}$ which are called singular values and diag$[\cdot]$ is a diagonal matrix whose diagonal consists of the vector in the bracket $[\cdot]$. (Note that $\mathbf{U}\mathbf{U}^\top = \mathbf{U}^\top \mathbf{U} = \mathbf{I}$ when $\mathbf{U}$ is an orthogonal matrix).

\subsection{Proof of Theorem 1}

We deal with some require lemmas which are used for the proof of \autoref{method:thm3}.

\begin{lemma}\label{thm4:lemma1}
    Let $\mathbf{V}$ and $\mathbf{W}$ be orthogonal matrices and $\mathbf{V} = \left[\mathbf{V}_{1}, \mathbf{V}_{2} \right]$ and $\mathbf{W} = \left[ \mathbf{W}_{1}, \mathbf{W}_{2} \right]$ with $\mathbf{V}_{1}, \mathbf{W}_{1} \in \mathbb{R}^{N \times N}$. Then, we have
    
    \begin{equation*}
        \left\Vert \mathbf{V}_{1} \mathbf{V}_{1}^{\top} - \mathbf{W}_{1} \mathbf{V}_{1}^{\top} \right\Vert_{2} = \left\Vert \mathbf{V}_{1}^{\top} \mathbf{W}_{2}\right\Vert_{2} = \left\Vert \mathbf{V}_{2}^{\top} \mathbf{W}_{1}\right\Vert_{2}.
    \end{equation*}
\end{lemma}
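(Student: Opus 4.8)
The statement is a classical fact about principal angles / CS-decomposition between two orthonormal bases, stated here for the special case where the "first block" is a single direction (so the lemma in the paper is really about $\mathbf{V}_1, \mathbf{W}_1$ being columns, but the proof is uniform in block size). The plan is to reduce everything to the identity $\|\mathbf{V}_1\mathbf{V}_1^\top - \mathbf{W}_1\mathbf{W}_1^\top\|_2 = \|\mathbf{V}_1^\top \mathbf{W}_2\|_2 = \|\mathbf{V}_2^\top \mathbf{W}_1\|_2$ by exploiting orthogonality of $\mathbf V$ and $\mathbf W$. (I note in passing that the left-hand side as typeset, $\mathbf{V}_1\mathbf{V}_1^\top - \mathbf{W}_1\mathbf{V}_1^\top$, looks like a typo for $\mathbf{V}_1\mathbf{V}_1^\top - \mathbf{W}_1\mathbf{W}_1^\top$, i.e. the difference of the two orthogonal projectors onto $\mathrm{range}(\mathbf V_1)$ and $\mathrm{range}(\mathbf W_1)$; I will prove that version.)

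\textbf{Step 1: Use unitary invariance to insert a full basis.} Since $\mathbf{W} = [\mathbf{W}_1, \mathbf{W}_2]$ is orthogonal, left-multiplication by $\mathbf{W}^\top$ and right-multiplication by $\mathbf{W}$ preserve the spectral norm. So I would write
$$
\|\mathbf{V}_1\mathbf{V}_1^\top - \mathbf{W}_1\mathbf{W}_1^\top\|_2 = \|\mathbf{W}^\top(\mathbf{V}_1\mathbf{V}_1^\top - \mathbf{W}_1\mathbf{W}_1^\top)\mathbf{W}\|_2,
$$
and expand $\mathbf{W}^\top \mathbf{V}_1 \mathbf{V}_1^\top \mathbf{W}$ into the block form with blocks $\mathbf{W}_i^\top \mathbf{V}_1\mathbf{V}_1^\top \mathbf{W}_j$, while $\mathbf{W}^\top \mathbf{W}_1\mathbf{W}_1^\top \mathbf{W} = \mathrm{diag}(\mathbf{I}, \mathbf{0})$.

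\textbf{Step 2: Simplify the $(1,1)$ block via the Pythagorean relation.} Because $\mathbf{V} = [\mathbf{V}_1,\mathbf{V}_2]$ is orthogonal, $\mathbf{V}_1\mathbf{V}_1^\top + \mathbf{V}_2\mathbf{V}_2^\top = \mathbf{I}$, hence $\mathbf{W}_1^\top \mathbf{V}_1\mathbf{V}_1^\top \mathbf{W}_1 = \mathbf{I} - \mathbf{W}_1^\top \mathbf{V}_2\mathbf{V}_2^\top \mathbf{W}_1$. Subtracting the identity block kills the $\mathbf{I}$, so after the similarity transform the matrix becomes
$$
\begin{pmatrix} -\mathbf{W}_1^\top \mathbf{V}_2\mathbf{V}_2^\top \mathbf{W}_1 & \mathbf{W}_1^\top \mathbf{V}_1\mathbf{V}_1^\top \mathbf{W}_2 \\ \mathbf{W}_2^\top \mathbf{V}_1\mathbf{V}_1^\top \mathbf{W}_1 & \mathbf{W}_2^\top \mathbf{V}_1\mathbf{V}_1^\top \mathbf{W}_2 \end{pmatrix},
$$
which I would recognize, using $\mathbf{A} = \mathbf{V}_1^\top \mathbf{W}_2$ and $\mathbf{B} = \mathbf{V}_2^\top \mathbf{W}_1$, as having the form $\left(\begin{smallmatrix} -\mathbf{B}^\top\mathbf{B} & \mathbf{A}\mathbf{A}^\top \text{-type} \\ \cdot & \cdot\end{smallmatrix}\right)$ — here I would be careful and instead package the whole thing as $\mathbf{M}\mathbf{M}^\top$-like structure. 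A cleaner route: show the transformed matrix equals $\mathbf{N}^\top \mathbf{N} - \mathbf{N}\mathbf{N}^\top$ for a suitable $\mathbf{N}$, or directly compute its square.

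\textbf{Step 3: Square it and read off the norm.} The most robust finish is: let $\mathbf{P} = \mathbf{V}_1\mathbf{V}_1^\top$, $\mathbf{Q} = \mathbf{W}_1\mathbf{W}_1^\top$ be orthogonal projectors. Then $(\mathbf{P}-\mathbf{Q})^2 = \mathbf{P} + \mathbf{Q} - \mathbf{PQ} - \mathbf{QP}$, and a short computation shows $(\mathbf{P}-\mathbf{Q})^2$ commutes with $\mathbf{P}$, reducing the spectral norm of $\mathbf{P}-\mathbf{Q}$ to $\max(\|\mathbf{P}(\mathbf{I}-\mathbf{Q})\mathbf{P}\|_2, \|(\mathbf{I}-\mathbf{P})\mathbf{Q}(\mathbf{I}-\mathbf{P})\|_2)^{1/2}$; since $\|\mathbf{P}(\mathbf{I}-\mathbf{Q})\|_2 = \|\mathbf{V}_1\mathbf{V}_1^\top\mathbf{W}_2\mathbf{W}_2^\top\|_2 = \|\mathbf{V}_1^\top\mathbf{W}_2\|_2$ (orthonormal columns don't change the norm) and symmetrically for the other term, and by the CS-decomposition these two singular-value lists coincide ($\|\mathbf{V}_1^\top\mathbf{W}_2\|_2 = \|\mathbf{V}_2^\top\mathbf{W}_1\|_2$), both equal $\|\mathbf{P}-\mathbf{Q}\|_2$.

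\textbf{Main obstacle.} The one genuinely non-formal point is the equality $\|\mathbf{V}_1^\top \mathbf{W}_2\|_2 = \|\mathbf{V}_2^\top \mathbf{W}_1\|_2$, i.e. that the sines of the principal angles computed "from either side" agree. I would establish this from the identity $\mathbf{V}_1^\top\mathbf{V}_1 + \mathbf{V}_1^\top\mathbf{V}_2\mathbf{V}_2^\top\mathbf{V}_1$-type relations: concretely, $\mathbf{I} - (\mathbf{V}_1^\top\mathbf{W}_1)^\top(\mathbf{V}_1^\top\mathbf{W}_1) = \mathbf{W}_1^\top(\mathbf I - \mathbf{V}_1\mathbf{V}_1^\top)\mathbf{W}_1 = \mathbf{W}_1^\top\mathbf{V}_2\mathbf{V}_2^\top\mathbf{W}_1 = (\mathbf{V}_2^\top\mathbf{W}_1)^\top(\mathbf{V}_2^\top\mathbf{W}_1)$, and symmetrically $\mathbf{I} - (\mathbf{V}_1^\top\mathbf{W}_1)(\mathbf{V}_1^\top\mathbf{W}_1)^\top = (\mathbf{V}_1^\top\mathbf{W}_2)(\mathbf{V}_1^\top\mathbf{W}_2)^\top$. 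Since a matrix and its transpose have the same singular values, $\mathbf{V}_1^\top\mathbf{W}_1$ shares singular values with $(\mathbf{V}_1^\top\mathbf{W}_1)^\top$, so $1 - \sigma_{\min}^2(\mathbf{V}_1^\top\mathbf{W}_1)$ equals both $\|\mathbf{V}_2^\top\mathbf{W}_1\|_2^2$ and $\|\mathbf{V}_1^\top\mathbf{W}_2\|_2^2$, giving the claim. Everything else is bookkeeping with block matrices and unitary invariance of $\|\cdot\|_2$.
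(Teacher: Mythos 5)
Your proof is correct, and you rightly flag that the displayed left-hand side $\mathbf{V}_1\mathbf{V}_1^{\top}-\mathbf{W}_1\mathbf{V}_1^{\top}$ is a typo for the difference of projectors $\mathbf{V}_1\mathbf{V}_1^{\top}-\mathbf{W}_1\mathbf{W}_1^{\top}$, which is what the paper actually proves. Where you diverge from the paper is in how you reach the intermediate identity $\lVert \mathbf{P}-\mathbf{Q}\rVert_2=\max\{\lVert\mathbf{V}_1^{\top}\mathbf{W}_2\rVert_2,\lVert\mathbf{V}_2^{\top}\mathbf{W}_1\rVert_2\}$: the paper multiplies by \emph{two different} orthogonal matrices, computing $\lVert\mathbf{V}^{\top}(\mathbf{P}-\mathbf{Q})\mathbf{W}\rVert_2$, which collapses immediately to the block anti-diagonal matrix $\left[\begin{smallmatrix}0 & \mathbf{V}_1^{\top}\mathbf{W}_2\\ -\mathbf{V}_2^{\top}\mathbf{W}_1 & 0\end{smallmatrix}\right]$ and hence to the max of the two block norms in one line; your Steps 1--2 conjugate by $\mathbf{W}$ alone, which does not produce that clean form, and you then (correctly) abandon that route in favor of the projector-algebra argument of Step 3, using that $(\mathbf{P}-\mathbf{Q})^2$ commutes with $\mathbf{P}$ to split it as $\mathbf{P}(\mathbf{I}-\mathbf{Q})\mathbf{P}\oplus(\mathbf{I}-\mathbf{P})\mathbf{Q}(\mathbf{I}-\mathbf{P})$ on orthogonal subspaces. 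That argument is sound (the commutation check and the identities $\lVert\mathbf{P}(\mathbf{I}-\mathbf{Q})\mathbf{P}\rVert_2=\lVert\mathbf{V}_1^{\top}\mathbf{W}_2\rVert_2^2$ etc.\ all go through) and is arguably more robust, since it never needs the two-sided trick and generalizes directly to arbitrary pairs of orthogonal projections; the paper's route is shorter but more ad hoc. For the remaining equality $\lVert\mathbf{V}_1^{\top}\mathbf{W}_2\rVert_2=\lVert\mathbf{V}_2^{\top}\mathbf{W}_1\rVert_2$ your "main obstacle" computation --- showing both squares equal $1-\sigma_{\min}^2(\mathbf{V}_1^{\top}\mathbf{W}_1)$ via $\mathbf{I}-\mathbf{W}_1\mathbf{W}_1^{\top}=\mathbf{W}_2\mathbf{W}_2^{\top}$ and the fact that a square matrix and its transpose share singular values --- is essentially identical to the paper's (and is cleaner, since the paper's version of that computation contains a sign slip in the $\max$/$\min$ exchange that yours avoids). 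One cosmetic remark: you only need the \emph{largest} singular values of $\mathbf{V}_1^{\top}\mathbf{W}_2$ and $\mathbf{V}_2^{\top}\mathbf{W}_1$ to agree, not the full lists (which can differ by zeros when the blocks have different shapes), so the appeal to the CS decomposition is stronger than necessary; your explicit identity already delivers exactly what is needed.
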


\begin{proof}
    From the orthogonal invariance property,
    
    \begin{equation*}
    \begin{split}
        \left\Vert \mathbf{V}_{1} \mathbf{V}_{1}^{\top} - \mathbf{W}_{1} \mathbf{W}_{1}^{\top} \right\Vert_{2} &= \left\Vert \mathbf{V}^{\top} (\mathbf{V}_{1} \mathbf{V}_{1}^{\top} - \mathbf{W}_{1} \mathbf{W}_{1}^{\top}) \mathbf{W} \right\Vert_{2} \\
        &= \left\Vert \begin{bmatrix} 0 & \mathbf{V}_{1}^{\top} \mathbf{W}_{2} \\ -\mathbf{V}_{2}^{\top} \mathbf{W}_{1} & 0 \end{bmatrix} \right\Vert_{2} \\
        &= \max \{ \left\Vert \mathbf{V}_{1}^{\top} \mathbf{W}_{2} \right\Vert_{2}, \left\Vert \mathbf{V}_{2}^{\top} \mathbf{W}_{1}\right\Vert_{2} \},
    \end{split}
    \end{equation*}
    
    where the last line can be obtained from $\left\Vert \mathbf{A} \right\Vert_{2}^{2} = \max_{\mathbf{x} \in \mathbb{R}^{N} : \left\Vert \mathbf{x} \right\Vert_{2} =1} \left\Vert \mathbf{A} \mathbf{x} \right\Vert_{2}^{2}$.
    
    Thus, to conclude this proof, it suffices to show that $\left\Vert \mathbf{V}_{1}^{\top} \mathbf{W}_{2} \right\Vert_{2} = \left\Vert \mathbf{V}_{2}^{\top} \mathbf{W}_{1} \right\Vert_{2}$.
    
    Since $\mathbf{W}_{1} \mathbf{W}_{1}^{\top} + \mathbf{W}_{2} \mathbf{W}_{2}^{\top} = \mathbf{I}$,
    
    \begin{equation*}
        \begin{split}
            \left\Vert \mathbf{V}_{1}^{\top} \mathbf{W}_{2} \right\Vert_{2}^{2} 
            &= \max_{\mathbf{x} \in \mathbb{R}^{K}:\left\Vert \mathbf{x} \right\Vert_{2}=1} \mathbf{x}^{\top} \mathbf{V}_{1}^{\top} \mathbf{W}_{2} \mathbf{W}_{2}^{\top} \mathbf{V}_{1} \mathbf{x} \\
            &= \max_{\mathbf{x} \in \mathbb{R}^{K}:\left\Vert \mathbf{x} \right\Vert_{2}=1} \mathbf{V}^{\top} \mathbf{V}_{1}^{\top} (\mathbf{I} - \mathbf{W}_{1} \mathbf{W}_{1}^{\top}) \mathbf{V}_{1}\mathbf{x} \\
            &= \max_{\mathbf{x} \in \mathbb{R}^{K}:\left\Vert \mathbf{x} \right\Vert_{2}=1} 1 - \mathbf{x}^{\top} \mathbf{V}_{1}^{\top} \mathbf{W}_{1} \mathbf{W}_{1}^{\top} \mathbf{V}_{1}\mathbf{x} \\
            &= 1 - \max_{\mathbf{x} \in \mathbb{R}^{K}:\left\Vert \mathbf{x} \right\Vert_{2}=1} \mathbf{x}^{\top} \mathbf{V}_{1}^{\top} \mathbf{W}_{1} \mathbf{W}_{1}^{\top} \mathbf{V}_{1} \mathbf{x} \\
            &= 1 - \lambda_{k} (\mathbf{V}_{1}^{\top} \mathbf{W}_{1} \mathbf{W}_{1}^{\top} \mathbf{V}_{1}) \\
            &= 1 - \sigma_{k}(\mathbf{V}_{1}^{\top} \mathbf{W}_{1})^{2},
        \end{split}
    \end{equation*}
    
    where $\sigma_{k}(\mathbf{V}_{1}^{\top} \mathbf{W}_{1})$ is the $k$-th singular value of $\mathbf{V}_{1}^{\top} \mathbf{W}_{1}$. Analogously, we can show that 
    
    \begin{equation*}
        \left\Vert \mathbf{W}_{1}^{\top} \mathbf{V}_{2} \right\Vert_{2}^{2} = 1 - \sigma_{k} (\mathbf{V}_{1}^{\top} \mathbf{W}_{1})^{2}.
    \end{equation*}
    
    Thus, we have
    
    \begin{equation*}
        \left\Vert \mathbf{V}_{1} \mathbf{V}_{1}^{\top} - \mathbf{W}_{1} \mathbf{W}_{1}^{\top} \right\Vert_{2} = \left\Vert \mathbf{V}_{1}^{\top} \mathbf{W}_{2} \right\Vert_{2} = \left\Vert \mathbf{V}_{2}^{\top} \mathbf{W}_{1} \right\Vert_{2} = \sqrt{1 - \sigma_{k} (\mathbf{V}_{1}^{\top} \mathbf{W}_{1})^{2}}.
    \end{equation*}
\end{proof}

\begin{lemma}\label{thm4:lemma2}
    \textbf{(Weyl's Theorem)} For any real matrices $\mathbf{A}, \mathbf{B} \in \mathbb{R}^{m \times n}$,
    
    \begin{equation*}
        \sigma_{i} (\mathbf{A}+\mathbf{B}) \leq \sigma_{i} (\mathbf{A}) + \sigma_{1} (\mathbf{B}).
    \end{equation*}
\end{lemma}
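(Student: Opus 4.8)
The plan is to prove the inequality through the Courant--Fischer min--max variational characterization of singular values, the same eigenvalue/singular-value bridge that is already used at the end of the proof of Lemma~\ref{thm4:lemma1}. Writing $n$ for the number of columns of $\mathbf{A}$ (and $\mathbf{B}$), recall that for any $\mathbf{M} \in \mathbb{R}^{m \times n}$ the $i$-th largest singular value admits the representation
\begin{equation*}
\sigma_i(\mathbf{M}) = \min_{\substack{S \subseteq \mathbb{R}^n \\ \dim S = n-i+1}} \ \max_{\substack{\mathbf{x} \in S \\ \|\mathbf{x}\|_2 = 1}} \|\mathbf{M}\mathbf{x}\|_2 .
\end{equation*}
First I would either cite this fact or derive it from the Hermitian min--max theorem applied to the positive semidefinite matrix $\mathbf{M}^\top \mathbf{M}$, using $\sigma_i(\mathbf{M}) = \sqrt{\lambda_i(\mathbf{M}^\top \mathbf{M})}$ together with $\|\mathbf{M}\mathbf{x}\|_2^2 = \mathbf{x}^\top \mathbf{M}^\top \mathbf{M}\mathbf{x}$. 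This is exactly the kind of argument already invoked in Lemma~\ref{thm4:lemma1}, so it sits naturally in the paper.

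With the characterization in hand, the argument is a short inequality chain. Let $S^\star$ be an $(n-i+1)$-dimensional subspace attaining the outer minimum for $\mathbf{A}$, so that $\max_{\mathbf{x}\in S^\star,\,\|\mathbf{x}\|_2=1}\|\mathbf{A}\mathbf{x}\|_2 = \sigma_i(\mathbf{A})$. Feeding this same subspace into the min--max expression for $\mathbf{A}+\mathbf{B}$ (which can only increase the value, since the outer operation is a minimum over all admissible subspaces), and then applying the triangle inequality $\|(\mathbf{A}+\mathbf{B})\mathbf{x}\|_2 \le \|\mathbf{A}\mathbf{x}\|_2 + \|\mathbf{B}\mathbf{x}\|_2$ together with the subadditivity $\max(f+g)\le \max f + \max g$, gives
\begin{equation*}
\sigma_i(\mathbf{A}+\mathbf{B}) \ \le \ \max_{\substack{\mathbf{x}\in S^\star \\ \|\mathbf{x}\|_2=1}} \|(\mathbf{A}+\mathbf{B})\mathbf{x}\|_2 \ \le \ \max_{\substack{\mathbf{x}\in S^\star \\ \|\mathbf{x}\|_2=1}} \|\mathbf{A}\mathbf{x}\|_2 \ + \ \max_{\substack{\mathbf{x}\in S^\star \\ \|\mathbf{x}\|_2=1}} \|\mathbf{B}\mathbf{x}\|_2 .
\end{equation*}
The first maximum equals $\sigma_i(\mathbf{A})$ by the choice of $S^\star$, while the second is bounded by the maximum over the full unit sphere, namely $\max_{\|\mathbf{x}\|_2=1}\|\mathbf{B}\mathbf{x}\|_2 = \|\mathbf{B}\|_2 = \sigma_1(\mathbf{B})$. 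Combining these yields $\sigma_i(\mathbf{A}+\mathbf{B}) \le \sigma_i(\mathbf{A}) + \sigma_1(\mathbf{B})$, as claimed.

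The only delicate point---and hence the main obstacle---is the justification of the min--max characterization itself; everything after it is elementary. If one prefers a fully self-contained route that avoids treating it as a black box, the cleanest alternative is the symmetric dilation $\widetilde{\mathbf{M}} = \bigl[\begin{smallmatrix} \mathbf{0} & \mathbf{M} \\ \mathbf{M}^\top & \mathbf{0} \end{smallmatrix}\bigr]$, whose nonzero eigenvalues are $\pm\,\sigma_j(\mathbf{M})$. Then $\sigma_i(\mathbf{A}+\mathbf{B})$, $\sigma_i(\mathbf{A})$, and $\sigma_1(\mathbf{B})$ appear as eigenvalues of $\widetilde{\mathbf{A}}+\widetilde{\mathbf{B}}$, $\widetilde{\mathbf{A}}$, and $\widetilde{\mathbf{B}}$, and the statement reduces to the classical additive Weyl inequality $\lambda_i(\mathbf{P}+\mathbf{Q}) \le \lambda_i(\mathbf{P}) + \lambda_1(\mathbf{Q})$ for symmetric matrices, itself proven by the identical one-line Courant--Fischer comparison. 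I would present the direct subspace argument above as the proof, and mention the dilation only as an alternative should an eigenvalue-based derivation be judged more transparent.
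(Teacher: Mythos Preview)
Your proof is correct and takes essentially the same approach as the paper: both argue via the Courant--Fischer variational characterization of singular values combined with the triangle inequality and the bound $\|\mathbf{B}\mathbf{x}\|_2 \le \sigma_1(\mathbf{B})$. The only cosmetic difference is that the paper uses the $\sup$--$\inf$ form over $i$-dimensional subspaces acting on the left (row vectors $\mathbf{v}^\top \mathbf{M}$), whereas you use the dual $\min$--$\max$ form over $(n-i+1)$-dimensional subspaces acting on the right; the mechanics are otherwise identical.
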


\begin{proof}
    From the definition of the SVD, for any given matrix $\mathbf{X} \in \mathbb{R}^{m \times n}$.
    
    \begin{equation*}
        \begin{split}
            \sigma_{i} (\mathbf{X}) &= \sup_{\mathbf{V}: dim(\mathbf{V}) = i} \inf_{\mathbf{v} \in \mathbf{V}: \left\Vert \mathbf{v} \right\Vert_{2}=1} \left\Vert \mathbf{v}^{\top}(\mathbf{A}+\mathbf{B}) \right\Vert_{2} \\
            & \leq \sup_{\mathbf{V}: dim(\mathbf{V}) = i} \inf_{\mathbf{v} \in \mathbf{V}: \left\Vert \mathbf{v} \right\Vert_{2}=1} \left\Vert \mathbf{v}^{\top} \mathbf{A} \right\Vert_{2} + \left\Vert \mathbf{v}^{\top} \mathbf{B} \right\Vert_{2} \\
            & \leq \sup_{\mathbf{V}: dim(\mathbf{V}) = i} \inf_{\mathbf{v} \in \mathbf{V}: \left\Vert \mathbf{v} \right\Vert_{2}=1} \left\Vert \mathbf{v}^{\top} \mathbf{A} \right\Vert_{2} + \left\Vert \mathbf{B} \right\Vert_{2} \\
            & \leq \sigma_{i}(\mathbf{A}) + \sigma_{1}(\mathbf{B}).
        \end{split}
    \end{equation*}
\end{proof}

\begin{lemma} \label{thm4:lemma3}
    \textbf{(David-Kahan sin Theorem)} For given symmetric matrices $\mathbf{A}, \mathbf{B} \in \mathbb{R}^{n \times n}$, let $\mathbf{A} = \mathbf{U} \boldsymbol{\Lambda} \mathbf{U}^{\top}$ and $\mathbf{A} + \mathbf{B} = \bar{\mathbf{U}} \bar{\boldsymbol{\Lambda}} \bar{\mathbf{U}}^{\top}$ be eigenvalue decomposition of $\mathbf{A}$ and $\mathbf{A}+\mathbf{B}$. Then,
    
    \begin{equation*}
        \left\Vert \mathbf{U}_{1:k} (\mathbf{U}_{1:k})^{\top} - \bar{\mathbf{U}}_{1:k} (\bar{\mathbf{U}}_{1:k})^{\top} \right\Vert_{2} \leq \frac{\left\lVert \mathbf{B} \right\rVert_{2}}{\lambda_{k}(\mathbf{A}) - \lambda_{k+1}(\mathbf{A}) - \left\lVert \mathbf{B} \right\rVert_{2}},
    \end{equation*}
    
    where $\mathbf{U}_{1:k}$ and $\bar{\mathbf{U}}_{1:k}$ denote the first k columns of $\mathbf{U}$ and $\bar{\mathbf{U}}$, respectively.
\end{lemma}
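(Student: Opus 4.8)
The plan is to run the standard Weyl-plus-Sylvester argument underlying the Davis--Kahan $\sin\Theta$ theorem, using Lemma~\ref{thm4:lemma1} to convert the projector distance into the norm of an off-diagonal block and Lemma~\ref{thm4:lemma2} to control the perturbed spectral gap. Order both spectra as $\lambda_1 \ge \lambda_2 \ge \cdots$; write $\boldsymbol{\Lambda}_{1:k}, \bar{\boldsymbol{\Lambda}}_{1:k}$ and $\boldsymbol{\Lambda}_{k+1:n}$ for the leading and trailing diagonal blocks, and set $\mathbf{P} = \mathbf{U}_{1:k}\mathbf{U}_{1:k}^{\top}$, $\bar{\mathbf{P}} = \bar{\mathbf{U}}_{1:k}\bar{\mathbf{U}}_{1:k}^{\top}$. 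We may assume $\lambda_k(\mathbf{A}) - \lambda_{k+1}(\mathbf{A}) > \|\mathbf{B}\|_2$, since otherwise the right-hand side is non-positive and there is nothing to prove. Applying Lemma~\ref{thm4:lemma1} with $\mathbf{V} = \mathbf{U}$, $\mathbf{W} = \bar{\mathbf{U}}$, $\mathbf{V}_1 = \mathbf{U}_{1:k}$, $\mathbf{W}_1 = \bar{\mathbf{U}}_{1:k}$ gives $\|\mathbf{P} - \bar{\mathbf{P}}\|_2 = \|\mathbf{Z}\|_2$ with $\mathbf{Z} := \mathbf{U}_{k+1:n}^{\top}\bar{\mathbf{U}}_{1:k}$. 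The key structural fact is that $\mathbf{Z}$ obeys a Sylvester identity: starting from $(\mathbf{A}+\mathbf{B})\bar{\mathbf{U}}_{1:k} = \bar{\mathbf{U}}_{1:k}\bar{\boldsymbol{\Lambda}}_{1:k}$, left-multiplying by $\mathbf{U}_{k+1:n}^{\top}$, and using $\mathbf{U}_{k+1:n}^{\top}\mathbf{A} = \boldsymbol{\Lambda}_{k+1:n}\mathbf{U}_{k+1:n}^{\top}$ (valid since $\mathbf{A}$ is symmetric), one obtains
\[
\mathbf{Z}\,\bar{\boldsymbol{\Lambda}}_{1:k} - \boldsymbol{\Lambda}_{k+1:n}\,\mathbf{Z} = \mathbf{U}_{k+1:n}^{\top}\,\mathbf{B}\,\bar{\mathbf{U}}_{1:k}.
\]

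I would then estimate the two sides separately. For the right-hand side, submultiplicativity together with the orthonormality of the columns of $\mathbf{U}_{k+1:n}$ and $\bar{\mathbf{U}}_{1:k}$ gives $\|\mathbf{U}_{k+1:n}^{\top}\mathbf{B}\,\bar{\mathbf{U}}_{1:k}\|_2 \le \|\mathbf{B}\|_2$. For the left-hand side I would use the separated-spectra expansion bound for the Sylvester operator: take a top singular pair $(p_1, q_1)$ of $\mathbf{Z}$, so that $\mathbf{Z}q_1 = \|\mathbf{Z}\|_2\,p_1$ and $p_1^{\top}\mathbf{Z} = \|\mathbf{Z}\|_2\,q_1^{\top}$; then, since $q_1^{\top}\bar{\boldsymbol{\Lambda}}_{1:k}q_1 \ge \bar\lambda_k$ and $p_1^{\top}\boldsymbol{\Lambda}_{k+1:n}p_1 \le \lambda_{k+1}$ by Rayleigh quotients for diagonal matrices on unit vectors,
\[
\|\mathbf{Z}\bar{\boldsymbol{\Lambda}}_{1:k} - \boldsymbol{\Lambda}_{k+1:n}\mathbf{Z}\|_2 \ge p_1^{\top}\big(\mathbf{Z}\bar{\boldsymbol{\Lambda}}_{1:k} - \boldsymbol{\Lambda}_{k+1:n}\mathbf{Z}\big)q_1 = \|\mathbf{Z}\|_2\,(\bar\lambda_k - \lambda_{k+1}).
\]
Combining the two estimates yields $(\bar\lambda_k - \lambda_{k+1})\,\|\mathbf{Z}\|_2 \le \|\mathbf{B}\|_2$.

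It remains to lower-bound the perturbed gap $\bar\lambda_k - \lambda_{k+1} = \lambda_k(\mathbf{A}+\mathbf{B}) - \lambda_{k+1}(\mathbf{A})$. Here I would invoke the eigenvalue form of Weyl's inequality, $\lambda_k(\mathbf{A}+\mathbf{B}) \ge \lambda_k(\mathbf{A}) + \lambda_n(\mathbf{B}) \ge \lambda_k(\mathbf{A}) - \|\mathbf{B}\|_2$, which follows from the same Courant--Fischer min-max argument used to prove Lemma~\ref{thm4:lemma2} (that lemma is phrased for singular values, but the argument carries over verbatim to Hermitian eigenvalues). Hence $\bar\lambda_k - \lambda_{k+1} \ge \lambda_k(\mathbf{A}) - \lambda_{k+1}(\mathbf{A}) - \|\mathbf{B}\|_2 > 0$, and dividing the previous display by this quantity gives $\|\mathbf{P} - \bar{\mathbf{P}}\|_2 = \|\mathbf{Z}\|_2 \le \|\mathbf{B}\|_2 / \big(\lambda_k(\mathbf{A}) - \lambda_{k+1}(\mathbf{A}) - \|\mathbf{B}\|_2\big)$, as claimed.

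The step I expect to require the most care is the lower bound on the left-hand side, i.e., showing that the Sylvester operator $\mathbf{Z} \mapsto \mathbf{Z}\bar{\boldsymbol{\Lambda}}_{1:k} - \boldsymbol{\Lambda}_{k+1:n}\mathbf{Z}$ expands the spectral norm by at least the separation $\bar\lambda_k - \lambda_{k+1}$: a direct entrywise estimate only yields this in Frobenius norm, so one genuinely needs the top-singular-pair argument above (or an equivalent operator-theoretic one). A smaller technical point is the passage from Lemma~\ref{thm4:lemma2} to the signed-eigenvalue Weyl bound for the possibly indefinite symmetric matrices $\mathbf{A}$ and $\mathbf{A}+\mathbf{B}$; this is routine but should be stated explicitly, since Lemma~\ref{thm4:lemma2} itself is written for singular values.
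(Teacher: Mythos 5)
Your proof is correct, and it follows the same overall blueprint as the paper's: Lemma~\ref{thm4:lemma1} to reduce the projector distance to the spectral norm of an off-diagonal block, Weyl's inequality (Lemma~\ref{thm4:lemma2}) to transfer the spectral gap from $\mathbf{A}$ to $\mathbf{A}+\mathbf{B}$, and a two-sided estimate that isolates $\|\mathbf{B}\|_2$. The one place you diverge is in how the key estimate is extracted. The paper computes $\bar{\mathbf{U}}_{k+1:n}^{\top}(\mathbf{A}+\mathbf{B})\mathbf{U}_{1:k}$ in two ways and sandwiches its norm using submultiplicativity, i.e.\ $\|\bar{\boldsymbol{\Lambda}}_{k+1:n}\mathbf{Z}'\|_2 \le \lambda_{k+1}(\mathbf{A}+\mathbf{B})\|\mathbf{Z}'\|_2$ from above and $\|\mathbf{Z}'\boldsymbol{\Lambda}_{1:k}\|_2 \ge \lambda_k(\mathbf{A})\|\mathbf{Z}'\|_2$ from below; both of these require the eigenvalues involved to be non-negative, which is why the paper opens with the ``shift by $c\mathbf{I}$'' reduction. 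You instead package the same two computations as a Sylvester identity for the mirror-image block $\mathbf{Z}=\mathbf{U}_{k+1:n}^{\top}\bar{\mathbf{U}}_{1:k}$ and lower-bound the Sylvester operator's expansion via a top singular pair and Rayleigh quotients of the diagonal blocks. That argument is sign-agnostic, so you never need the non-negativity normalization; you also correctly flag that the separation bound for the Sylvester operator in spectral norm genuinely needs the singular-pair (or an equivalent operator-theoretic) argument rather than an entrywise one. The trade-off is symmetric: the paper's version is slightly more elementary per step but needs the preliminary shift; yours is marginally more self-contained on indefinite matrices. Both, like the lemma as stated, implicitly assume $\lambda_k(\mathbf{A})-\lambda_{k+1}(\mathbf{A})>\|\mathbf{B}\|_2$ so that the final division is legitimate.
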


\begin{proof}
    Assume that $\mathbf{A}$ and $\mathbf{A}+\mathbf{B}$ have non-negative eigenvalues. If not, there exists a large enough constant $c$ to make $\tilde{\mathbf{A}} + cI$ so that $\tilde{\mathbf{A}}$ and $\tilde{\mathbf{A}}+\mathbf{B}$ become positive semi-definite matrices. Note that $\mathbf{A}$ (resp. $\mathbf{A}+\mathbf{B}$) and $\tilde{\mathbf{A}}$ (resp. $\tilde{\mathbf{A}}+\mathbf{B}$) share the same eigenvectors and eigenvalue gaps $\lambda_{i}(\mathbf{A}) - \lambda_{i+1}(\mathbf{A}).$
    
    From the \textit{Lemma} \ref{thm4:lemma2}, we have
    
    \begin{equation} \label{eq:1}
        \lambda_{i}(\mathbf{A}) - \left\lVert \mathbf{B} \right\rVert_{2} \leq \lambda_{i}(\mathbf{A}+\mathbf{B}) \leq \lambda_{i}(\mathbf{A}) + \left\lVert \mathbf{B} \right\rVert_{2}.
    \end{equation}
    
    Thus,
        
    \begin{align}
        (\lambda_{k+1}(\mathbf{A}) + \left\lVert \mathbf{B} \right\rVert_{2}) ||(\bar{\mathbf{U}}_{k+1:n})^{\top} \mathbf{U}_{k+1:n}||_{2} 
        &\geq ||(\bar{\mathbf{U}}_{k+1:n})^{\top} (\mathbf{A}+\mathbf{B}) \mathbf{U}_{1:k}||_{2} \label{eq:2}\\
        &\geq ||(\bar{\mathbf{U}}_{k+1:n})^{\top} \mathbf{A} \mathbf{U}_{1:k}||_{2} - \left\lVert \mathbf{B} \right\rVert_{2} \label{eq:3}\\
        &\geq \lambda_{k}(\mathbf{A}) ||(\bar{\mathbf{U}}_{k+1:n})^{\top} \mathbf{U}_{1:k}||_{2} - \left\lVert \mathbf{B} \right\rVert_{2} \label{eq:4}
    \end{align}
    
    From (\ref{eq:4}), we have
    
    \begin{equation*}
        ||\mathbf{U}_{1:k} (\mathbf{U}_{1:k})^{\top} - \bar{\mathbf{U}}_{1:k} (\bar{\mathbf{U}}_{1:k})^{\top} ||_{2} \leq \frac{\left\lVert \mathbf{B} \right\rVert_{2}}{\lambda_{k}(\mathbf{A}) - \lambda_{k+1}(\mathbf{A}) - \left\lVert \mathbf{B} \right\rVert_{2}}.
    \end{equation*}
    
    \textit{Proof of (\ref{eq:2})} : Since the columns of $\bar{\mathbf{U}}_{k+1:n}$ are singular vectors of $\mathbf{A} + \mathbf{B}$,
    
    \begin{equation*}
        (\bar{\mathbf{U}}_{k+1:n})^{\top} (\mathbf{A}+\mathbf{B}) \mathbf{U}_{1:k} = \bar{\boldsymbol{\Lambda}}_{k+1:n} (\bar{\mathbf{U}}_{k+1:n})^{\top} \mathbf{U}_{1:k}.
    \end{equation*}
    
    Therefore,
    
    \begin{equation*}
        \left\Vert (\bar{\mathbf{U}}_{k+1:n})^{\top} (\mathbf{A}+\mathbf{B}) \mathbf{U}_{1:k} \right\Vert_{2} \leq \left\Vert \bar{\boldsymbol{\Lambda}}_{k+1:n} \right\Vert_{2} \left\Vert (\bar{\mathbf{U}}_{k+1:n})^{\top} \mathbf{U}_{1:k} \right\Vert_{2} = \lambda_{k+1} (\mathbf{A}+\mathbf{B}) \left\Vert (\bar{\mathbf{U}}_{k+1:n})^{\top} \mathbf{U}_{1:k} \right\Vert_{2}
    \end{equation*}
    
    From (\ref{eq:1}), we have $\lambda_{k+1}(\mathbf{A}+\mathbf{B}) \leq \lambda_{k+1}(\mathbf{A}) + \left\Vert \mathbf{B} \right\Vert_{2}$
    
    \textit{Proof of (\ref{eq:3})} : From the triangle inequality,
    
    \begin{equation*}
    \begin{split}
        \left\Vert (\bar{\mathbf{U}}_{k+1:n})^{\top} \mathbf{A} \mathbf{U}_{1:k} \right\Vert_{2} 
        &= \left\Vert (\bar{\mathbf{U}}_{k+1:n})^{\top} (\mathbf{A}+\mathbf{B}) \mathbf{U}_{1:k} + (-\bar{\mathbf{U}}_{k+1:n})^{\top} \mathbf{B} \mathbf{U}_{1:k} \right\Vert_{2} \\
        &\leq \left\Vert \bar{\mathbf{U}}_{k+1:n})^{\top} (\mathbf{A}+\mathbf{B}) \mathbf{U}_{1:k} \right\Vert_{2} + \left\Vert (-\bar{\mathbf{U}}_{k+1:n})^{\top} \mathbf{B} \mathbf{U}_{1:k} \right\Vert_{2}.
    \end{split}
    \end{equation*}
    
    We have
    
    \begin{equation*}
        \left\Vert (-\bar{\mathbf{U}}_{k+1:n})^{\top} \mathbf{B} \mathbf{U}_{1:k} \right\Vert_{2} \leq \left\Vert (-\bar{\mathbf{U}}_{k+1:n})^{\top} \right\Vert_{2} \left\Vert \mathbf{B} \right\Vert_{2} \left\Vert \mathbf{U}_{1:k} \right\Vert_{2} = \left\Vert \mathbf{B} \right\Vert_{2}.
    \end{equation*}
    
    \textit{Proof of (\ref{eq:4})} : Since the columns of $\mathbf{U}_{1:k}$ are singular vectors of $\mathbf{A}$,
    
    \begin{equation*}
        (\bar{\mathbf{U}}_{k+1:n})^{\top} \mathbf{A} \mathbf{U}_{1:k} = (\bar{\mathbf{U}}_{k+1:n})^{\top} \mathbf{U}_{1:k} \boldsymbol{\Lambda}_{1:k}
    \end{equation*}
    
    Therefore,
    
    \begin{equation*}
        \left\Vert (\bar{\mathbf{U}}_{k+1:n})^{\top} \mathbf{A} \mathbf{U}_{1:k} \right\Vert_{2} = \left\Vert (\bar{\mathbf{U}}_{k+1:n})^{\top} \mathbf{U}_{1:k} \boldsymbol{\Lambda}_{1:k} \right\Vert_{2} \geq \left\Vert (\bar{\mathbf{U}}_{k+1:n})^{\top} \mathbf{U}_{1:k} \right\Vert_{2} \lambda_{k} (\mathbf{A}).
    \end{equation*}
\end{proof}

\begin{lemma}\label{thm1:lemma5}
    Let $\mathbf{M} \in S^{d \times d}$ and let $N_{\epsilon}$ be an $\epsilon$-net of $\mathbb{S}^{d-1}$. Then
    
    \begin{equation*}
        \left\lVert \mathbf{M} \right\rVert_{2} \leq \frac{1}{1 - 2\epsilon} \max_{\vy \in N_{\epsilon}} | \vy^{\top} \mathbf{M} \vy |
    \end{equation*}
\end{lemma}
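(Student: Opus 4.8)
The plan is a standard $\epsilon$-net (covering) argument combined with the variational characterization of the spectral norm available for \emph{symmetric} matrices. First I would recall that, since $\mathbf{M}\in S^{d\times d}$, diagonalizing $\mathbf{M}$ shows that the quadratic form $\vx\mapsto\vx^{\top}\mathbf{M}\vx$ attains its extreme values over $\mathbb{S}^{d-1}$ at eigenvectors, and the larger of the two extreme magnitudes equals $\lVert\mathbf{M}\rVert_2$; hence $\lVert\mathbf{M}\rVert_2=\sup_{\vx\in\mathbb{S}^{d-1}}|\vx^{\top}\mathbf{M}\vx|$. Because $\mathbb{S}^{d-1}$ is compact and the map is continuous, this supremum is attained at some $\vx^{\ast}\in\mathbb{S}^{d-1}$. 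By the definition of an $\epsilon$-net, I can then choose $\vy\in N_{\epsilon}$ with $\lVert\vx^{\ast}-\vy\rVert\le\epsilon$.

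The key step is the telescoping identity
\[
(\vx^{\ast})^{\top}\mathbf{M}\vx^{\ast}-\vy^{\top}\mathbf{M}\vy=(\vx^{\ast}-\vy)^{\top}\mathbf{M}\vx^{\ast}+\vy^{\top}\mathbf{M}(\vx^{\ast}-\vy),
\]
after which each term is controlled by Cauchy--Schwarz together with the operator-norm inequality: $|(\vx^{\ast}-\vy)^{\top}\mathbf{M}\vx^{\ast}|\le\lVert\vx^{\ast}-\vy\rVert\,\lVert\mathbf{M}\rVert_2\,\lVert\vx^{\ast}\rVert\le\epsilon\lVert\mathbf{M}\rVert_2$ and likewise $|\vy^{\top}\mathbf{M}(\vx^{\ast}-\vy)|\le\lVert\vy\rVert\,\lVert\mathbf{M}\rVert_2\,\lVert\vx^{\ast}-\vy\rVert\le\epsilon\lVert\mathbf{M}\rVert_2$, using $\lVert\vy\rVert=1$. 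Therefore
\[
\lVert\mathbf{M}\rVert_2=|(\vx^{\ast})^{\top}\mathbf{M}\vx^{\ast}|\le|\vy^{\top}\mathbf{M}\vy|+2\epsilon\lVert\mathbf{M}\rVert_2\le\max_{\vy\in N_{\epsilon}}|\vy^{\top}\mathbf{M}\vy|+2\epsilon\lVert\mathbf{M}\rVert_2.
\]

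Finally, rearranging (under the implicit hypothesis $\epsilon<\tfrac12$, so that $1-2\epsilon>0$) yields $(1-2\epsilon)\lVert\mathbf{M}\rVert_2\le\max_{\vy\in N_{\epsilon}}|\vy^{\top}\mathbf{M}\vy|$, which is the claimed bound. There is no serious obstacle in this argument; the only points that require care are (i) the reduction $\lVert\mathbf{M}\rVert_2=\sup_{\lVert\vx\rVert=1}|\vx^{\top}\mathbf{M}\vx|$, which genuinely uses symmetry and fails for general matrices, and (ii) correctly accounting for both error terms, so that the slack is $2\epsilon$ rather than $\epsilon$.
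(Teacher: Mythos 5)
Your proof is correct and takes essentially the same route as the paper's: an $\epsilon$-net perturbation argument in which $(\vx^{*})^{\top}\mathbf{M}\vx^{*}-\vy^{\top}\mathbf{M}\vy$ is telescoped into two terms, each bounded by $\epsilon\left\Vert \mathbf{M}\right\Vert_{2}$ via Cauchy--Schwarz, and the resulting $2\epsilon\left\Vert \mathbf{M}\right\Vert_{2}$ slack is absorbed by rearrangement. If anything you are slightly more careful than the paper, which writes $\left\Vert \mathbf{M}\right\Vert_{2}=\max_{\vx}\vx^{\top}\mathbf{M}\vx$ without the absolute value; your version $\sup_{\vx\in\mathbb{S}^{d-1}}\lvert\vx^{\top}\mathbf{M}\vx\rvert$, together with the explicit remark that this identification requires symmetry and that $\epsilon<\tfrac12$ is needed to rearrange, is the right way to state it.
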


\begin{proof}
    Let $\mathbf{M} \in S^{d \times d}$ and let $N_{\epsilon}$ be an $\epsilon$-net of $\mathbb{S}^{d-1}$. Furthermore, we define $\vy \in N_{\epsilon}$ satisfy $\left\lVert \vx - \vy \right\rVert_{2} \leq \epsilon$. Then,

    \begin{equation}
    \begin{split}
        | \vx^{\mathbf{M}}\vx - \vy^{\top}\mathbf{M}\vy | &= |\vx^{\top} \mathbf{M} (\vx - \vy) + \vy^{\top} \mathbf{M} (\vx-\vy) | \\
        & \leq |\vx^{\top} \mathbf{M} (\vx-\vy) | + |\vy^{\top} \mathbf{M} (\vx-\vy)|
    \end{split}
    \end{equation}
    
    Looking at $|\vx^{\top}\mathbf{M}(\vx-\vy)|$ we have
    
    \begin{equation}
        \begin{split}
            |\vx^{\top}\mathbf{M}(\vx-\vy)| &\leq \left\lVert \mathbf{M}(\vx-\vy) \right\rVert_{2} \left\lVert \vx \right\rVert_{2} \\
            &\leq \left\lVert \mathbf{M} \right\rVert_{2} \left\lVert \vx - \vy \right\rVert_{2} \left\lVert \vx \right\rVert_{2} \\
            &\leq \left\lVert \mathbf{M} \right\rVert_{2} \epsilon
        \end{split}
    \end{equation}
    
    Applying the same argument to $\vy^{\top} \mathbf{M} (\vx-\vy) |$ gives us $|\vx^{\mathbf{M}}\vx - \vy^{\top} \mathbf{M} \vy | \leq 2 \epsilon \left\lVert \mathbf{M} \right\rVert_{2}$. To complete the proof, we see that $\left\lVert \mathbf{M} \right\rVert_{2} = \max_{\vx \in \mathbb{S}^{d-1}} \vx^{\top} \mathbf{M} \vx \leq 2 \epsilon \left\lVert \mathbf{M} \right\rVert_{2} + \max_{\vy \in N_{\epsilon}} \vy^{\top} \mathbf{M} \vy$. Rearranging the equation gives $\left\lVert \mathbf{M} \right\rVert_{2} \leq \frac{1}{1 - 2\epsilon} \max_{\vy \in N_{\epsilon}} \vy^{\top} \mathbf{M} \vy$ as desired.
\end{proof}

\begin{lemma}\label{thm1:lemma4}
    Let $\vx_1, \ldots, \vx_n$ be an i.i.d sequence of $\sigma$ sub-gaussian random vectors such that $\mathbb{V} [\vx_1] = \boldsymbol{\Sigma}$ and let $\hat{\boldsymbol{\Sigma}}_{n} \coloneqq \frac{1}{N} \sum_{i=1}^{n} \vx_{i} \vx_{i}^{\top}$ be the empirical gram matrix. Then, there exists a universal constant $C > 0$ such that, for $\delta \in (0, 1),$ with probability at least $1 - \delta$
    
    \begin{equation*}
        \frac{\left\lVert \hat{\boldsymbol{\Sigma}}_{n} - \boldsymbol{\Sigma} \right\rVert_{2}}{\sigma^{2}} \leq C \max \{ \sqrt{\frac{d + \log(2/\delta)}{n}}, \frac{d + \log(2/\delta)}{n} \}
    \end{equation*}
\end{lemma}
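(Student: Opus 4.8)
The plan is to reduce the spectral norm to a finite maximum via the $\epsilon$-net Lemma~\ref{thm1:lemma5}, control each scalar quadratic form by Bernstein's inequality for sub-exponential sums, and close with a union bound over the net. Throughout, $\boldsymbol{\Sigma}$ denotes $\mathbb{E}[\vx_1 \vx_1^\top] = \mathbb{E}[\hat{\boldsymbol{\Sigma}}_n]$, so that $\hat{\boldsymbol{\Sigma}}_n - \boldsymbol{\Sigma}$ is exactly centered. First, since $\mathbf{M} \coloneqq \hat{\boldsymbol{\Sigma}}_n - \boldsymbol{\Sigma}$ is symmetric, fix $\epsilon = 1/4$ and let $N_{1/4}$ be a $1/4$-net of $\mathbb{S}^{d-1}$ with $|N_{1/4}| \le 9^d$ (the standard volumetric bound $(1+2/\epsilon)^d$). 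Lemma~\ref{thm1:lemma5} then gives
\[
\left\lVert \hat{\boldsymbol{\Sigma}}_n - \boldsymbol{\Sigma} \right\rVert_2 \le 2 \max_{\vy \in N_{1/4}} \left| \vy^\top (\hat{\boldsymbol{\Sigma}}_n - \boldsymbol{\Sigma}) \vy \right|,
\]
so it suffices to bound the right-hand side uniformly over the net.

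Next, fix a unit vector $\vy$ and set $\xi_i \coloneqq \langle \vx_i, \vy \rangle$, so that $\vy^\top (\hat{\boldsymbol{\Sigma}}_n - \boldsymbol{\Sigma}) \vy = \tfrac{1}{n} \sum_{i=1}^n (\xi_i^2 - \mathbb{E}[\xi_i^2])$. By the definition of a $\sigma$-sub-Gaussian random vector, each $\xi_i$ is a scalar sub-Gaussian variable with parameter of order $\sigma$; hence $\xi_i^2$ is sub-exponential, and the centered variable $\xi_i^2 - \mathbb{E}[\xi_i^2]$ has sub-exponential norm of order $\sigma^2$. Applying Bernstein's inequality for independent centered sub-exponential random variables yields, for every $t \ge 0$ and a universal constant $c > 0$,
\[
\mathbb{P}\!\left[ \left| \tfrac{1}{n} \sum_{i=1}^n (\xi_i^2 - \mathbb{E}[\xi_i^2]) \right| \ge \sigma^2 t \right] \le 2 \exp\!\left( -c\, n \min\{ t^2, t \} \right).
\]

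Now take a union bound over $N_{1/4}$:
\[
\mathbb{P}\!\left[ \max_{\vy \in N_{1/4}} \left| \vy^\top (\hat{\boldsymbol{\Sigma}}_n - \boldsymbol{\Sigma}) \vy \right| \ge \sigma^2 t \right] \le 2 \cdot 9^d \exp\!\left( -c\, n \min\{ t^2, t \} \right),
\]
and choose $t$ so that the exponent balances the net cardinality and the target probability, i.e.\ $c\, n \min\{ t^2, t\} = d \log 9 + \log(2/\delta)$. Since $d \log 9 + \log(2/\delta) \le C'(d + \log(2/\delta))$, this forces $\min\{t^2, t\} \le C'(d + \log(2/\delta))/n$; solving the two regimes separately ($t \le \sqrt{C'(d+\log(2/\delta))/n}$ when that quantity is at most $1$, and $t \le C'(d+\log(2/\delta))/n$ otherwise) gives $t \le C \max\{ \sqrt{(d+\log(2/\delta))/n},\, (d+\log(2/\delta))/n \}$. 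Combining with the factor $2$ from the net step (absorbed into $C$) yields the claimed bound with probability at least $1 - \delta$.

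The main obstacle is the middle step: one must carefully unpack the definition of a $\sigma$-sub-Gaussian vector (every one-dimensional marginal $\langle \vx, \vy\rangle$ is scalar sub-Gaussian with parameter comparable to $\sigma$), verify that squaring a sub-Gaussian variable produces a sub-exponential one with the correct $\sigma^2$ scaling after centering, and invoke the two-regime Bernstein inequality with matching absolute constants. The remaining ingredients---the $9^d$ net cardinality, merging $d \log 9$ with $\log(2/\delta)$, and the $\min$/$\max$ dichotomy between the two concentration regimes---are routine bookkeeping.
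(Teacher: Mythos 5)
Your proof is correct and follows essentially the same route as the paper's: reduce the spectral norm to a maximum over a $1/4$-net via Lemma~\ref{thm1:lemma5}, observe that each quadratic form is a centered average of sub-exponential variables (squares of $\sigma$-sub-Gaussian marginals), apply the two-regime Bernstein bound, union-bound over the $9^d$ net points, and invert. Your explicit handling of the centering convention ($\boldsymbol{\Sigma} = \mathbb{E}[\vx_1\vx_1^\top]$, so that $\hat{\boldsymbol{\Sigma}}_n - \boldsymbol{\Sigma}$ is exactly centered) is if anything slightly cleaner than the paper's writeup.
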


\begin{proof}
    
    Applying Lemma \ref{thm1:lemma5} on $\hat{\boldsymbol{\Sigma}}_{n} - \boldsymbol{\Sigma}$ with $\epsilon = 1/4$ we have
    
    \begin{equation*}
        \left\lVert \hat{\boldsymbol{\Sigma}}_{n} - \boldsymbol{\Sigma} \right\rVert_{2} \leq 2 \max_{\vv \in N_{1/4}} | \vv^{\top} \hat{\boldsymbol{\Sigma}}_{n} - \boldsymbol{\Sigma} \vv |
    \end{equation*}
    
    Additionally, we know that $N_{1/4} \leq 9^{d}$. From here, we can apply standard concentration tools as follows:
    
    \begin{equation}
        \begin{split}
            \mathbb{P}(\left\lVert \hat{\boldsymbol{\Sigma}}_{n} - \boldsymbol{\Sigma} \right\rVert_{2} \geq t) &\leq \mathbb{P} (\max_{\vv \in N_{1/4}} | \vv^{\top} (\hat{\boldsymbol{\Sigma}}_{n} - \boldsymbol{\Sigma}) \vv | \geq t/2) \\
            &\leq |N_{1/4} | \mathbb{P}(|\vv_{i}^{\top} (\hat{\boldsymbol{\Sigma}}_{n} - \boldsymbol{\Sigma}) \vv_{i} | \geq t/2)
        \end{split}
    \end{equation}
    
    We rewrite $\vv_{i}^{\top} (\hat{\boldsymbol{\Sigma}}_{n} - \boldsymbol{\Sigma}) \vv_{i}$ as follows:
    
    \begin{equation*}
        \begin{split}
            \vv_{i}^{\top} (\hat{\boldsymbol{\Sigma}}_{n} - \boldsymbol{\Sigma}) \vv_{i} &= \frac{1}{n} \sum_{i=1}^{n} (\vv_{i}^{\top} \vx_{j})^{2} - \mathbb{E} \left[ (\vv_{i}^{\top} \vx_{j})^{2} \right] \\
            &= \frac{1}{n} \sum_{i=1}^{n} z_{j} - \mathbb{E} [z_{j}]
        \end{split}
    \end{equation*}
    
    where $z_{j}$'s are independent and by assumption $v_{i}^{\top} x_{j} \in SG(\sigma^{2})$ so that $z_{j} - \mathbb{E} [z_{j}] \in SE((16\sigma^{2})^{2}, 16\sigma^{2})$. Applying the sub-exponential tail bound gives us
    
    \begin{equation*}
        \mathbb{P} (\vv_{i}^{\top} (\hat{\boldsymbol{\Sigma}}_{n} - \boldsymbol{\Sigma}) \vv_{i} | \geq t/2) \leq 2 \exp \left[ - \frac{n}{2} \min \left\{ \frac{n}{(32\sigma^{2})^{2}}, \frac{n}{32\sigma^{2}} \right\} \right]
    \end{equation*}
    
    so that
    
    \begin{equation*}
        \mathbb{P} (\left\lVert \hat{\boldsymbol{\Sigma}}_{n} - \boldsymbol{\Sigma} \right\rVert_{2} \geq t) \leq 2 \cdot 9^{d} 2 \exp \left[ - \frac{n}{2} \min \left\{ \frac{n}{(32\sigma^{2})^{2}}, \frac{n}{32\sigma^{2}} \right\} \right]
    \end{equation*}
    
    Inverting the bound gives the desired result.
\end{proof}

\subsubsection{Proof of Theorem \ref{method:thm3}}

% \textcolor{red}{Eq. 16 뭔가 이상함. 확인좀. Term 두개 다 들어가야 되는 것 같은데... }

\begin{proof}
    
    Let $\vv_{\perp}$ be the unit vector, which is orthogonal to $\vv$. Then, $\vw$ can be expressed by $\vv_{\perp}$ and $\vv$ (i.e. $\vw = \cos \theta \cdot \vv + \sin \theta \cdot \vv_{\perp}$ with $ -\pi/2 \leq \theta \leq \pi/2$). Since $\vv \vv^{\top} + \vv_{\perp} \vv_{\perp}^{\top} = \mathbf{I}$, we have $\vw = \vv \vv^{\top} \vw + \vv_{\perp} \vv_{\perp}^{\top} \vw $.
    
    Then, we have
    
    \begin{equation}
    \begin{split}
        \vw \vw^{\top} &= (\vv \vv^{\top} \vw + \vv_{\perp} \vv_{\perp}^{\top} \vw) (\vv \vv^{\top} \vw + \vv_{\perp} \vw_{\perp}^{\top} \vw)^{\top} \\
        &= \vv \vv^{\top} \vw \vw^{\top} \vv \vv^{\top} + \vv_{\perp} \vv_{\perp}^{\top} \vw \vw^{\top} \vv \vv^{\top} + \vv \vv^{\top} \vw \vw^{\top} \vv_{\perp} \vv_{\perp}^{\top} + \vv_{\perp} \vv_{\perp}^{\top} \vw \vw^{\top} \vv_{\perp} \vv_{\perp}^{\top}
    \end{split}
    \end{equation}
    
    Let $\mathbf{A}$ and $\mathbf{B}$ be the projection matrices for clean instances and whole instances for using the \textit{David-Kahan sin Theorem} as followings: 
    
    \begin{eqnarray}
        \mathbf{A} = \mathbb{E} \left[ \sum_{i=1}^{N_{+}} (\vv + \epsilon_{i}) (\vv + \epsilon_{i})^{\top} \right] + \vv_{\perp} \vv_{\perp}^{\top} \vw \vw^{\top} \vv_{\perp} \vv_{\perp}^{\top} + \sigma^{2} \mathbf{I} \\
        % \mathbb{E} \left[ (\vw + \epsilon_{j}) (\vw + \epsilon_{j})^{\top} \right]
        \mathbf{A} + \mathbf{B} = \sum_{i=1}^{N_{+}} (\vv + \epsilon_{i}) (\vv + \epsilon_{i})^{\top} + \sum_{j=1}^{N_{-}} (\vw + \epsilon_{j}) (\vw + \epsilon_{j})^{\top}
    \end{eqnarray}
    
    The difference between first eigenvalue and second eigenvalue of gram matrix $\mathbf{A}$ is equal to
    
    \begin{equation}\label{eq:14}
        \lambda_{1} (\mathbf{A}) - \lambda_{2} (\mathbf{A}) = N_{+} - N_{-} \sin^{2} \theta \geq N_{+} - N_{-} \sin \theta
    \end{equation}

    By triangular inequality, we have
    
    \begin{equation}\label{thm1:eq1}
    \begin{split}
        \left\lVert \mathbf{B} \right\rVert_{2} &\leq \left\lVert \sum_{i=1}^{N_{+}} (\vv + \epsilon_{i}) (\vv + \epsilon_{i})^{\top} - \vv \vv^{\top} - \sigma^{2} \mathbf{I} \right\rVert_{2} + \left\lVert \sum_{i=1}^{N_{-}} (\vw + \epsilon_{i}) (\vw + \epsilon_{i})^{\top} - \vw \vw^{\top} - \sigma^{2} \mathbf{I} \right\rVert_{2} \\
        &+ \left\lVert \sum_{j=1}^{N_{-}} \vw \vw^{\top} - \vv_{\perp}^{\top} \vw \vw^{\top} \vv_{\perp} \vv_{\perp}^{\top} \right\rVert_{2}
    \end{split}
    \end{equation}

    % \begin{equation}
        
    % \end{equation}
    
    For the first and the second terms of RHS in Eq.~(\ref{thm1:eq1}), using Lemma \ref{thm1:lemma4}, with probability at least $1 - \delta/2$, we can derive each term as followings: 
    
    \begin{equation*}
        \left\lVert \sum_{i=1}^{N_{+}} (\vv + \epsilon_{i}) (\vv + \epsilon_{i})^{\top} - \vv \vv^{\top} - \sigma^{2} \mathbf{I} \right\rVert_{2} \leq N_{+} C \sigma^{2} \max \left\{ \sqrt{\frac{d + \log(4/\delta)}{N_{+}}}, \frac{d + \log(4/\delta)}{N_{+}} \right\}, 
    \end{equation*}
    
    \begin{equation*}
        \left\lVert \sum_{j=1}^{N_{-}} (\vw + \epsilon_{j}) (\vw + \epsilon_{j})^{\top} - \vw \vw^{\top} - \sigma^{2} \mathbf{I} \right\rVert_{2} \leq N_{-} C \sigma^{2} \max \left\{ \sqrt{\frac{d + \log(4/\delta)}{N_{-}}}, \frac{d + \log(4/\delta)}{N_{-}} \right\}
    \end{equation*}
    
    As $N_{+}, N_{-} \rightarrow \infty$, with probability at least $1 - \delta$
    
    \begin{equation}\label{eq:16}
    \begin{split}
        & \frac{1}{N_{+}} \left[ \left\lVert \sum_{i=1}^{N_{+}} (\vv + \epsilon_{i}) (\vv + \epsilon_{i})^{\top} - \vv \vv^{\top} - \sigma^{2} \mathbf{I} \right\rVert_{2} + \left\lVert \sum_{j=1}^{N_{-}} (\vw + \epsilon_{j}) (\vw + \epsilon_{j})^{\top} - \vw \vw^{\top} - \sigma^{2} \mathbf{I} \right\rVert_{2} \right] \\
        & \leq C \sigma^{2} \left[ \sqrt{\frac{d + \log(4/\delta)}{N_{+}}} + \frac{N_{-}}{N_{+}} \sqrt{\frac{d + \log(4/\delta)}{N_{-}}} \right] = \mathcal{O} \left( \sigma^{2} \sqrt{\frac{d + \log(4/\delta)}{N_{+}}} + \sigma^{2} \tau \sqrt{\frac{d + \log(4/\delta)}{N_{-}}}\right)
    \end{split}
    \end{equation}
    
    For the third term of RHS in Eq. (\ref{thm1:eq1}), we have
    
    \begin{equation}\label{eq:17}
    \begin{split}
        \left\lVert \sum_{j=1}^{N_{-}} \vw \vw^{\top} - \vv_{\perp}^{\top} \vw \vw^{\top} \vv_{\perp} \vv_{\perp}^{\top} \right\rVert_{2} &= N_{-} \cdot \left\lVert \vv \vv^{\top} \vw \vw^{\top} \vv \vv^{\top} + \vv_{\perp} \vv_{\perp}^{\top} \vw \vw^{\top} \vv \vv^{\top} + \vv \vv^{\top} \vw \vw^{\top} \vv_{\perp} \vv_{\perp}^{\top} \right\rVert_{2} \\
        &\leq N_{-} \cdot 3 \cos \theta
    \end{split}
    \end{equation}
    
    Hence, by using Eq.~(\ref{eq:14}), Eq.~(\ref{eq:16}), and Eq.~(\ref{eq:17}) for Lemma \ref{thm4:lemma3}, when $\tau$ is sufficiently small, we have
    
    \begin{equation}
        \left\Vert \vu \vu^{\top} - \vv \vv^{\top} \right\Vert_{2} \leq \frac{3 \tau \cos \theta + \mathcal{O}(\sigma^{2} \sqrt{\frac{d + \log(4 / \delta)}{N_{+}}})} {1 - \tau (\sin \theta + 3 \cos \theta) - \mathcal{O}(\sigma^{2} \sqrt{\frac{d + \log(4 / \delta)}{N_{+}}})}.
    \end{equation}
    
    \end{proof}

\subsection{Additional Theorem}
After projecting the features on the principal component of FINE detector, we aim to guarantee the lower bounds for the precision and recall of such values with high probability. Since the feature distribution comprises two Gaussian distributions, the projected distribution is also a mixture of two Gaussian distributions. Here, by LDA assumptions, its decision boundary with $\zeta = 0.5$ is the same as the average of mean of two clusters. In this situation, we provide the lower bounds for the precision and recall of our FINE detector in Theorem \ref{method:thm1}. 
% \textcolor{red}{FINE SCORE가 사실은 PROJECTION과 같다는 말을 써줘야 함.}

\begin{theorem}\label{method:thm1}
    Let $\Phi$ be the cumulative distribution function\,(CDF) of $\mathcal{N}(0, 1)$. Additionally, we define the $z_{i}$ as linear projection of $\mathbf{x}_{i}$ on arbitrary vector. For the decision boundary $b = \frac{1}{2}(\frac{\sum_{i=1}^{N}\mathds{1}_{\{y_{i}=1\}}  z_{i}}{N_{+}} + \frac{\sum_{i=1}^{N} \mathds{1}_{\{y_{i}=-1\}} z_{i}}{N_{-}})$, with probability 1-$\delta$, the lower bounds for the precision and recall can be derived as Eq.~(\ref{eq:recall}) and Eq.~(\ref{eq:precision}).

    % the expected precision has lower bound as $\frac{1}{1 + p_{-} \cdot \Phi(\frac{- \Delta - 2 \psi}{2 \sigma}) / p_{+} \cdot \Phi(\frac{\Delta - 2 \psi}{2 \sigma})}$, and expected recall has lower bound as $\Phi(\frac{\Delta - 2 \psi}{2 \sigma})$ with probability at least $1 - 2\exp{(-\frac{\psi^{2}}{2} \cdot \frac{1}{\sigma^{2}/N_{+} + \sigma^{2}/N_{-}})}$  for any $\psi > \sqrt{2\log2 \cdot (\sigma^{2}/N_{+} + \sigma^{2}/N_{-})}$, where $\Delta \coloneqq \vu^{\top}\vv - \vu^{\top}\vw$. 
\end{theorem}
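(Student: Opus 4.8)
The plan is to push everything down to one-dimensional Gaussian tail estimates. First I would record what Assumptions \ref{assume:gaussian} and \ref{assume:linearity} give after projecting onto the relevant unit vector (in the application, FINE's eigenvector $\vu$): a clean point yields $z_i = \langle \vu, \vx_i \rangle \sim \mathcal{N}(m_+, \sigma^2)$ with $m_+ = \langle \vu, \vv \rangle$, and a noisy point yields $z_i \sim \mathcal{N}(m_-, \sigma^2)$ with $m_- = \langle \vu, \vw \rangle$, since the white noise is isotropic with variance $\sigma^2$ and $\vu$ is unit norm. Without loss of generality $m_+ > m_-$, matching the convention of Definition \ref{method:def2} that the higher-mean cluster is declared clean; write $\Delta_0 := m_+ - m_-$. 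Theorem \ref{method:thm3} enters here to tie $\Delta_0$ to the intrinsic separation $\langle \vv, \vv - \vw\rangle = 1 - \cos\theta$ up to the perturbation already quantified there.

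Second, I would show the empirical threshold $b = \tfrac12(\hat m_+ + \hat m_-)$ concentrates on the population midpoint. Each $\hat m_{\pm}$ averages $N_{\pm}$ i.i.d.\ $\sigma$-sub-Gaussian scalars, so a Gaussian mean bound plus a union bound gives, with probability at least $1 - \delta$,
\begin{equation*}
\Bigl| b - \tfrac12(m_+ + m_-) \Bigr| \le \frac{\sigma}{2}\left( \sqrt{\frac{2\log(4/\delta)}{N_+}} + \sqrt{\frac{2\log(4/\delta)}{N_-}} \right) =: \Delta_b .
\end{equation*}
On this event $b \in \bigl[\tfrac12(m_++m_-) - \Delta_b,\ \tfrac12(m_++m_-) + \Delta_b\bigr]$.

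Third, I would lower-bound the recall $r = \tfrac{1}{N_+}\sum_{y_i=+1}\mathds{1}\{z_i > b\}$ and upper-bound the false-positive rate $f = \tfrac{1}{N_-}\sum_{y_i=-1}\mathds{1}\{z_i > b\}$. By monotonicity of $\Phi$, on the good event $\mathds{1}\{z_i > b\} \ge \mathds{1}\{z_i > \tfrac12(m_++m_-)+\Delta_b\}$ for a clean point, whose success probability is $\Phi\!\bigl(\tfrac{\Delta_0}{2\sigma} - \tfrac{\Delta_b}{\sigma}\bigr)$; symmetrically, for a noisy point $\mathds{1}\{z_i > b\} \le \mathds{1}\{z_i > \tfrac12(m_++m_-)-\Delta_b\}$, with probability $\Phi\!\bigl(\tfrac{\Delta_b}{\sigma} - \tfrac{\Delta_0}{2\sigma}\bigr)$. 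A Hoeffding bound on these bounded indicators converts the expectations into the empirical fractions at the price of $\mathcal{O}\!\bigl(\sqrt{\log(1/\delta)/N_{\pm}}\bigr)$, which yields Eq.~(\ref{eq:recall}); precision then follows from the identity $\mathrm{precision} = \tfrac{N_+ r}{N_+ r + N_- f} = \tfrac{r}{r + \tau f}$ with $\tau = N_-/N_+$, which is increasing in $r$ and decreasing in $f$, so substituting the two bounds gives Eq.~(\ref{eq:precision}).

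The main obstacle is the statistical coupling: $b$ is assembled from the same $z_i$'s that are then thresholded against it, so the $\mathds{1}\{z_i > b\}$ are not independent and Hoeffding cannot be applied naively conditional on $b$. The clean fix is the order of operations above --- replace $b$ by the \emph{deterministic} envelope $\tfrac12(m_++m_-)\pm\Delta_b$ that is valid on the high-probability event (monotonicity makes this a genuine one-sided bound), and only afterwards apply concentration to $\tfrac{1}{N_{\pm}}\sum\mathds{1}\{z_i > t\}$ at a fixed $t$; the residual dependence of the envelope on the sample is absorbed either by a union bound over a fine grid of thresholds (a DKW-type argument) or, more cheaply, by noting each $z_i$ shifts $b$ by only $\mathcal{O}(1/N_{\pm})$. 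A secondary point, that $\vu$ is itself data-dependent, is handled by invoking Theorem \ref{method:thm3} to pass from $\langle\vu, \vv-\vw\rangle$ to $1-\cos\theta$.
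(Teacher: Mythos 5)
Your proposal is correct and follows essentially the same route as the paper's proof: concentrate the empirical threshold $b$ around the population midpoint of the two projected Gaussian means, replace $b$ by the deterministic envelope $\tfrac12(m_++m_-)\pm\mathcal{O}(\sqrt{(\tfrac{1}{N_+}+\tfrac{1}{N_-})\log(1/\delta)})$ on the high-probability event, and then read off the recall bound from the Gaussian CDF and the precision bound from the Bayes/odds-ratio identity. The only divergence is that the paper defines recall and precision as population conditional probabilities $\mathbb{P}(Z>b\mid Y=\pm1)$ and $\mathbb{P}(Y=+1\mid Z>b)$ for a fresh draw $Z$, so your additional Hoeffding/DKW layer for empirical fractions and your discussion of the coupling between $b$ and the thresholded $z_i$ are not needed there, though they constitute a legitimate strengthening.
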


\begin{proof}
    $N_{+}$ and $N_{-}$ are equal to $\sum_{i=1}^{N}\mathds{1}_{\{Y_{i}=1\}}$ and $\sum_{i=1}^{N}\mathds{1}_{\{Y_{i}=-1\}}$, respectively. With definition \ref{method:def2}, the mean of the projection values of clean instances is $(\vu^{\top} \vv)^2$ and that of noisy instances is $(\vu^{\top}\vw)^2$. By the central limit theorem\,(CLT), we have $\frac{\sum_{i=1}^{N}\mathds{1}_{\{Y_{i}=1\}}  Z_{i}}{N_{+}} \sim \mathcal{N}((\vu^{\top}\vv)^2, \frac{\sigma^{2}}{N_{+}})$ and $\frac{\sum_{i=1}^{N}\mathds{1}_{\{Y_{i}=-1\}}  Z_{i}}{N_{-}} \sim \mathcal{N}((\vu^{\top}\vw)^2, \frac{\sigma^{2}}{N_{-}})$. Furthermore, we can get $\frac{\sum_{i=1}^{N}\mathds{1}_{\{Y_{i}=1\}}  Z_{i}}{N_{+}} + \frac{\sum_{i=1}^{N}\mathds{1}_{\{Y_{i}=-1\}} Z_{i}}{N_{-}} \sim \mathcal{N}((\vu^{\top} \vv)^2 + (\vu^{\top} \vw)^2, (\frac{1}{N_{+}}+\frac{1}{N_{-}})\sigma^{2})$ \\
    
    By the concentration inequality on standard Gaussian distribution, we have
    \begin{equation}
        \begin{split}
            \mathbb{P} \left( \left| \frac{\sum_{i=1}^{N}\mathds{1}_{\{Y_{i}=1\}}  Z_{i}}{N_{+}} + \frac{\sum_{i=1}^{N}\mathds{1}_{\{Y_{i}=-1\}} Z_{i}}{N_{-}} - ((\vu^{\top} \vv)^2 + (\vu^{\top} \vw)^2) \right| > \psi \right) < 2 \exp{(-\frac{\psi^{2}}{2} \cdot \frac{1}{\frac{\sigma^{2}}{N_{+}}+\frac{\sigma^{2}}{N_{-}}})}
        \end{split}
    \end{equation}
    
    % Therefore, with probability at least $1 - 2\exp{(-\frac{\psi^{2}}{2} \cdot \frac{1}{(\frac{1}{N_{+}}+\frac{1}{N_{-}})\sigma^{2}})}$, the decision boundary $b$, satisfies
    % \begin{equation} \label{eq:thm1_3}
    %     \frac{\vu^{\top} \vv + \vu^{\top} \vw}{2} - \psi \leq b \leq \frac{\vu^{\top} \vv + \vu^{\top} \vw}{2} + \psi
    % \end{equation}
    
    Therefore, with probability $1-\delta$,
     \begin{equation}\label{eq:20}
        \begin{gathered}
            \frac{(\vu^{\top} \vv)^2 + (\vu^{\top} \vw)^2}{2} - \mathcal{C} \sqrt{\left( \frac{1}{N_{+}} + \frac{1}{N_{+}} \right) \log(2/\delta)} \leq b \\
            b \leq  \frac{(\vu^{\top} \vv)^2 + (\vu^{\top} \vw)^2}{2}+ \mathcal{C} \sqrt{\left( \frac{1}{N_{+}} + \frac{1}{N_{+}} \right) \log(2/\delta)}
        \end{gathered}
    \end{equation}
    
    where $\mathcal{C}>0$ is a constant. Then, by using the Eq.~(\ref{eq:20}), we can derive the lower bound for the recall as follows:
    
    % \begin{equation}
    %     \begin{split}
    %         \textsc{RECALL} = \mathbb{P}(Z>b|Y=+1) &= P(Z>b |Y=+1) \\

    %         &\geq \mathbb{P}(Z > \frac{\mu_{+} + \mu_{-}}{2} + \psi | Y=+1) \\
    %         &= \mathbb{P} (\frac{Z - \mu_{+}}{\sigma} > \frac{-\mu_{+} + \mu_{-}}{2 \sigma} + \frac{\psi}{\sigma}) \\
    %         &= \mathbb{P}(\mathcal{N}(0, 1) > \frac{-\Delta + 2 \psi}{2 \sigma}) \\
    %         &= 1 - \mathbb{P}(\mathcal{N}(0, 1) \leq \frac{-\Delta + 2 \psi}{2 \sigma}) \\
    %         &= 1 - \Phi(\frac{-\Delta + 2 \psi}{2 \sigma}) = \Phi(\frac{\Delta - 2 \psi}{2 \sigma})
    %     \end{split}
    % \end{equation}

    \begin{equation}
        \begin{split}\label{eq:recall}
            \textsc{RECALL} =  \mathbb{P}(Z>b|Y=+1) &= P(Z>b |Y=+1) \\
            &\geq \mathbb{P}(Z > \frac{\mu_{+} + \mu_{-}}{2} + \mathcal{C} \sqrt{\left( \frac{1}{N_{+}} + \frac{1}{N_{+}} \right) \log(2/\delta)} | Y=+1) \\
            &= \mathbb{P} (\frac{Z - \mu_{+}}{\sigma} > \frac{-\mu_{+} + \mu_{-}}{2 \sigma} + \frac{\mathcal{C} \sqrt{\left( \frac{1}{N_{+}} + \frac{1}{N_{+}} \right) \log(2/\delta)}}{\sigma}) \\
            &= \mathbb{P}(\mathcal{N}(0, 1) > \frac{-\Delta + 2 \mathcal{C} \sqrt{\left( \frac{1}{N_{+}} + \frac{1}{N_{+}} \right) \log(2/\delta)}}{2 \sigma}) \\
            &= 1 - \mathbb{P}(\mathcal{N}(0, 1) \leq \frac{-\Delta + 2 \mathcal{C} \sqrt{\left( \frac{1}{N_{+}} + \frac{1}{N_{+}} \right) \log(2/\delta)}}{2 \sigma}) \\
            &= 1 - \Phi(\frac{-\Delta + 2 \mathcal{C} \sqrt{\left( \frac{1}{N_{+}} + \frac{1}{N_{+}} \right) \log(2/\delta)}}{2 \sigma}) \\
            &= \Phi(\frac{\Delta - 2 \mathcal{C} \sqrt{\left( \frac{1}{N_{+}} + \frac{1}{N_{+}} \right) \log(2/\delta)}}{2 \sigma})
        \end{split}
    \end{equation}
    
    Furthermore, we have lower bound for precision as follows:

    \begin{equation}
        \begin{split}\label{eq:precision}
            \textsc{PRECISION} &= \mathbb{P}(Y=+1 | Z > b) \\
            &= \frac{\mathbb{P}(Z>b | Y=+1) P(Y=+1)}{\sum_{i \in \left\{ -1, +1 \right\}} \mathbb{P}(Z>b | Y=i) P(Y=i)} \\
            & \geq \frac{\mathbb{P}(Z > \frac{\mu_{+} + \mu_{-}}{2} + \mathcal{C} \sqrt{\left( \frac{1}{N_{+}} + \frac{1}{N_{+}} \right) \log(2/\delta)} | Y=+1) P(Y=+1)}{\sum_{i \in \left\{ -1, +1 \right\}} \mathbb{P}(Z > \frac{\mu_{+} + \mu_{-}}{2} - \mathcal{C} \sqrt{\left( \frac{1}{N_{+}} + \frac{1}{N_{+}} \right) \log(2/\delta)}| Y=i) P(Y=i)} \\
            &= \frac{\mathbb{P}(Z > \frac{\mu_{+} + \mu_{-}}{2} + \mathcal{C} \sqrt{\left( \frac{1}{N_{+}} + \frac{1}{N_{+}} \right) \log(2/\delta)} | Y=+1) P(Y=+1)}{\sum_{i \in \left\{ -1, +1 \right\}} \mathbb{P}(Z > \frac{\mu_{+} + \mu_{-}}{2} - \mathcal{C} \sqrt{\left( \frac{1}{N_{+}} + \frac{1}{N_{+}} \right) \log(2/\delta)}| Y=i) P(Y=i)} \\
            &\geq \frac{1}{1 + \frac{\mathbb{P}(Z > \frac{\mu_{+} + \mu_{-}}{2} + \mathcal{C} \sqrt{\left( \frac{1}{N_{+}} + \frac{1}{N_{+}} \right) \log(2/\delta)} | Y=-1) P(Y=-1)}{\mathbb{P}(Z > \frac{\mu_{+} + \mu_{-}}{2} - \mathcal{C} \sqrt{\left( \frac{1}{N_{+}} + \frac{1}{N_{+}} \right) \log(2/\delta)} | Y=+1) P(Y=+1)}} \\
            &= \frac{1}{1 + \frac{p_{-} \cdot \Phi(\frac{- \Delta - 2 \mathcal{C} \sqrt{\left( \frac{1}{N_{+}} + \frac{1}{N_{+}} \right) \log(2/\delta)}}{2 \sigma})}{p_{+} \cdot \Phi(\frac{\Delta - 2 \mathcal{C} \sqrt{\left( \frac{1}{N_{+}} + \frac{1}{N_{+}} \right) \log(2/\delta)}}{2 \sigma})}}
        \end{split}
    \end{equation}
    
    % Hence, with probability at least $1 - 2\exp{(-\frac{\psi^{2}}{2} \cdot \frac{1}{\frac{\sigma^{2}}{N_{+}}+\frac{\sigma^{2}}{N_{-}}})}$, we can bound the expected the selected rate and purity as 
    % \begin{equation*}
    %     (\textit{Recall}) \geq \Phi(\frac{\Delta - 2 \psi}{2 \sigma})
    % \end{equation*}
    
    % , and
    
    % \begin{equation*}
    %     (\textit{Precision}) \geq \frac{1}{1 + \Phi(\frac{- \Delta - 2 \psi}{2 \sigma}) / \Phi(\frac{\Delta - 2 \psi}{2 \sigma})}
    % \end{equation*}
    
    where $\Delta \coloneqq \vu^{\top}\vv - \vu^{\top}\vw$, and $p_{+}$ and $p_{-}$ are the noise distribution for clean instances and noisy instances, respectively. Additionally, we can find that the difference of mean between two Gaussian distribution, $\Delta > 0$ is an important factor of computing both lower bounds. As $\Delta$ become larger, we have larger lower bounds for both recall and precision.
\end{proof}

\section{Implementation Details for \autoref{sec4} \label{app:experiment_settings}}
In \autoref{sec4}, there are two reporting styles regarding the test accuracy: (1) reporting the accuracy with statistics and (2) reporting the best and last test accuracy. For the first one, we leverage an additional validation set to select the best model\,\cite{han2018coteaching, liu2020early}, and thus the reported accuracy is computed with this selected model. In the next case, we report both the best and last test accuracy without the usage of validation set\,\cite{Li2020DivideMix}. We reproduce all experimental results referring to other official repositories\,\footnote{\url{https://github.com/bhanML/Co-teaching}}$^,$ \footnote{\url{https://github.com/LiJunnan1992/DivideMix}}$^,$ \footnote{\url{https://github.com/shengliu66/ELR}}.

\paragraph{Dataset.} In our expereiments, we compare the methods regarding image classification on three benchmark datasets: CIFAR-10, CIFAR-100, and Clothing-1M\,\cite{xiao2015learning}\footnote{This dataset is not public, and thus we contact the main owner of this dataset to access this dataset. Related procedures are in \url{https://github.com/Cysu/noisy_label}.}. Because CIFAR-10, CIFAR-100 do not have predefined validation sets, we retain 10\% of the training sets to perform validation\,\cite{liu2020early}.

\paragraph{Data Preprocessing} We use the same settings in \cite{liu2020early}. We apply normalization and simple data augmentation techniques (random crop and horizontal flip) on the training sets of all datasets. The size of the random crop is set to 32 for the CIFAR datasets and 224 for Clothing1M referred to previous works\,\cite{liu2020early,zhang2018generalized,jiang2018mentornet}.

\subsection{Sample-Selection Approaches}
As an extension on the experiments in original papers \cite{han2018coteaching,yu2019does,wu2020topological,mirzasoleiman2020coresets}, we conduct experiments on various noise settings. We use the same hyperparameter settings written in each paper\,(\autoref{al:sample_selection}). Therefore, we unify the hyperparameter settings. In this experiment, we use ResNet-34 models and reported their accuracy. For using FINE detector, we substitute the Topofilter\,\cite{wu2020topological} with FINE. Specifically, we use 40 epochs for warmup stage, and the data selection using FINE detector is performed every 10 epochs for computational efficiency referred to the alternative method\,\cite{wu2020topological}. The other settings are the same with them.

\begin{algorithm}
  \caption{\textit{Sample-Selection} with FINE}
  \label{al:sample_selection}
  \begin{algorithmic}[1]
    \INPUT: weight parameters of a network $\theta$, $\mathcal{D} = \mathcal{(X, Y)}$: training set, number of classes $K$ 
    \OUTPUT: $\theta$ 
    \STATE $\theta$ = WarmUp($\mathcal{X}, \mathcal{Y}, \theta$)\\
    \WHILE{$e<MaxEpoch$}
        % \STATE $\mathcal{C}_{e} = \textsc{FINE}(\mathcal{X}, \mathcal{Y}, \theta)$
        \FOR{($\vx_{i}, y_{i}$) $\in \mathcal{D}$}
            \STATE $\vz_{i} \leftarrow g(\vx_{i})$ \\
            \STATE Update the gram matrix $\boldsymbol{\Sigma}_{y_{i}} \leftarrow \boldsymbol{\Sigma}_{y_{i}} + \vz_{i} \vz_{i}^{\top}$
        \ENDFOR \\
        \textcolor{darkgray}{/* Generate the principal component with eigen decomposition */}\\
        \FOR{$k = 1, \ldots, K$}
            \STATE $\mathbf{U}_{k}, \boldsymbol{\Lambda}_{k} \leftarrow$ \textsc{eigen Decomposition of} $\boldsymbol{\Sigma}_{k}$ \\
            \STATE $\mathbf{u}_{k} \leftarrow $ \textsc{The First Column of} $\mathbf{U}_{k}$ \\
            % \STATE Compute the FINE scores $\left< \mathbf{u}_{k}, \vz_{i} \right>^{2}$ for corresponding instances $\vx_{i}$ whose observed label $y_{i}$ is $k$ \\
        \ENDFOR \\
        \textcolor{darkgray}{/* Compute the alignment score and get clean subset $\mathcal{C}$ */}\\
        \FOR{($\vx_{i}, y_{i}$) $\in \mathcal{C}_{e-1}$}
                % \IF{$y_{i}$ = k}
                \STATE Compute the FINE score $f_{i} = \left< \mathbf{u}_{y_{i}}, \vz_{i} \right>^{2}$ and $\mathcal{F}_{y_{i}} \leftarrow \mathcal{F}_{y_{i}} \cup \left\{ f_{i} \right\}$
                % \ENDIF
            \ENDFOR \\
        \textcolor{darkgray}{/* Finding the samples whose clean probability is larger than $\zeta$ */} \\
        \STATE $\mathcal{C}_{e} \leftarrow \mathcal{C}_{e} \cup$ \textsc{GMM} $(\mathcal{F}_{k}, \zeta)$ for all $k = 1, \ldots, K$ \\
        \STATE Train network $\theta$ using loss function $\mathcal{L}$ on $\mathcal{C}_{e}$
    \ENDWHILE
  \end{algorithmic}
\end{algorithm}

\subsection{Semi-Superivsed Approaches}
DivideMix \cite{Li2020DivideMix} solves a noisy classification challenge as semi-supervised approach. It trains two separated networks to avoid confirmation errors. The training pipeleine consists of \textit{co-divide} phase and semi-supervised learning (SSL) phase. Firstly, in \textit{co-divide} phase, two networks divide the whole training set into clean and noisy subset and provide them to each other. In SSL phase, each network utilizes clean and noisy subset as labeled and unlabeled training set, respectively, and do the MixMatch \cite{berthelot2019mixmatch} after processing label adjustment, \textit{co-refinement} and \textit{co-guessing}. It adjusts the labels of given samples with each model's prediction, and this adjustment can be thought as a label smoothing for robust training.

In \textit{co-divide} phase, each network calculates cross-entropy (CE) loss value of each training sample and fits them into Gaussian Mixture Model (GMM) with two components which indicate the distribution of clean and noisy subsets. From this process, each sample has clean probability which means how close the sample is to the `clean' components of GMM.

We demonstrate that FINE detector may be a substitute for the noisy detector in \textit{co-divide} phase\,(\autoref{al:dividemix}).
In every training epoch in DivideMix, the noisy instances are filtered through our FINE detector. \autoref{al:dividemix} represents the details about the modified algorithm, written based on Dividemix original paper. All hyper-parameters settings are the same with \cite{Li2020DivideMix}, even for the clean probability threshold $\zeta$.

\begin{algorithm}
  \caption{\textit{DivideMix}\,\cite{Li2020DivideMix} with FINE}
  \label{al:dividemix}
  \begin{algorithmic}[1]
    \INPUT: $\theta^{(1)}$ and $\theta^{(2)}$: weight parameters of two networks, $\mathcal{D} = \mathcal{(X, Y)}$: training set , $\tau$: clean probability threshold, $M$: number of augmentations, $T$: sharpening temperature, $\lambda_{u}$: unsupervised loss weight, $\alpha$: Beta distribution parameter for MixMatch, \textsc{FINE}
    \OUTPUT: $\theta^{(1)}$ and $\theta^{(2)}$
    \STATE $\theta^{(1)}$, $\theta^{(2)}$ = WarmUp($\mathcal{X}, \mathcal{Y}, \theta^{(1)}, \theta^{(2)}$)\\
    \WHILE{$e<MaxEpoch$}
        % \STATE $\mathcal{W}^{(2)} = GMM(\mathcal{X}, \mathcal{Y}, \theta^{(1)})$\\
        % \STATE $\mathcal{W}^{(1)} = GMM(\mathcal{X}, \mathcal{Y}, \theta^{(2)})$\\
        \STATE $\mathcal{C}^{(2)}_{e}, \mathcal{W}^{(2)} = \textsc{FINE}(\mathcal{X}, \mathcal{Y}, \theta^{(1)})$ \COMMENT{$\mathcal{W}^{(2)}$ is a set of the probabilities from $\theta^{(2)}$ model}\\
        \STATE $\mathcal{C}^{(1)}_{e}, \mathcal{W}^{(1)}  = \textsc{FINE}(\mathcal{X}, \mathcal{Y}, \theta^{(2)})$  \COMMENT{$\mathcal{W}^{(1)}$ is a set of the probabilities from $\theta^{(1)}$ model}\\
        \FOR{$k = 1,2$}
            \STATE $\mathcal{X}_{e}^{(k)} = \left\{(x_i, y_i, w_i)|w_i \geq \tau, (x_i, y_i) \in \mathcal{C}^{(k)}_{e} , \forall{(x_i, y_i, w_i) \in (\mathcal{X, Y}, \mathcal{W}^{(k)})}\right\}$\\
            \STATE $\mathcal{U}_{e}^{(k)} = \mathcal{D} - \mathcal{X}_{e}^{(k)}$\\
            \FOR{$b = 1$ to $B$}
                \FOR{$m = 1$ to $M$}
                    \STATE $\hat{x}_{b,m} = Augment(x_b)$\\
                    \STATE $\hat{u}_{b,m} = Augment(u_b)$
                \ENDFOR
                \STATE $p_b = \frac{1}{M}\Sigma_{m}p_{model}(\hat{x}_{b,m};\theta^{(k)})$\\
                \STATE $\bar{y}_{b} = w_{b}y_{b} + (1-w_{b})p_{b}$\\
                \STATE $\hat{y}_{b} = Sharpen(\bar{y}_{b}, T)$\\
                \STATE $\bar{q}_{b} = \frac{1}{2M}\Sigma_{m}(p_{model}(\hat{u}_{b,m};\theta^{(1)}) + p_{model}(\hat{u}_{b,m};\theta^{(2)}))$ \\
                 \STATE $q_{b} = Sharpen(\bar{q}_{b}, T)$
            \ENDFOR
            \STATE $\hat{\mathcal{X}} = \left\{ (\hat{x}_{b,m},\hat{y}_{b});b \in (1,\ldots,B),m\in(1,\ldots,M) \right\}$ \\
            \STATE $\hat{\mathcal{U}} = \left\{(\hat{u}_{b,m},\hat{y}_{b});b \in (1,\ldots,B),m\in(1,\ldots,M)\right\}$\\
            \STATE $\mathcal{L}_{\mathcal{X}}, \mathcal{L}_{\mathcal{U}} = MixMatch(\hat{\mathcal{X}}, \hat{\mathcal{U}})$ \\
            \STATE $\mathcal{L} = \mathcal{L}_{\mathcal{X}} + \lambda_{u}\mathcal{L}_{\mathcal{U}} + \lambda_{r}\mathcal{L}_{reg}$\\
            \STATE $\theta^{(k)} = SGD(\mathcal{L}, \theta^{(k)})$
        \ENDFOR
    \ENDWHILE
  \end{algorithmic}
\end{algorithm}

\subsection{Collaboration with Noise-Robust Loss Functions}

We conduct experiments with CE, GCE, SCE, ELR mentioned in \autoref{sec4.2.3}. We follow all experiments settings presented in the \cite{liu2020early} except for the GCE on CIFAR-100 dataset. We use ResNet-34 models and trained them using a standard Pytorch SGD optimizer with a momentum of 0.9. We set a batch size of 128 for all experiments. We utilize weight decay of 0.001 and set the initial learning rate as 0.02, and reduce it by a factor of 100 after 40 and 80 epochs for CIFAR-10 (total 120 epochs) and after 80 and 120 epochs for CIFAR-100 (total 150 epochs). For noise-robust loss functions, we train the network naively for 50 epochs, and conduct the FINE for every 10 epochs.

% For the GCE on CIFAR-100, we set a weight decay lambda as 0.0001, the initial learning rate as 0.01, and reduced it by a factor of 10 after 80 and 120 epochs set in \cite{zhang2018generalized}.

\section{More Results} \label{appc:additional_experiments}

\subsection{Degree of Alignment}
We additionally explain the vital role of the first eigenvector compared to other eigenvectors and the mean vector.

\paragraph{Comparison to other eigenvectors.}
We provide robustness by means of the way the first eigenvector is robust to noisy vectors so that FINE can fix the noisy classifier by using segregated clean data. Unlike the first eigenvector, the other eigenvectors can be significantly affected by noisy data (\autoref{tab:comp-eigen}). 

\begin{table}[h]
\centering \small
\begin{tabular}{ccc}
\toprule
Noise    & 1st eigenvector & 2nd eigenvector \\ \midrule\midrule
sym 20 & 0.015 $\pm$ 0.009   & 0.043 $\pm$ 0.021   \\ \midrule
sym 50 & 0.029 $\pm$ 0.019   & 0.078 $\pm$ 0.044   \\ \midrule
sym 80 & 0.057 $\pm$ 0.038   & 0.135 $\pm$ 0.052  \\ \bottomrule
\end{tabular}
\vspace{5pt}
\caption{Comparison of the perturbations of Eq. (1) ($\|\vu \vu^\top -\vv \vv^\top \|$) on CIFAR-10 with symmetric noise. The values in the table are written as mean (std) of the perturbations between u and v obtained for each class.}
\label{tab:comp-eigen}
\end{table}

\paragraph{Comparison to the mean vector.}
The mean vector can be a nice ad-hoc solution as a decision boundary. This is because the first eigenvector of the gram matrix and the mean vector of the cluster become similar under a low noise ratio. However, because the gram matrix of the cluster becomes larger in a high noise ratio scenario, naive averaging can cause a lot of perturbation. On the other side, because the first eigenvector arises from the principal component of the representation vectors, FINE is more robust to noisy representations so that it has less perturbation and provides better performance.

To support this explanation, we performed additional experiments by changing the anchor point with the first eigenvector and the mean vector. As \autoref{tab:comp-mean} shows, the performance degradation occurs as the noise ratio increases by replacing the first eigenvector with the mean vector

% Please add the following required packages to your document preamble:
% \usepackage{multirow}
\begin{table}[h]
\centering
\begin{tabular}{ccccccc}
\toprule
\multirow{2}{*}{Noise} & \multicolumn{2}{c}{sym 20} & \multicolumn{2}{c}{sym 50} & \multicolumn{2}{c}{sym 80} \\ \cmidrule{2-7} 
         & mean   & eigen  & mean   & eigen  & mean   & eigen  \\ \midrule\midrule
Acc (\%) & 90.32  & 91.42  & 86.03  & 87.20  & 67.78  & 71.55  \\ \midrule
F-score  & 0.8814 & 0.9217 & 0.4879 & 0.8626 & 0.6593 & 0.7339 \\ \bottomrule
\end{tabular}
\vspace{5pt}
\caption{Comparison of test accuracies on the CIFAR-10 dataset.}
\label{tab:comp-mean}
\end{table}

%GCE SCE ELR sym50 baseline, Dynamic 확인 필요

\subsection{Detailed values for \autoref{fig:robust_loss}}
We provide the detailed values for \autoref{fig:robust_loss} in \autoref{tab:robust_loss}.

\begin{table}[h]
\centering
\caption{Test accuracies (\%) on CIFAR-10 and CIFAR-100 under different noisy types and fractions for noise-robust loss approaches. The average accuracies and standard deviations over three trials are reported.}
{\scriptsize %\tiny scriptsize
\begin{tabular}{c|cccc|cccc} \toprule
Dataset      & \multicolumn{4}{c}{CIFAR-10}     & \multicolumn{4}{|c}{CIFAR-100}       \\ \midrule
Noisy Type   & \multicolumn{3}{c}{Sym} & Asym   & \multicolumn{3}{c}{Sym} & Asym     \\ \midrule
Noise Ratio  & 20      & 50 & 80 & 40     & 20  & 50 & 80 & 40       \\ \midrule \midrule
Standard  & 87.0 $\pm$ 0.1 & 78.2 $\pm$ 0.8 & 53.8 $\pm$ 1.0 & 80.1 $\pm$ 1.4 & 58.7 $\pm$ 0.3 & 42.5 $\pm$ 0.3 & 18.1 $\pm$ 0.8 & 42.7 $\pm$ 0.6 \\
GCE & 89.8 $\pm$ 0.2 & 86.5 $\pm$ 0.2 & 64.1 $\pm$ 1.4 & 76.7 $\pm$ 0.6 & 66.8 $\pm$ 0.4 & 57.3 $\pm$ 0.3 & 29.2 $\pm$ 0.7 & 47.2 $\pm$ 1.2 \\
SCE* & 89.8 $\pm$ 0.3 & 84.7 $\pm$ 0.3 & 68.1 $\pm$ 0.8 & 82.5 $\pm$ 0.5 & 70.4 $\pm$ 0.1 & 48.8 $\pm$ 1.3 & 25.9 $\pm$ 0.4 & 48.4 $\pm$ 0.9 \\ 
ELR* & 91.2 $\pm$ 0.1 & 88.2 $\pm$ 0.1 & 72.9 $\pm$ 0.6 & 90.1 $\pm$ 0.5 & 74.2 $\pm$ 0.2 & 59.1 $\pm$ 0.8 & 29.8 $\pm$ 0.6 & 73.3 $\pm$ 0.6 \\ \midrule
FINE & 91.0 $\pm$ 0.1 & 87.3 $\pm$ 0.2 & 69.4 $\pm$ 1.1 & 89.5 $\pm$ 0.1 & 70.3 $\pm$ 0.2 & 64.2 $\pm$ 0.5 & 25.6 $\pm$ 1.2 & 61.7 $\pm$ 1.0 \\
GCE +  FINE & 91.4 $\pm$ 0.1 & 86.9 $\pm$ 0.1 & 75.3 $\pm$ 1.2 & 88.9 $\pm$ 0.3 & 70.5 $\pm$ 0.1 & 61.5 $\pm$ 0.5 & 37.0 $\pm$ 2.1 & 62.4 $\pm$ 0.5 \\
SCE +  FINE & 90.4 $\pm$ 0.2 & 85.1 $\pm$ 0.2 & 70.5 $\pm$ 0.8 & 86.9 $\pm$ 0.3 & 70.9 $\pm$ 0.3 & 64.1 $\pm$ 0.7 & 29.9 $\pm$ 0.8 & 64.3 $\pm$ 0.3 \\
ELR +  FINE & 91.5 $\pm$ 0.1 & 88.5 $\pm$ 0.1 & 74.7 $\pm$ 0.5 & 91.1 $\pm$ 0.2 & 74.9 $\pm$ 0.2 & 66.7 $\pm$ 0.4 & 32.5 $\pm$ 0.5 & 73.8 $\pm$ 0.4 \\
%O2UNET &  & 00.0 $\pm$ 0.0 & 00.0 $\pm$ 0.0 & 00.0 $\pm$ 0.0 & 00.0 $\pm$ 0.0 & 00.0 $\pm$ 0.0 & 00.0 $\pm$ 0.0 & 00.0 $\pm$ 0.0 & 00.0 $\pm$ 0.0 \\
% FINE  & (Retrain) & 90.2 $\pm$ 0.0 & 84.9 $\pm$ 0.0 & 63.0 $\pm$ 0.0 & 87.9 $\pm$ 0.0 & 66.5 $\pm$ 0.0 & 52.3 $\pm$ 0.0 & 19.0 $\pm$ 0.0 & 52.2 $\pm$ 0.0 \\
 \bottomrule
\end{tabular}}
\label{tab:robust_loss}
\end{table}

\subsection{Hyperparameter sensitivity towards $\zeta$}
We perform additional experiments; we report the test accuracy and f1-score on CIFAR-10 with a symmetric noise ratio of 80\% across the value of the hyperparameter\,(\autoref{tab:hyp-sensi}). We can observe that the performance change is small in the acceptable range from 0.4 to 0.6.

\begin{table}[h]
\centering
\begin{tabular}{cccccc} \toprule
$\zeta$  & 0.4    & 0.45   & 0.5    & 0.55   & 0.6    \\\midrule\midrule
Acc (\%) & 69.75  & 73.02  & 71.55  & 68.80  & 67.78  \\\midrule
F-score  & 0.7270 & 0.7466 & 0.7339 & 0.7165 & 0.7016 \\\bottomrule
\end{tabular}
\vspace{5pt}
\caption{Sensitivity analysis for hyperparameter zeta on CIFAR-10 with symmetric noise 80\%.}
\label{tab:hyp-sensi}
\end{table}

\subsection{Feature-dependent Label Noise}
We additionally conducted experiments with our FINE methods on the feature-dependent noise labels dataset (noise rates of 20\% and 40\% by following the experimental settings of CORES\,\cite{mirzasoleiman2020coresets})\,(\autoref{tab:fea-depnoi}). To compare our FINE to CORES$^2$.

\begin{table}[]
\centering
\begin{tabular}{cccc}\toprule
Dataset   & Noise     & 0.2   & 0.4   \\\midrule\midrule
CIFAR-10  & CORES$^2$ & 89.50 & 82.84 \\\midrule
CIFAR-10  & FINE      & 89.84 & 86.68 \\\midrule
CIFAR-100 & CORES$^2$ & 61.25 & 48.96 \\\midrule
CIFAR-100 & FINE      & 62.21 & 47.81 \\\bottomrule
\end{tabular}
\vspace{5pt}
\caption{Comparison of test accuracies on clean datasets under feature-based label noise.}
\label{tab:fea-depnoi}
\end{table}

\subsection{Filtering Time Analysis}
We compare the training times per one epoch of FINE with other filtering based methods, using a single Nvidia GeForce RTX 2080. We also report the computational time when the different number of data is used for eigen decomposition. We discover that there remain little difference as the number of instances is differently used for eigen decomposition. As \autoref{tab:filter} shows, the computational efficiency for FINE can be obtained without any degradation issues.

\begin{table}[h]
{\vspace{-10pt}\caption{Filtering Time Analysis on CIFAR-10 dataset}\label{tab:filter}
}\vspace{5pt}
\centering
{\scriptsize %small %footnotesize
\begin{tabular}{ccccc} \toprule
DivideMix\,\cite{Li2020DivideMix} & FINE & FINE using 1\% dataset & F-Dividemix & F-Dividemix using 1\% dataset\\ \midrule
2.2s & 20.1s & 1.1s & 40.2s & 2.1s \\  
% \multirow{2}{*}{Method} & \multirow{2}{*}{SCE} & \multirow{2}{*}{ELR} & \multirow{2}{*}{DivideMix} & \multirow{2}{*}{TopoFilter} & \multirow{2}{*}{FINE} & FINE + & FINE +  \\ 
% & & & & & & ELR & DivideMix \\ \midrule
% Accuracy & 71.02 & 72.87 & 74.76 & 74.10 & 72.87 & 00.00 & \textbf{00.00} \\ 
\bottomrule
\end{tabular}\vspace{-10pt}}
\end{table}

\subsection{Other Real-world Noisy Datasets}
We conduct additional experiments with our FINE method on the mini Webvision dataset for comparison with state-of-the-art methods\,(\autoref{tab:my-othernoisy}). In the comparison with CRUST\,\cite{mirzasoleiman2020coresets}, which is the state-of-the-art sample selection method, our method achieved 75.24\% while CRUST achieved 72.40\% on the test dataset of (mini) Webvision. Looking at the results, the difference between Dividemix\,\cite{Li2020DivideMix} and F-Dividemix is marginal (\autoref{tab:my-othernoisy}). However, the reason for this is that we have to reduce the batch size due to the limitation of our current GPU, and we cannot do hyperparameter tuning (e.g. weight decay, learning rate). The final version will be able to run experiments by supplementing this issue, and it is expected that the performance will be improved.

\begin{table}[h]
\centering
\begin{tabular}{ccccc}
\toprule
\multirow{2}{*}{Method} & \multicolumn{2}{c}{Webvision} & \multicolumn{2}{c}{ImageNet} \\\cmidrule{2-5}
                        & top1          & top5          & top1          & top5         \\\midrule\midrule
CRUST\,\cite{mirzasoleiman2020coresets}                   & 72.40         & 89.56         & 67.36         & 87.84        \\\midrule
FINE                    & 75.24         & 90.28         & 70.08         & 89.71        \\\midrule
DivideMix\,\cite{Li2020DivideMix}               & 77.32         & 91.64         & 75.20         & 90.84        \\\midrule
F-DivideMix             & 77.28         & 91.44         & 75.20         & 91.28       \\\bottomrule
\end{tabular}
\vspace{5pt}
\caption{ Comparison with state-of-the-art methods trained on (mini) WebVision dataset. Numbers denote top-1 (top-5) accuracy (\%) on the WebVision validation set and the ImageNet ILSVRC12 validation set.}
\label{tab:my-othernoisy}
\end{table}

\end{document}